\newtheorem{theorem}{Theorem}
\newtheorem{lemma}{Lemma}
\newtheorem{assumption}{Assumption}
\newtheorem{observation}{Observation}
\begin{document}

\title{Hierarchical Federated Learning with Momentum Acceleration in Multi-Tier Networks}

\author{Zhengjie~Yang, 
Sen~Fu,
Wei~Bao, 
Dong~Yuan,
and~Albert~Y.~Zomaya
}
%\thanks{This paper was produced by the IEEE Publication Technology Group. They are in Piscataway, NJ.}% <-this % stops a space
%\thanks{Manuscript received April 19, 2021; revised August 16, 2021.}}

% The paper headers
%\markboth{Journal of \LaTeX\ Class Files,~Vol.~14, No.~8, August~2021}%
%{Shell \MakeLowercase{\textit{et al.}}: A Sample Article Using IEEEtran.cls for IEEE Journals}

%\IEEEpubid{0000--0000/00\$00.00~\copyright~2021 IEEE}
% Remember, if you use this you must call \IEEEpubidadjcol in the second
% column for its text to clear the IEEEpubid mark.

\maketitle

\begin{abstract}
In this paper, we propose Hierarchical Federated Learning with Momentum Acceleration (HierMo), a three-tier worker-edge-cloud federated learning  algorithm that applies momentum for training acceleration. Momentum is calculated and aggregated in the three tiers. 
We provide convergence analysis for HierMo, showing a convergence rate of $\mathcal{O}\left(\frac{1}{T}\right)$. In the analysis, we develop a new approach to characterize model aggregation, momentum aggregation, and their interactions.
Based on this result, {we prove that HierMo achieves a tighter convergence upper bound compared with HierFAVG without momentum}. We also propose HierOPT, which optimizes the aggregation periods (worker-edge and edge-cloud aggregation periods) to minimize the loss given a limited training time. 
%Since we need to consider model aggregation, momentum aggregation and their interactions, edge virtual update and cloud virtual update are proposed which are important intermediate steps to prove the overall convergence upper bound. 
%In order to allow HierMo to achieve the best performance under limited total training delay, we propose Hierarchical Optimizing Aggregation Periods (HierOPT) algorithm to derive the setting of edge and cloud aggregation periods. 
By conducting the experiment, we verify that HierMo outperforms existing mainstream benchmarks  under a wide range of settings. In addition, HierOPT can achieve a near-optimal performance when we test  HierMo under different aggregation periods.

\end{abstract}

\begin{IEEEkeywords}
  Federated learning; momentum; convergence analysis; edge computing
\end{IEEEkeywords}

\section{Introduction}

With the advancement of Industry 4.0, Internet of Things (IoT), and Artificial Intelligence,  machine learning applications such as image classification~\cite{lu2007survey}, automatic driving~\cite{naranjo2005power}, and automatic speech recognition~\cite{yu2016automatic} are rapidly developed. Since the machine learning dataset is distributed in individual users and in many situations they are not willing to share these sensitive raw data, Federated Learning (FL) emerges~\cite{mcmahan2017FedAvg}. It allows workers to participate in the model training without sharing their raw data. Typically, FL is  implemented in two tiers, where multiple  devices (workers) are distributed and connected to a remote aggregator (usually located in the cloud). A potential issue of the two-tier FL setting is its scalability.  The communication overhead between workers and the cloud is proportional to the number of workers, which causes problems when there are a large number of geo-distributed workers connecting to the remote cloud via the public Internet. 

% which is not feasible in many applications with a large number of devices. Such communication to the cloud usually takes place within wide area networks (WANs) and the long latency may be incurred due to data exchange, i.e., network congestion, Intra-AS and Inter-AS routing, secure and reliable data transmission, etc. [cites].

With the development of edge computing~\cite{lim2020federated}, a more effective solution is  
adding the edge tier between local workers and the remote cloud to address the scalability issue. Different from the typical two-tier architecture, in the three-tier hierarchical architecture as shown in Fig.~\ref{fig:architecture}, workers can first communicate with the edge node for edge-level aggregation, and then the edge nodes communicate with the remote cloud for cloud-level aggregation. 
Each edge node is closer to the workers and is usually connected with them in the same local/edge network, so that the communication cost is much cheaper compared with the two-tier case when the workers directly communicate with the cloud. In Fig.~\ref{fig:architecture}, we can see that much of the traffic through the public Internet (left subfigure) is restrained in the local edge networks (right subfigure) due to the existence of the edge nodes. 
Therefore,  the three-tier architecture is a good fit for larger-scale FL, and has  attracted attentions from researchers in recent years~\cite{abad2020hierarchical,christopher2020clustering,liu2021hierQSGD}. 

Although the three-tier FL can improve the communication efficiency \emph{in one training iteration} by replacing worker-cloud communication with worker-edge communication, there is also a need to accelerate its convergence performance to reduce the \emph{number of iterations}. One obstacle in the three-tier FL is that each edge node can only aggregate the updates of its local workers, and there is a discrepancy among edge nodes. The edge nodes are to be synchronized in the cloud-level aggregation. The two-level aggregation causes delayed synchronization, leading to less training efficiency. 
%Even if the hierarchical architecture decreases the communication delay in one round, it is not worthy if the increased total computation delay (caused by increased communication round) overweights the decreased total communication delay. At this circumstances, the total training delay will be still increased. 
Therefore, it is a strong motivation for us to develop a more efficient algorithm to accelerate the convergence, { reducing the number of training iterations} in the three-tier hierarchical architecture, and finally improve the overall training efficiency (considering both per-iteration cost and the number of iterations).

%There are two different ways to implement FL depending on where the aggregator locates. \tikztextcircle{1} In cloud-based FL[cites], the aggregator is located in the remote cloud and is connected with workers via the Internet. The machine learning model is aggregated in the cloud from all workers and this enables the cloud aggregator to access the complete dataset to reserve the robustness of the global model. Nevertheless, the communication between workers and the cloud aggregator is usually costly, i.e., network congestion, connection establishment, secure and reliable data transmission, etc. \tikztextcircle{2} In edge-based FL [cites], the aggregator is located in the edge server and is connected with edge workers via WIFI. Compared with cloud-based FL, the aggregator is closer to the edge workers and this makes the communication between workers and the edge aggregator more reliable and costless. Nevertheless, the edge-based FL can only accommodate limited number of workers, which may lead to partial data access and decreases the training performance, especially for non-i.i.d. (non independent and identically distributed) data distribution.

%One straightforward solution to address the limitations of either cloud-based FL or edge-based FL is to directly combine them together, named as client-edge-cloud hierarchical FL [cite]. However, existing works [cites] only employ traditional gradient descent in worker (client) level, which still makes the training inefficient.

\begin{figure}[tb!]
    \centering
    \includegraphics[width=\linewidth]{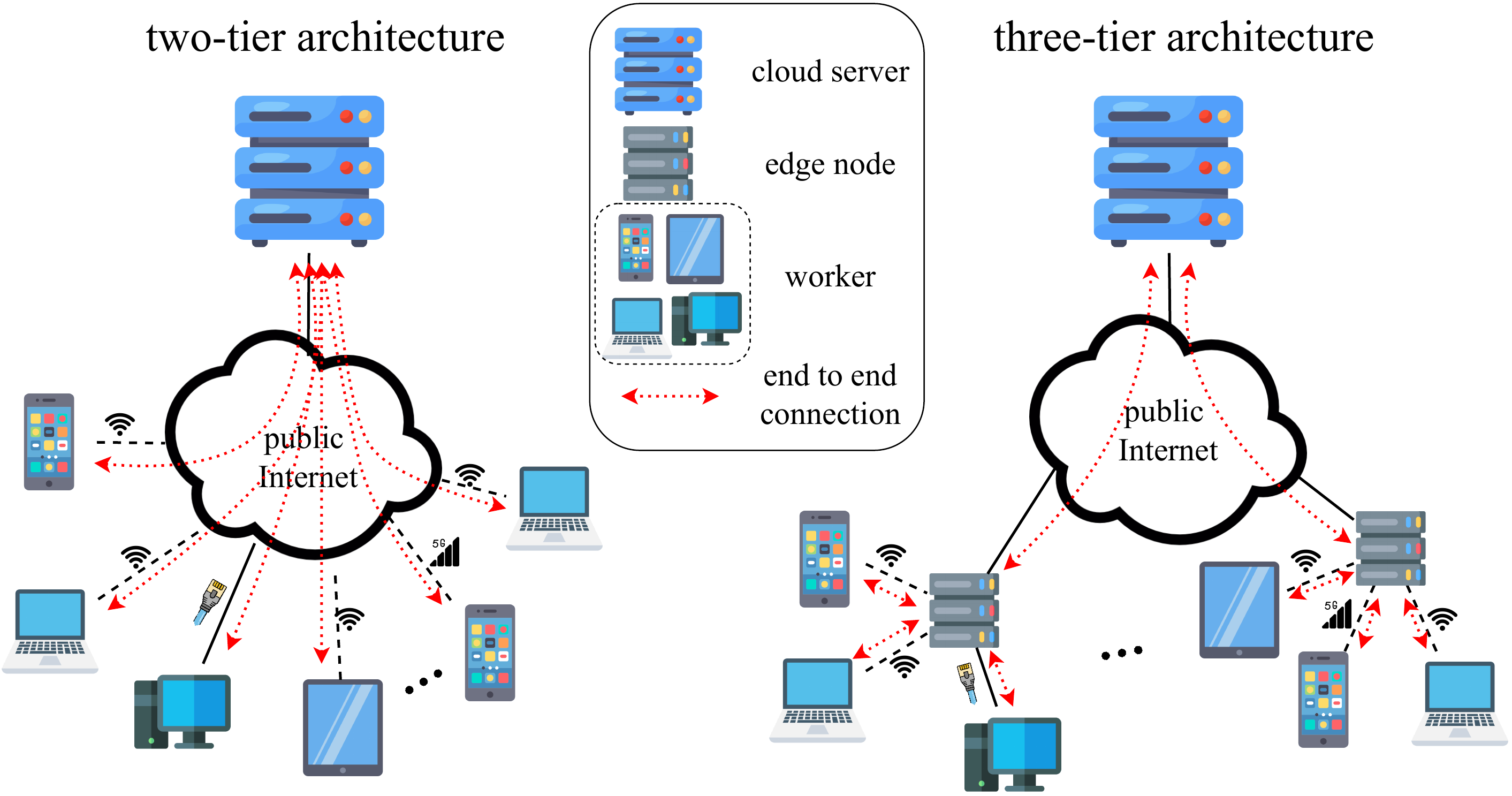}
    \caption{Two-tier architecture vs. three-tier architecture. 6 connections are through the public Internet in the two-tier architecture but only 2 connections are through the public Internet in the three-tier architecture. Communication burdens are restrained in the local/edge networks.}
    \label{fig:architecture}
\end{figure}

Momentum  is proved to be an effective mechanism to accelerate model training. Many studies have demonstrated its advantage in both centralized machine learning environment~\cite{yan2018unified,liu2018accelerating, vaswani2019fast, assran2020convergence} and two-tier FL environment~\cite{liu2020accelerating,yu2019linear,yang2021achieving,gao2021convergence}. %{but no previous works have analyzed it in three-tier architecture}. 
Apart from the conventional gradient descent step, the momentum method conducts additional momentum steps~\cite{ruder2016overview} to accelerate convergence.   % and these works demonstrate the efficiency of momentum acceleration in two-tier FL environment.
In this paper, we propose Hierarchical Federated Learning with Momentum Acceleration (HierMo), which leverages momentum to accelerate three-tier FL. %This is the first work to accelerate the hierarchical FL with momentum technique and .
HierMo is operated as follows: \tikztextcircle{1} In each iteration, 
each worker locally updates its own model and worker momentum;  \tikztextcircle{2} In every $\tau$ iterations ($\tau$ is called the worker-edge aggregation period), 
each edge node receives, averages, and sends back the  models and momentum values with its connected workers.\footnote{Each edge node also calculates another momentum for its own usage to further accelerate convergence. See Section~\ref{sec:alg} for the detailed algorithm.} \tikztextcircle{3} In every $\tau\cdot\pi$ iterations ($\pi$ is called the edge-cloud aggregation period), 
the cloud receives, averages, and sends back the models and momentum values with edge nodes. The edge nodes will then distribute them to connected workers. The above \tikztextcircle{1}--\tikztextcircle{3} steps are repeated for multiple rounds until the loss  is sufficiently small.

%(1.2) Each worker traces their gradients and momentum for $\tau$ iterations; 
%\tikztextcircle{2.1} Each edge receives and averages worker momenta from connected workers and sends back the averaged worker momentum value to the workers;  \tikztextcircle{2.2} Each edge receives local models from connected workers and performs edge model update with momentum acceleration; \tikztextcircle{2.3} Each edge distributes the updated edge model to each connected worker. The above \tikztextcircle{1}--\tikztextcircle{2} steps are repeated for $\pi$ times ($\pi$ is called cloud aggregation period throughout this paper). \tikztextcircle{3.1} The cloud receives and averages edge-averaged worker momenta from all edges and then distributes the cloud-averaged worker momentum value directly to all workers; \tikztextcircle{3.2} The cloud receives and averages edge models from each edge and then distributes the cloud model to all workers. The above \tikztextcircle{1}--\tikztextcircle{3} steps are repeated for multiple rounds until convergence.

{
Theoretically, we prove that HierMo is convergent and has an $\mathcal{O}\left(\frac{1}{T}\right)$ convergence rate for smooth non-convex problems for a given $T$ iterations. In this step, we need to address substantial new challenges, compared with two-tier FL. In particular, we develop a new method to characterize the \emph{multi-time cross-two-tier momentum interaction} and \emph{cross-three-tier momentum interaction}, which do not exist in the two-tier FL.} After we theoretically prove the convergence, we observe that the worker-edge and edge-cloud aggregation periods $\tau$ and $\pi$ are key design variables we aim to optimize. %More frequent edge-cloud aggregation (smaller $\pi$) leads to better global model convergence but  larger communication burden among edge nodes and the cloud. 
%More frequent worker-edge aggregation (smaller $\tau$) leads to faster local model update but larger communication burden among workers  and edge nodes. 
Based on the result of the convergence analysis, we propose HierOPT  algorithm, which can find a local optimal $(\tau, \pi)$ value pair. 

%To solve the trade-off between edge aggregation period $\tau$ and cloud aggregation period $\pi$, we propose a numerical method to find the optimal pair of $(\tau, \pi)$ to achieve the better performance under a limited training time duration.

{

In the experiment, we demonstrate the performance of HierMo compared with various mainstream hierarchical FL and momentum-based FL algorithms, including  hierarchical FL without momentum (HierFAVG~\cite{liu2020HierFAVG} and CFL~\cite{wang2021CFL}), two-tier FL with momentum (FedMom~\cite{huo2020FedMom}, SlowMo~\cite{Wang2020SlowMo}, FedNAG~\cite{yang2020FedNAG}, Mime~\cite{karimireddy2020mime}, DOMO~\cite{xu2021DOMO}, and FedADC~\cite{FedADC}), and two-tier FL without momentum (FedAvg~\cite{mcmahan2017FedAvg}). The experiment is implemented on different kinds of models (linear regression, logistic regress, CNN~\cite{cnnModel}, VGG16~\cite{vgg16code}, and ResNet18~\cite{tinyimagenet}) based on various real-world datasets (MNIST~\cite{mnist}, CIFAR-10~\cite{cifar10}, ImageNet~\cite{deng2009imagenet,tinyimagenet} for image classification, and UCI-HAR~\cite{anguita2013UCI-HAR} for human activity recognition). The experimental results illustrate that HierMo drastically outperforms benchmarks under a wide range of settings. We also verify HierOPT can output a near-optimal $(\tau, \pi)$ in the real-world settings. All these results match our expectations by the theoretical analysis.

The contributions of this paper are summarized as follows.

\begin{itemize}
    \item We have proved that HierMo is convergent and has an $\mathcal{O}\left(\frac{1}{T}\right)$ convergence rate for smooth non-convex problems for a given $T$ iterations under non-i.i.d. data. 
    \item We have proved that as long as learning step size $\eta$ is sufficiently small, HierMo (with momentum acceleration) achieves the tighter convergence upper bound than HierFAVG (without momentum acceleration).
    \item We have proposed the new HierOPT algorithm which can find a local optimal pair of $(\tau^*, \pi^*)$ when total training time is constrained.
%    \item Larger worker-edge aggregation period $\tau$ or edge-cloud aggregation period $\pi$ prolongs the convergence. Frequency edge aggregation (small $\tau$) is preferred when the product of $\tau$ and $\pi$ is fixed.
    \item HierMo is efficient and decreases the total training time by 21--70\% compared with the mainstream two-tier momentum-based algorithms and three-tier algorithms.
    \item HierOPT generates the near-optimal pair of $(\tau^*, \pi^*)$ when the total training time is constrained. HierOPT achieves the near-optimal accuracy with only 0.23--0.29\% (CNN on MNIST) and 0.04--0.16\% (CNN on CIFAR10) gap from the real-world optimum.
    
\end{itemize}

}

% In the experiment, we demonstrate the convergence performance of HierMo using different kinds of models including linear regression, logistic regression, CNN~\cite{cnnModel}, VGG16~\cite{vgg16code}, and ResNet18~\cite{tinyimagenet} based on various real-world datasets including MNIST~\cite{mnist}, CIFAR-10~\cite{cifar10}, and ImageNet~\cite{deng2009imagenet,tinyimagenet}. Compared with hierarchical FL algorithm without momentum (HierFAVG~\cite{liu2020HierFAVG}),  two-tier momentum-based FL algorithms (FedMom~\cite{huo2020FedMom}, SlowMo~\cite{Wang2020SlowMo}, FedNAG~\cite{yang2020FedNAG}, Mime~\cite{karimireddy2020mime}, and DOMO~\cite{xu2021DOMO}), and two-tier FL algorithm without momentum (FedAvg\cite{mcmahan2017FedAvg}), we show that HierMo drastically outperforms these benchmarks under a wide range of settings. We conduct more experiments and analyze the effects of worker-edge aggregation period $\tau$ and edge-cloud aggregation period $\pi$, and their joint effects.  %A trace-driven simulation is also added to emulate a digital representation of MEC environment to test the real-world total training delay (including communication and computation delays).
% We verify that the $(\tau, \pi)$ value pair derived by the HierOPT can achieve a near-optimal performance compared with other settings of $(\tau, \pi)$.  All these results match our expectations by the theoretical analysis.

The rest of the paper is organized as follows. In Section~\ref{sec:related_work}, we introduce related works. The HierMo algorithm design is described in Section~\ref{sec:alg}. In Section~\ref{sec:theory}, we provide theoretical results including the convergence analysis of HierMo {and the performance gain of momentum}. The algorithm to optimize the aggregation periods, i.e., HierOPT, is proposed in Section~\ref{sec:optmization}. Section~\ref{sec:exp} provides our experimental results and the conclusion is made in Section~\ref{sec:conclusion}.

\section{Related Work}\label{sec:related_work}
\subsection{Momentum in Machine Learning and Federated Learning}
Momentum %(also known as Polyak's momentum)
\cite{polyak1964some} is a method that helps accelerate gradient descent in the relevant direction by adding a fraction $\gamma$ of the difference between past and current model vectors. In the classical centralized setting, the update rule of the momentum (Polyak's momentum) is as follows:
\begin{align}
        \mathbf{m}(t) &= \gamma\mathbf{m}(t-1) - \eta\nabla F(\mathbf{w}(t-1)),\\
        \mathbf{w}(t) &= \mathbf{w}(t-1) + \mathbf{m}(t),\label{HB.2}
\end{align}
with $\gamma\in[0,1), t=1,2,\ldots, \mathbf{m}(0)=\mathbf{0}$, where $\gamma$ is momentum factor (weight of momentum), $t$ is update iteration, $\mathbf{m}(t)$ is momentum term at iteration $t$, and $\mathbf{w}(t)$ is model parameter at iteration $t$. Through this method, the momentum term increases for dimensions whose gradients point in the same directions and reduces updates for dimensions whose gradients change directions. As a result, momentum gains faster convergence and reduces oscillation \cite{ruder2016overview,goh2017why}.

Momentum has been investigated in both centralized machine learning and FL. In the centralized environment, another form of momentum called Nesterov Accelerate Gradient (NAG)  \cite{NAG,ruder2016overview} is proposed. NAG\footnote{{There are two mainstream equivalent representations of NAG. In this paper, we employ the representation in \cite{bubeck2014convex,yan2018unified}. }} calculates the gradient based on an approximation of the next position of the parameters, i.e., $\nabla F(\mathbf{w}(t-1)+\gamma\mathbf{m}(t-1))$, instead of  $\nabla F(\mathbf{w}(t-1))$ in Polyak's momentum, leading to better convergence performance. In \cite{vaswani2019fast}, authors study the utilization of momentum in over-parameterized models. \cite{yan2018unified} provides a unified convergence analysis for both Polyak's  momentum and NAG. \cite{assran2020convergence} studies  NAG in stochastic settings. 

All the above works show the advantages of momentum to accelerate the centralized training and it attracts researchers' attention to apply momentum in FL environment. Depending on where the momentum is adopted, we can categorize them into the worker momentum, aggregator momentum, and combination momentum.  For the worker momentum (e.g., FedNAG~\cite{yang2020FedNAG} and Mime~\cite{karimireddy2020mime}), momentum acceleration is adopted at workers in each local iteration. However, it is vulnerable to data heterogeneity among workers, which may harm the long-run  performance.
%For FedNAG, it accelerates the gradient in each local iteration but is vulnerable to data heterogeneity. When workers' gradients diverge, such momentum may enlarge the divergence and harm the convergence performance. For Mime, the momentum maybe stale, since it is only updated when global aggregation occurs (every $\tau$ local iterations). 
For the aggregator momentum (e.g., FedMom~\cite{huo2020FedMom} and SlowMo~\cite{Wang2020SlowMo}), the momentum acceleration is adopted only at the aggregator based on the global model and it shares the same property of acceleration as in centralized setting and dampens oscillations~\cite{ruder2016overview}. Nevertheless, it is conducted less frequently (every $\tau$ iterations\footnote{$\tau$ the is aggregation period}) compared with worker momentum (every iteration), and the performance gain may not be obvious especially when $\tau$ is large. To address the above limitations, works in \cite{xu2021DOMO,yang2022fastslowmo,FedADC} combine the worker and aggregator momenta and they show a better convergence performance than only using either worker or aggregator momentum. The above forms of momentum are only adopted and analyzed in the two-tier FL and we focus on the three-tier scenarios in this paper.
%Nevertheless, this may still cause problems because these two momenta may disagree and cancel with each other in the short run, harming the convergence performance. %No existing works have studied this problem and it can be left for future work.

%\subsection{Two-Tier Federated Learning}
%The typical two-tier FL has been widely studied since it was first proposed in 2016 [cite]. There are many techniques that have been adopted in two-tier FL such as  proximity~\cite{dinh2020federated}, quantization~\cite{reisizadeh2020fedpaq}, and differential privacy~\cite{wei2020federated}. %Proximity aims to add an additional term in training update as a criterion to address the issue brought by data heterogeneity in FL. Quantization aims to reduce the size of the model so as to decrease the computation cost and communication overhead. Differential privacy aims to add randomized noise in the model to further protect the individual sensitive data which is a very important character in FL. 
%All of above sub-topics are not our focus in this paper. We focus on momentum acceleration instead.
\begin{table}[b!t]
\centering
\caption{Key Notations}
\label{tab:notation}
\begin{tabular}{p{0.05\columnwidth}p{0.85\columnwidth}}
\hline
$\eta$  & worker model learning rate \\
$\tau$ & worker-edge aggregation period \\
$\pi$ & edge-cloud aggregation period \\
$\gamma$ & worker momentum factor \\
$\gamma_a$ & edge momentum factor \\
$T$  & number of total local (worker) iterations indexed by $t$ \\
$K$ & number of total edge aggregations indexed by $k$ \\
$P$ & number of total global (cloud) aggregations indexed by $p$ \\
%$t_1$ & the index of the client local update step since the last edge aggregation, $0\leq t_1 < \tau_1$ \\

$L$ & number of edge nodes indexed by $\ell$ \\
$C_\ell$ & number of workers under edge node $\ell$ \\
$N$ & number of workers in the system indexed by $\{i,\ell\}$\\
$\boldsymbol{x}^t_{i,\ell}$ & worker model parameter in worker $\{i,\ell\}$ at iteration $t$ \\
$\boldsymbol{y}^t_{i,\ell}$ & worker momentum parameter in worker $\{i,\ell\}$ at iteration $t$ \\
{$\boldsymbol{y}^t_{\ell-}$} &  {aggregated} worker momentum in edge node $\ell$ at iteration $t$ \\
{$\boldsymbol{x}^t_{\ell-}$} &  {aggregated} worker model in edge node $\ell$ at iteration $t$ \\
{$\boldsymbol{y}^t_{\ell+}$} & {updated} edge momentum in edge node $\ell$ at iteration $t$ \\
{$\boldsymbol{x}^t_{\ell+}$} & {updated} edge model in edge node $\ell$ at iteration $t$ \\
$\boldsymbol{y}^t$ &  worker momentum cloud aggregation in the cloud at iteration $t$ \\
$\boldsymbol{x}^t$ & cloud model in the cloud at iteration $t$ \\
\hline
\end{tabular}
\end{table}

\subsection{Three-Tier Hierarchical Federated Learning}
Three-tier FL has attracted more attention in recent years. Without considering momentum, studies have demonstrated the convergence performance in three-tier FL \cite{liu2020HierFAVG,wang2020H-SGD,castiglia2020multiSGD,wang2021CFL}. The communication overhead can be further optimized in \cite{SHARE}. 
The convergence analysis extended from two-tier to three-tier FL is not straightforward. Different from two-tier FL where the global aggregation is executed every $\tau$ local iterations, in three-tier FL, each worker's local model will be first aggregated by the connected edge node every $\tau$ local iterations, and will then be aggregated by the cloud in another level of every $\pi$ edge aggregations. Existing two-tier methods can only bound the two-tier effects, but not the three-tier effects. { Substantial new challenges are encountered in this paper. When momentum is leveraged in the three-tier scenario, it additionally introduces
 \emph{multi-time cross-two-tier momentum interaction} and \emph{cross-three-tier momentum interaction}. This is completely different from the two-tier scenario.  Existing two-tier  analyses cannot deal with the above two new terms. They can only characterize multi-time inner-tier momentum acceleration and one-time cross-two-tier momentum interaction. We devise a two-level virtual update (edge and cloud) method, which is able to bound the aforementioned new terms % in the three-tier scenario can be bounded through our proposed new proof, 
so that the convergence of HierMo still holds.

}

\section{{HierMo Problem Formulation}}\label{sec:alg}

\subsection{Overview}
We consider a  three-tier hierarchical FL system consisting of a cloud server, $L$ edge nodes, and $N$ workers. Each edge node $\ell$ serves $C_\ell$  workers, and the total number of workers is $N=\sum_{\ell=1}^{L}C_\ell$. Worker $\{i,\ell\}$ denotes the $i$th worker served by edge node $\ell$, where $i=1,2,\ldots, C_\ell$. It contains its local dataset with the number of data samples denoted by $D_{i,\ell}$. The total training dataset in the cluster of workers served by edge node $\ell$ is $D_\ell\triangleq\sum_{i=1}^{C_\ell}D_{i,\ell}$ and the total training dataset  $D\triangleq\sum_{\ell=1}^{L}D_\ell=\sum_{\ell=1}^{L}\sum_{i=1}^{C_\ell}D_{i,\ell}$. The target of three-tier hierarchical FL is to find the stationary point $\boldsymbol{x}^*$ that minimizes the global loss function $F(\boldsymbol{x})$ that is the weighted average of all workers' loss functions. %where each edge loss function is the weighted average of local loss functions on a specific edge. 
The problem can be formulated as follows:
%\begin{subequations}
\begin{align}\label{problem}
\min_{\boldsymbol{x}\in\mathbb{R}^d} F(\boldsymbol{x}) &\triangleq \frac{1}{D}\sum_{\ell=1}^{L}\sum_{i=1}^{C_\ell}D_{i,\ell}F_{i,\ell}(\boldsymbol{x})\\
&=\sum_{\ell=1}^{L}\frac{D_\ell}{D}\sum_{i=1}^{C_\ell}\frac{D_{i,\ell}}{D_\ell}F_{i,\ell}(\boldsymbol{x})\label{eq:min_2}\\
&\triangleq\sum_{\ell=1}^{L}\frac{D_\ell}{D}F_{\ell}(\boldsymbol{x})\label{eq:min_3},
\end{align}

\begin{algorithm}[b!t]\small
\caption{HierMo algorithm.}
\label{alg:HierMo}
\textbf{Input}: $\tau,\pi$, $T=K\tau=P\tau\pi$, $\eta$, %$\eta_g$, 
$\gamma$, $\gamma_a$\\
\textbf{Output}: Final cloud (global) model parameter  $\boldsymbol{x}_{(a)}^{T}$\\
\begin{algorithmic}[1] %[1] enables line numbers
\STATE For each worker, initialize: $\boldsymbol{x}_{i,\ell}^0$ as same value for all $i,\ell$, and $\boldsymbol{y}_{i,\ell}^0=\boldsymbol{x}_{i,\ell}^0$
\STATE For each edge node, initialize: $\boldsymbol{x}_{\ell(a)}^0=\boldsymbol{x}_{i,\ell}^0$, and $\boldsymbol{y}_{\ell(a)}^0=\boldsymbol{x}_{\ell(a)}^0$ \\
%\STATE For the aggregator, initialize: $\boldsymbol{x}_g^0=\boldsymbol{x}_i^0$, and $\boldsymbol{y}_g^0=\boldsymbol{x}_g^0$
\FOR{$t=1,2,\ldots,T$}
\STATE For each worker $i=1,2,\ldots,N$ in parallel,
\STATE $\boldsymbol{y}_{i,\ell}^t \gets \boldsymbol{x}_{i,\ell}^{t-1} - \eta\nabla F_{i,\ell}(\boldsymbol{x}_{i,\ell}^{t-1})$\label{eq:yit}\\
// \texttt{\footnotesize Worker momentum update}
\STATE $\boldsymbol{x}_{i,\ell}^t \gets \boldsymbol{y}_{i,\ell}^t + \gamma(\boldsymbol{y}_{i,\ell}^t - \boldsymbol{y}_{i,\ell}^{t-1})$\label{eq:xit}\\
// \texttt{\footnotesize Worker model update}
%$\boldsymbol{y}_i^t \gets \gamma\boldsymbol{y}_i^{t-1} - \eta\nabla F_i(\boldsymbol{x}_i^{t-1})$\\ $\boldsymbol{x}_i^t
%    \gets \boldsymbol{x}_{i}^{t-1} + \gamma\boldsymbol{y}_{i}^t - \eta\nabla F_i(\boldsymbol{x}_{i}^{t-1})$
%$\boldsymbol{y}_i^t \gets \gamma\boldsymbol{y}_i^{t-1} - \eta\nabla F_i(\boldsymbol{x}_i^{t-1})$\\ $\boldsymbol{x}_i^t
%    \gets \boldsymbol{x}_{i}^{t-1} + \gamma\boldsymbol{y}_{i}^t - \eta\nabla F_i(\boldsymbol{x}_{i}^{t-1})$

\IF {$t==k\tau$ where $k = 1,\ldots, K$}
\STATE For each edge node $\ell=1,2,\ldots,L$ in parallel, 
\STATE $\boldsymbol{y}_{\ell-}^{k\tau} \gets \sum_{i=1}^{C_\ell} \frac{D_{i,\ell}}{D_\ell} \boldsymbol{y}_{i,\ell}^{k\tau}$\label{eq:yktau}\\
// \texttt{\footnotesize Worker momentum edge aggregation}
\STATE $ \boldsymbol{y}_{\ell+}^{k\tau} \gets \boldsymbol{x}_{\ell+}^{(k-1)\tau} - %\eta_g 
\sum_{i=1}^{C_\ell} \frac{D_{i,\ell}}{D_\ell}\left(\boldsymbol{x}_{\ell+}^{(k-1)\tau}-\boldsymbol{x}_{i,\ell}^{k\tau}\right)$\label{eq:ygktau}\\
// \texttt{\footnotesize Edge momentum update}
\STATE $\boldsymbol{x}_{\ell+}^{k\tau} \gets \boldsymbol{y}_{\ell+}^{k\tau} +\gamma_a\left(\boldsymbol{y}_{\ell+}^{k\tau}-\boldsymbol{y}_{\ell+}^{(k-1)\tau}\right)$\label{eq:xgktau}\\
// \texttt{\footnotesize Edge  model update}
\STATE Set $\boldsymbol{y}_{i,\ell}^{k\tau} \gets \boldsymbol{y}_{\ell-}^{k\tau}$ for all worker $i\in C_\ell$\label{eq:yiktau_set}\\
// \texttt{\footnotesize Edge aggregated worker momentum}\\
\texttt{\footnotesize \quad re-distribution to workers}
\STATE Set $\boldsymbol{x}_{i,\ell}^{k\tau} \gets \boldsymbol{x}_{\ell+}^{k\tau}$ for all worker $i\in C_\ell$\label{eq:xgktau_set}\\
// \texttt{\footnotesize Edge model re-distribution to workers}
\ENDIF
\IF{$t==p\tau\pi$ where $p=1,2,\ldots,P$}
\STATE Aggregate $\boldsymbol{y}^{p\tau\pi} \gets \sum_{\ell=1}^{L} \frac{D_\ell}{D}\boldsymbol{y}_{\ell-}^{p\tau\pi}$\label{eq:yptaupi}\\
// \texttt{\footnotesize Worker momentum cloud aggregation}
\STATE Aggregate $\boldsymbol{x}^{p\tau\pi} \gets \sum_{\ell=1}^{L} \frac{D_\ell}{D}\boldsymbol{x}_{\ell+}^{p\tau\pi}$\label{eq:xaptaupi}\\
//  \texttt{\footnotesize Edge model cloud aggregation}
\STATE Set $\boldsymbol{y}_{\ell-}^{p\tau\pi} \gets \boldsymbol{y}^{p\tau\pi}$ for all edge node $l\in L$\label{eq:yptaupi_set_edge}\\
// \texttt{\footnotesize Cloud aggregated worker momentum}\\
\texttt{\footnotesize \quad re-distribution to edge nodes}
\STATE Set $\boldsymbol{x}_{\ell+}^{p\tau\pi} \gets \boldsymbol{x}^{p\tau\pi}$ for all edge node $l\in L$\label{eq:xaptaupi_set_edge}\\
// \texttt{\footnotesize Cloud model re-distribution to edge nodes}
\STATE Set $\boldsymbol{y}_{i,\ell}^{p\tau\pi} \gets \boldsymbol{y}_{\ell-}^{p\tau\pi}$ for all worker $i\in C_\ell,l\in L$\label{eq:yptaupi_set}\\
// \texttt{\footnotesize Cloud aggregated worker momentum}\\ \texttt{\footnotesize \quad re-distribution from edge nodes to workers}
\STATE Set $\boldsymbol{x}_{i,\ell}^{p\tau\pi} \gets \boldsymbol{x}_{\ell+}^{p\tau\pi}$ for all worker $i\in C_\ell,l\in L$\label{eq:xaptaupi_set}\\
// \texttt{\footnotesize Cloud model re-distribution}\\
\texttt{\footnotesize \quad from edge nodes to workers}
%\STATE set $\boldsymbol{y}_{i,\ell}^{p\tau\pi} \gets \boldsymbol{y}^{p\tau\pi}$ for all worker $i\in C_\ell,l\in L$\label{eq:yptaupi_set}
%\STATE set $\boldsymbol{x}_{i,\ell}^{p\tau\pi} \gets \boldsymbol{x}_{(a)}^{p\tau\pi}$ for all worker $i\in C_\ell,l\in L$\label{eq:xaptaupi_set}
\ENDIF
\ENDFOR
\end{algorithmic}
\end{algorithm}

%\end{subequations}
\noindent where $d$ is the dimension of $\boldsymbol{x}$,  
$F(\boldsymbol{x})$ is the global loss function at the cloud server, and $F_{i,\ell}(\boldsymbol{x})$ is the local loss function at worker $\{i,\ell\}$. {\eqref{eq:min_2} is the mathematical transformation from \eqref{problem} by adding $D_\ell$}. We also define the edge loss function at edge node $\ell$ as 
$F_\ell(\boldsymbol{x}) \triangleq \sum_{i=1}^{C_\ell}\frac{D_{i,\ell}}{D_\ell}F_{i,\ell}(\boldsymbol{x})$, which is the weighted average of edge node $\ell$'s connected workers' local loss functions $F_{i,\ell}(\boldsymbol{x})$. Therefore, {by replacing $\sum_{i=1}^{C_\ell}\frac{D_{i,\ell}}{D_\ell}F_{i,\ell}(\boldsymbol{x})$ with $F_\ell(\boldsymbol{x})$ in \eqref{eq:min_2}, we can directly derive \eqref{eq:min_3}}, demonstrating that the global loss function is the weighted average of all edge loss functions as $F(\boldsymbol{x}) \triangleq \sum_{\ell=1}^{L}\frac{D_\ell}{D} F_\ell(\boldsymbol{x})$. We assume the problem is within the scope of cross-siloed federated learning~\cite{kairouz2021advances} where all workers are required to participate in the training with siloed data. Each worker represents a repository of data, and data are sensitive and non-i.i.d.. The key notations are summarized in Table~\ref{tab:notation}.

\subsection{Worker Momentum and Edge Momentum}

We notice that there are two types of momentum in two-tier FL: One type (i.e., worker momentum) is calculated at each worker and is aggregated; The other type (i.e., aggregator momentum) is calculated at the aggregator. Since both types can accelerate the convergence, we adopt both of them in our work. 
%momentum can accelerate the convergence on multi levels [DOMO], we employ two kinds of momentum that will be applied on workers and edge nodes respectively. 
In the three-tier case in our paper, the worker momentum is individually computed in each worker and aggregated in the edge node (worker momentum edge aggregation) and the cloud (worker momentum cloud aggregation). We still call it \emph{worker momentum} throughout the paper. For the  aggregator momentum, we apply it at each edge node. Each edge node computes its own momentum and it is not shared with the workers or the cloud. We call it \emph{edge momentum} throughout this paper. 

\subsection{HierMo Algorithm}
In Algorithm~\ref{alg:HierMo}, we propose a momentum-based three-tier hierarchical FL algorithm, named as HierMo, which applies both worker momentum and edge momentum. HierMo aims to find the final cloud model $\boldsymbol{x}_{(a)}^T$ to solve the formula \eqref{problem}. It conducts $T$ local iterations, $K$ edge aggregations, and $P$ cloud aggregations, where $T=K\tau=P\tau\pi$, $\tau$ is the worker-edge aggregation period, and $\pi$ is the edge-cloud aggregation period.

\subsubsection{Worker update} 
In each local iteration $t$, each worker~$\{i,\ell\}$ computes its worker update, which includes two things: \tikztextcircle{1} worker momentum update $\boldsymbol{y}_{i,\ell}^t$ (Line~\ref{eq:yit}) and \tikztextcircle{2} worker model update $\boldsymbol{x}_{i,\ell}^t$ (Line~\ref{eq:xit}). \tikztextcircle{1} and \tikztextcircle{2} follow the Nesterov Accelerated Gradient (NAG)~\cite{NAG} momentum update and are conducted every iteration. Through this way, each worker can utilize its own worker momentum acceleration.
%\begin{align}
%\boldsymbol{y}_{i,\ell}^t &\gets \boldsymbol{x}_{i,\ell}^{t-1} - \eta\nabla F_{i,\ell}(\boldsymbol{x}_{i,\ell}^{t-1}),\\
%\boldsymbol{x}_{i,\ell}^t &\gets \boldsymbol{y}_{i,\ell}^t + \gamma(\boldsymbol{y}_{i,\ell}^t - \boldsymbol{y}_{i,\ell}^{t-1}),
%\end{align}
%where $\boldsymbol{y}_{i,\ell}^t$ and $\boldsymbol{x}_{i,\ell}^t$ are worker momentum and worker model in worker $i,\ell$ at iteration $t$ respectively.

\subsubsection{Edge update}
When $t=k\tau, k=1,2,\ldots,K$, each edge node $\ell$ receives workers' momenta and models in $C_\ell$ and performs edge update, which includes two operations: \tikztextcircle{1} Worker momentum edge aggregation $\boldsymbol{y}_{\ell-}^{k\tau}$ (Line~\ref{eq:yktau}) with re-distribution (Line~\ref{eq:yiktau_set}). Through this way, some straggler workers with high data-heterogeneity whose local momenta $\boldsymbol{y}_{i,\ell}^{k\tau}$ pointing to an inappropriate direction can be refined from $\boldsymbol{y}_{\ell-}^{k\tau}$. \tikztextcircle{2} Edge momentum $\boldsymbol{y}_{\ell+}^{k\tau}$ and model $\boldsymbol{x}_{\ell+}^{k\tau}$ update (Lines \ref{eq:ygktau}--\ref{eq:xgktau}) with model re-distribution (Line \ref{eq:xgktau_set}). %Edge momentum is only computed and stored in each edge node and is not known by workers or the cloud. 
Since the computation of edge momentum and model update is based on the edge model, it is equivalent to perform it in edge setting involving all workers' dataset under edge node $\ell$ ($D_\ell=\sum_{i=1}^{C_\ell}D_{i,\ell}$). By doing so, it dampens oscillations~\cite{ruder2016overview} within the edge node. { Please note that \tikztextcircle{1} and \tikztextcircle{2} are two operations on the same edge node, so that we use subscript ``$-$'' and ``$+$'' to label the momentum/model  right after operations \tikztextcircle{1} and \tikztextcircle{2} respectively.} Finally, both \tikztextcircle{1} and \tikztextcircle{2} are conducted in each edge node every $\tau$ iterations.

%Please note that ``worker momentum edge aggregation'' and ``edge momentum'' are two different things. Worker momentum edge aggregation means the edge node aggregates the worker momenta uploaded from the workers, while the edge momentum means that the edge node calculates its own edge momentum. 

\subsubsection{Cloud update}
When $t=p\tau\pi, p=1,2,\ldots,P$, the cloud receives edge aggregated worker momentum $\boldsymbol{y}_{\ell-}^{p\tau\pi}$ and edge model $\boldsymbol{x}_{\ell+}^{p\tau\pi}$ for all $\ell \in L$ and performs cloud update, which includes two things: \tikztextcircle{1} Worker momentum cloud aggregation $\boldsymbol{y}^{p\tau\pi}$ (Line~\ref{eq:yptaupi}) and re-distribution (Lines~\ref{eq:yptaupi_set_edge} and \ref{eq:yptaupi_set}). 
Through this way,  all edge nodes and workers receive the cloud aggregated worker momentum and mitigate the disadvantage caused by non-i.i.d. data heterogeneity. \tikztextcircle{2} Edge model cloud aggregation $\boldsymbol{x}^{p\tau\pi}$ (Line \ref{eq:xaptaupi}) and cloud model re-distribution (Lines~\ref{eq:xaptaupi_set_edge} and \ref{eq:xaptaupi_set}).
Please note that the cloud will re-distribute the momentum and model to all edge nodes and all edge nodes will then distribute them to all workers when $t$ is a multiple of $\tau\pi$.  

%This follows the typical FL algorithm (FedAvg[cite]) that performs a periodical model aggregation and re-distribution so as to allow multiple workers collaboratively train a global model. Both \tikztextcircle{1} and \tikztextcircle{2} are conducted in the cloud every $\tau\cdot\pi$ iterations.

%\subsubsection{Value transmissions between three tiers}
% {
% Since HierMo aggregates and re-distributes both model parameters and the momentum parameters, it doubles the communication burden in one edge/cloud aggregation compared with algorithms that only transmit model parameters. However, the benefit brought by HierMo outweighs the additional burden because HierMo reduces the number of iterations. Note that fewer edge/cloud iterations will also lead to less computing so that HierMo can bring even more benefits in terms of computing. Our experiments in Fig.~\ref{fig:compare_delay} illustrates the efficiency of HierMo.

% }

\section{Convergence Analysis of HierMo}\label{sec:theory}
In this section, we present the theoretical analysis of HierMo. We first provide preliminaries. Then, we introduce the concept of virtual update which is a significant intermediate step to conduct convergence analysis. Afterward, we show the convergence guarantee of HierMo. Finally, we compare the convergence upper bound of HierMo and HierFAVG to analyze the performance gain of momentum.%Finally, an optimization problem is conducted to tune worker-edge aggregation period $\tau$ and edge-cloud aggregation period $\pi$ under resource-constrained multi-tier computing environment.

\subsection{Preliminaries}
%\subsubsection{Property of $F(\boldsymbol{x})$.}
We assume $F_{i,\ell}(\cdot)$ satisfies the following standard conditions that are commonly adopted in the literature~\cite{liu2020accelerating,yang2020FedNAG,wang2019adaptive}.
%\begin{assumption}\label{assum:1}
%$F_{i}(\boldsymbol{x})$ is convex.
%\end{assumption}
\begin{assumption}\label{assum:2}
 $F_{i,\ell}(\boldsymbol{x})$ is $\rho$-Lipschitz, i.e., $\Vert F_{i,\ell}(\boldsymbol{x}_{1})-F_{i,\ell}(\boldsymbol{x}_{2})\Vert \leq \rho\Vert\boldsymbol{x}_{1}-\boldsymbol{x}_{2}\Vert$
for any $\boldsymbol{x}_{1}, \boldsymbol{x}_{2}, i, \ell$.
\end{assumption}
\begin{assumption}\label{assum:3}
$F_{i,\ell}(\boldsymbol{x})$ is $\beta$-smooth, i.e., $\Vert\nabla F_{i,\ell}(\boldsymbol{x}_{1})-\nabla F_{i,\ell}(\boldsymbol{x}_{2})\Vert\leq\beta\Vert \boldsymbol{x}_{1}-$
$\boldsymbol{x}_{2} \Vert$ for any $\boldsymbol{x}_{1}, \boldsymbol{x}_{2}, i, \ell$.
\end{assumption}
\begin{assumption}\label{def:delta}
(Bounded diversity) The variance of local gradient to edge gradient is bounded. i.e., $\Vert \nabla F_{i,\ell}(\boldsymbol{x}) - \nabla F_\ell(\boldsymbol{x}) \Vert \leq \delta_{i,\ell}$ for $\forall i$, $\forall \ell$,  and $\forall \boldsymbol{x}$. We also define $\delta_\ell$ as the weighted average of $\delta_{i,\ell}$ and $\delta$ as the weighted average of $\delta_{\ell}$, i.e., $\delta_\ell \triangleq \sum_{i\in C_\ell} \frac{D_{i,\ell}}{D_\ell}\delta_{i,\ell}$ and $\delta \triangleq \sum_{\ell \in L} \frac{D_{\ell}}{D}\delta_{\ell}$.
\end{assumption}

According to Assumptions~\ref{assum:2} and \ref{assum:3}, and applying the Triangle Inequality to $F_{i,\ell}(\boldsymbol{x})$, it is straightforward to show that $F_\ell(\boldsymbol{x})$ is $\rho$-Lipschitz and $\beta$-smooth. Applying the Triangle Inequality to $F_\ell(\boldsymbol{x})$, we can also derive that $F(\boldsymbol{x})$ is $\rho$-Lipschitz and $\beta$-smooth. { Assumptions~\ref{assum:2} and \ref{assum:3} indicate that the function and the gradient of the function are not changing too fast. Assumption \ref{def:delta} indicates that the data distributed to all workers are heterogeneous and non-i.i.d..} $\delta_{i,\ell}$ is used to quantify the level of gradient divergence and is different at different workers.

%For Assumption~\ref{def:delta}, $\delta_{i,\ell}$ is used to quantify the level of gradient divergence and is different at different workers, implying that the data samples distributed to all workers are heterogeneous and non-i.i.d.. %The larger value of $\delta_{i,\ell}$ reflects the more gradient divergence among workers while larger 

\subsection{Virtual Update}

\begin{figure}[tb!]
    \centering
    \includegraphics[width=\linewidth]{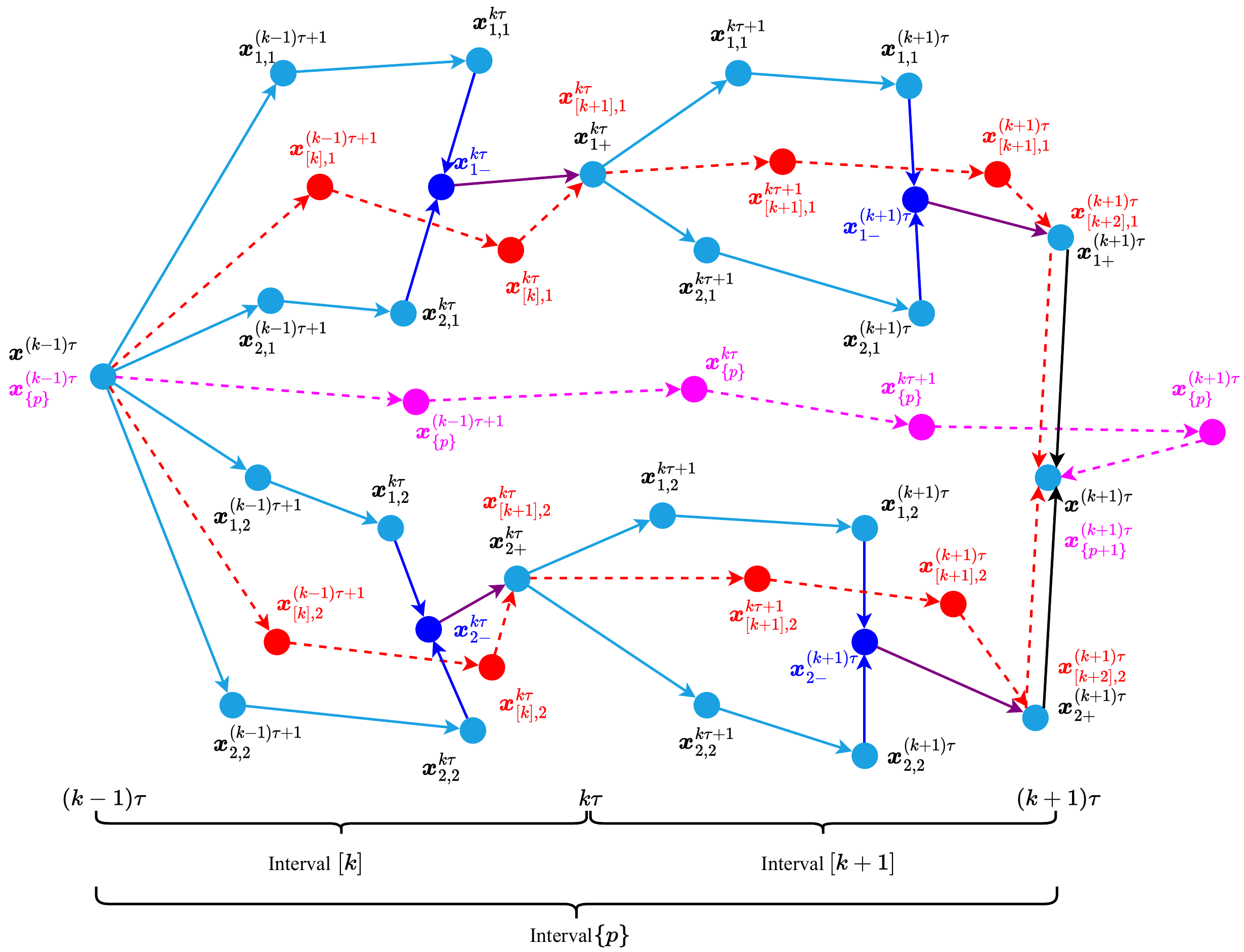}
    \caption{Illustration of $\boldsymbol{x}_{\ell-}^t$, $\boldsymbol{x}_{\ell+}^t$, $\boldsymbol{x}_{[k],\ell}^t$, $\boldsymbol{x}_{\{p\}}^t$, and $\boldsymbol{x}_{}^t$, when $N=4, \tau=2, \pi=2$ with each edge node serving 2 workers. Cyan lines show worker model update. Blue lines show worker model edge aggregation. Purple lines show edge model accelerated by edge momentum. Black lines show edge model cloud aggregation.
    Red dashed lines show edge model virtual update. Magenta dashed lines show cloud model virtual update.}
    \label{fig:xt}
\end{figure}

In order to index the edge aggregation and cloud aggregation, we divide the total $T$ local iterations into $K$ edge intervals and $P$ cloud intervals. $T=K\tau=P\tau\pi$. %where $\tau$ is the edge aggregation period and $\pi$ is the cloud aggregation period. 
We use $[k]$ to denote the edge interval $t\in [(k-1)\tau, k\tau]$ for $k=1,2,\ldots,K$, and $\{p\}$ to denote the cloud interval $t\in [(p-1)\tau\pi, p\tau\pi]$ for $p=1,2,\ldots,P$. Please note that the edge aggregation occurs at the end of each edge interval and the cloud aggregation occurs at the end of each cloud interval. Therefore, each edge interval $[k]$ contains $\tau$ local iterations with one edge aggregation, and each cloud interval $\{p\}$ contains $\pi$ edge intervals with one cloud aggregation, i.e., $\{p\}=\cup_k[k]$ for $k=(p-1)\pi+1,(p-1)\pi+2,\ldots,p\pi$.

At the beginning of edge interval $[k]$ when $t=(k-1)\tau$, we set \emph{edge virtual update}
\begin{align}
    \boldsymbol{y}_{[k],\ell}^{(k-1)\tau} &\gets \boldsymbol{y}_{\ell-}^{(k-1)\tau}\label{eq:yk(k-1)tau},\\
    \boldsymbol{x}_{[k],\ell}^{(k-1)\tau} &\gets \boldsymbol{x}_{\ell+}^{(k-1)\tau}\label{eq:xk(k-1)tau},
\end{align}
for each edge node $\ell$, where $\boldsymbol{y}_{[k],\ell}^{(k-1)\tau}$ and $\boldsymbol{x}_{[k],\ell}^{(k-1)\tau}$ are set as the virtual aggregated values right after the edge aggregation occurs. Then, we further conduct \emph{edge virtual update} as if model and momentum updates are conducted in the edge node. When $t\in ((k-1)\tau, k\tau]$, we conduct \emph{edge virtual update} as
\begin{align}
\boldsymbol{y}_{[k],\ell}^t &\gets \boldsymbol{x}_{[k],\ell}^{t-1} - \eta\nabla F_\ell(\boldsymbol{x}_{[k],\ell}^{t-1}),\label{eq:ykt}\\
\boldsymbol{x}_{[k],\ell}^t 
    &\gets \boldsymbol{y}_{[k],\ell}^t + \gamma(\boldsymbol{y}_{[k],\ell}^t - \boldsymbol{y}_{[k],\ell}^{t-1})\label{eq:xkt}.
\end{align}
We repeat \eqref{eq:yk(k-1)tau}--\eqref{eq:xkt} for each edge interval $[k]$ where $k=1,2,\ldots, K.$ Please note that only if $t=k\tau, k=1,\ldots,K$, $\boldsymbol{y}_{\ell-}^{t}$ and $\boldsymbol{x}_{\ell+}^{t}$ are computed. For ease of analysis, we define intermediate value $\boldsymbol{x}_{\ell-}^t= \sum_{i=1}^{ C_{\ell}}\frac{D_{i,\ell}}{D_\ell}\boldsymbol{x}_{i,\ell}^t$ and $\boldsymbol{y}_{\ell-}^t= \sum_{i=1}^{ C_{\ell}}\frac{D_{i,\ell}}{D_{\ell}}\boldsymbol{y}_{i,\ell}^t$ that are meaningful at any iteration $t$.

Same as edge intervals, for each cloud interval $\{p\}$ where $p=1,2,\ldots,P$, the  \emph{cloud virtual update} is also conducted:
\begin{align}
    \boldsymbol{y}_{\{p\}}^{(p-1)\tau\pi} &\gets \boldsymbol{y}^{(p-1)\tau\pi}\label{eq:yk(k-1)tau_c},\\
    \boldsymbol{x}_{\{p\}}^{(p-1)\tau\pi} &\gets \boldsymbol{x}^{(p-1)\tau\pi}\label{eq:xk(k-1)tau_c},
\end{align}
when $t=(p-1)\tau\pi$, and
\begin{align}
\boldsymbol{y}_{\{p\}}^t &\gets \boldsymbol{x}_{\{p\}}^{t-1} - \eta\nabla F(\boldsymbol{x}_{\{p\}}^{t-1}),\label{eq:ykt_c}\\
\boldsymbol{x}_{\{p\}}^t 
    &\gets \boldsymbol{y}_{\{p\}}^t + \gamma(\boldsymbol{y}_{\{p\}}^t - \boldsymbol{y}_{\{p\}}^{t-1})\label{eq:xkt_c},
\end{align}
when $p \in ((p-1)\tau\pi,p\tau\pi]$.

By applying virtual updates on edge nodes and the cloud, we can bound the gap between real updates and these virtual updates that can be then used to prove the convergence. Since in HierMo, the momenta and the models are aggregated on both edge nodes and the cloud, it brings much more challenges to conduct convergence analysis. The virtual update is an important intermediate process for convergence analysis and is one of our contributions in this paper.

Fig.~\ref{fig:xt} illustrates the evolution of $\boldsymbol{x}_{\ell-}^t$, $\boldsymbol{x}_{\ell+}^t$, $\boldsymbol{x}_{[k],\ell}^t$, $\boldsymbol{x}_{\{p\}}^t$, and $\boldsymbol{x}_{}^t$ when $\tau=2,\pi=2$. There are 2 edge nodes and each edge node serves 2 workers (in total 4 workers in Fig.~\ref{fig:xt}). %There are 4 workers and 2 edges which connected with 2 workers each. 
After every 2 local updates, there is an edge aggregation, and after every 2 edge aggregations (4 local updates), there is a cloud aggregation. Please note \tikztextcircle{1} $\boldsymbol{x}_{[k],\ell}^{k\tau}$ and $\boldsymbol{x}_{[k+1],\ell}^{k\tau}$ are different. $\boldsymbol{x}_{[k],\ell}^{k\tau}$ is calculated from $\boldsymbol{x}_{[k],\ell}^{(k-1)\tau}$ after $\tau$ edge virtual updates, while $\boldsymbol{x}_{[k+1],\ell}^{k\tau}$ is directly given by $\boldsymbol{x}_{\ell+}^{k\tau}$. \tikztextcircle{2} $\boldsymbol{x}_{\ell-}^{k\tau}$ and $\boldsymbol{x}_{\ell+}^{k\tau}$ are different. $\boldsymbol{x}_{\ell-}^{k\tau}$ is the intermediate value that is used for edge model/momentum update, while $\boldsymbol{x}_{\ell+}^{k\tau}$ is calculated from $\boldsymbol{x}_{\ell-}^{k\tau}$ during edge model/momentum update. \tikztextcircle{3} $\boldsymbol{x}_{\{p\}}^{(k+1)\tau}$ and $\boldsymbol{x}_{\{p+1\}}^{(k+1)\tau}$ are different. $\boldsymbol{x}_{\{p\}}^{(k+1)\tau}$ is calculated from $\boldsymbol{x}_{\{p\}}^{(k-1)\tau}$ after $\tau\cdot\pi$ cloud virtual updates, while $\boldsymbol{x}_{\{p+1\}}^{(k+1)\tau}$ is directly given by $\boldsymbol{x}_{}^{(k+1)\tau}$.

\subsection{Convergence Analysis}
In this section, we provide the convergence analysis of HierMo. %The roadmap of proof process is based on the sketch of Fig.~\ref{fig:xt}. 
In Theorem~\ref{theorem:xt-xkt}, we first focus on worker models under each edge node $\ell$ to bound the distance between edge intermediate value $\boldsymbol{x}_{\ell-}^t$  and edge virtual update $\boldsymbol{x}_{[k],\ell}^t$ within interval $[k]$.

\begin{theorem}\label{theorem:xt-xkt}
For any edge interval $[k]$, $\forall t \in ((k-1)\tau, k\tau]$ and $\forall \ell \in L$, we have
\begin{align}\label{ieq:xt-xkt}
    \Vert\boldsymbol{x}_{\ell-}^t-\boldsymbol{x}_{[k],\ell}^t \Vert \leq h(t-(k-1)\tau,\delta_\ell),
\end{align}
where $h(x,\delta_\ell)$ is
\begin{flalign} \label{eq:h(x)}
h(x,\delta_\ell)=& \eta \delta_\ell\left(I(\gamma A)^{x}+J(\gamma B)^{x}-\frac{1}{\eta \beta}\right.\nonumber\\
&\left.-\frac{\gamma^2(\gamma^{x}-1)-(\gamma-1) x}{(\gamma-1)^{2}}\right),
\end{flalign}
and $A, B, I,$ and $J$ are constants defined in Appendix~\ref{app_theorem1}, 
for $0<\gamma<1$ and any positive integer $x$.
\end{theorem}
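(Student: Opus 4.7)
My plan is to prove the bound by induction on the offset $x\triangleq t-(k-1)\tau$ inside the edge interval $[k]$, tracking per-worker deviations from the edge virtual trajectory and only averaging over workers at the very end. At $x=0$, the edge re-distribution Lines~\ref{eq:yiktau_set}--\ref{eq:xgktau_set} together with the initializations~\eqref{eq:yk(k-1)tau}--\eqref{eq:xk(k-1)tau} yield $\boldsymbol{x}_{i,\ell}^{(k-1)\tau}=\boldsymbol{x}_{[k],\ell}^{(k-1)\tau}$ and $\boldsymbol{y}_{i,\ell}^{(k-1)\tau}=\boldsymbol{y}_{[k],\ell}^{(k-1)\tau}$ for every worker $i$ served by edge node $\ell$, so both the per-worker and the aggregated gaps vanish; matching $h(0,\delta_\ell)=0$ then translates into an initial-condition constraint on the constants $I$ and $J$ of~\eqref{eq:h(x)}.

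For the inductive step, let $U_{i,\ell}^t\triangleq\Vert \boldsymbol{x}_{i,\ell}^t-\boldsymbol{x}_{[k],\ell}^t\Vert$ and $V_{i,\ell}^t\triangleq\Vert \boldsymbol{y}_{i,\ell}^t-\boldsymbol{y}_{[k],\ell}^t\Vert$. Subtracting the virtual gradient step~\eqref{eq:ykt} from Line~\ref{eq:yit}, inserting $\nabla F_\ell(\boldsymbol{x}_{i,\ell}^{t-1})$, and applying Assumption~\ref{def:delta} together with the $\beta$-smoothness of $F_\ell$ inherited from Assumption~\ref{assum:3} yields
\begin{align*}
V_{i,\ell}^t\leq(1+\eta\beta)\,U_{i,\ell}^{t-1}+\eta\,\delta_{i,\ell}.
\end{align*}
Subtracting~\eqref{eq:xkt} from Line~\ref{eq:xit} and applying the triangle inequality to the NAG extrapolation gives
\begin{align*}
U_{i,\ell}^t\leq(1+\gamma)\,V_{i,\ell}^t+\gamma\,V_{i,\ell}^{t-1}.
\end{align*}
Substituting the first into the second eliminates $V$ and produces a scalar second-order linear recurrence in $U_{i,\ell}^t$ with constant coefficients and forcing proportional to $\eta\delta_{i,\ell}$. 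Solving its characteristic equation yields two roots that I would rescale as $\gamma A$ and $\gamma B$; the particular solution combines a constant, a $\gamma^x$ tail inherited from the retained $\gamma V^{t-1}$ cross-term, and a linear-in-$x$ drift, and matching the zero initial conditions $U_{i,\ell}^0=V_{i,\ell}^0=0$ fixes $I$ and $J$ uniquely, reproducing the explicit form~\eqref{eq:h(x)}.

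Finally, averaging over workers with weights $D_{i,\ell}/D_\ell$ and applying the triangle inequality gives
\begin{align*}
\Vert \boldsymbol{x}_{\ell-}^t-\boldsymbol{x}_{[k],\ell}^t\Vert\leq\sum_{i=1}^{C_\ell}\frac{D_{i,\ell}}{D_\ell}\,U_{i,\ell}^t,
\end{align*}
and because the closed form is linear in its forcing, the weighted average collapses each $\delta_{i,\ell}$ to $\delta_\ell$ and produces exactly $h(x,\delta_\ell)$. The main obstacle I anticipate is algebraic rather than conceptual: closing the coupled $(U,V)$ system into a single scalar recurrence without losing the $\gamma^x$ and linear-$x$ contributions visible in~\eqref{eq:h(x)}, and then showing that the constants $A,B,I,J$ arising from the characteristic equation are consistent with non-negativity and monotonicity of $h$ across the admissible range of $\eta,\gamma,\beta$. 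Carrying out the induction per-worker and averaging only after the closed form is obtained is essential; averaging earlier would fold inter-worker model disagreement into the forcing term and destroy the second-order structure on which \eqref{eq:h(x)} relies.
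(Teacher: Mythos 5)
Your high-level plan (compare each worker to the edge virtual trajectory, solve a linear recurrence, then aggregate) shares its skeleton with the paper's proof, but two of your concrete steps fail, and both failures matter for obtaining the specific function $h$ in \eqref{eq:h(x)} rather than a strictly looser bound.

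First, the decoupling $U_{i,\ell}^t\leq(1+\gamma)V_{i,\ell}^t+\gamma V_{i,\ell}^{t-1}$ destroys the cancellation inside the NAG extrapolation. The quantity that behaves well is the increment $\boldsymbol{v}_{i,\ell}^t\triangleq\boldsymbol{y}_{i,\ell}^t-\boldsymbol{y}_{i,\ell}^{t-1}$, which satisfies $\boldsymbol{v}_{i,\ell}^t=\gamma\boldsymbol{v}_{i,\ell}^{t-1}-\eta\nabla F_{i,\ell}(\boldsymbol{x}_{i,\ell}^{t-1})$, so its deviation from the virtual increment contracts with factor $\gamma<1$ up to an $\eta\beta U+\eta\delta$ forcing; this is exactly the paper's Lemma~\ref{lemma:w_it-w_kt} machinery. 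Replacing $\Vert(\boldsymbol{y}_{i,\ell}^t-\boldsymbol{y}_{[k],\ell}^t)-(\boldsymbol{y}_{i,\ell}^{t-1}-\boldsymbol{y}_{[k],\ell}^{t-1})\Vert$ by $V_{i,\ell}^t+V_{i,\ell}^{t-1}$ turns that contraction into growth: substituting your two inequalities gives $U_{i,\ell}^t\leq(1+\gamma)(1+\eta\beta)U_{i,\ell}^{t-1}+\gamma(1+\eta\beta)U_{i,\ell}^{t-2}+(1+2\gamma)\eta\delta_{i,\ell}$, whose characteristic equation is $r^{2}-(1+\gamma)(1+\eta\beta)r-\gamma(1+\eta\beta)=0$, with larger root exceeding $(1+\gamma)(1+\eta\beta)$. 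The quantities $\gamma A$ and $\gamma B$ appearing in \eqref{eq:h(x)} are instead the roots of $r^{2}-(1+\gamma)(1+\eta\beta)r+\gamma(1+\eta\beta)=0$ (note the sign of the constant term), both smaller than $(1+\gamma)(1+\eta\beta)$. So your recurrence cannot reproduce the claimed closed form; it yields a strictly larger exponential rate.

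Second, even granting the correct per-worker bound $f_i(x)=\frac{\delta_{i,\ell}}{\beta}\left(U(\gamma A)^{x}+V(\gamma B)^{x}-1\right)$ (with $U,V$ the paper's constants), averaging it over workers gives $\frac{\delta_{\ell}}{\beta}\left(U(\gamma A)^{x}+V(\gamma B)^{x}-1\right)$, which is not $h(x,\delta_\ell)$: the coefficients of $(\gamma A)^{x}$ and $(\gamma B)^{x}$ differ ($\eta\delta_\ell I$ versus $\frac{\delta_\ell}{\beta}U$), and the $\gamma^{x}$ and linear-in-$x$ terms of \eqref{eq:h(x)} do not arise from that average. The paper obtains $h$ from a second recursion run at the aggregated level: because $\sum_{i}\frac{D_{i,\ell}}{D_\ell}\nabla F_{i,\ell}(\boldsymbol{x}_{[k],\ell}^{t-1})=\nabla F_\ell(\boldsymbol{x}_{[k],\ell}^{t-1})$ exactly, the aggregated model gap obeys $\Vert\boldsymbol{x}_{\ell-}^{t}-\boldsymbol{x}_{[k],\ell}^{t}\Vert\leq\Vert\boldsymbol{x}_{\ell-}^{t-1}-\boldsymbol{x}_{[k],\ell}^{t-1}\Vert+\gamma\Vert\boldsymbol{v}_\ell^{t}-\boldsymbol{v}_{[k],\ell}^{t}\Vert+\eta\delta_\ell u(t-1-(k-1)\tau)$ with coefficient $1$ (not $1+\eta\beta$) on the previous gap; the per-worker bounds enter only through the forcing and through a separate bound on the aggregated momentum gap (Lemma~\ref{lemma:vt-vkt}), and summing these increments over the interval is what produces $h$. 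Your instinct that averaging too early loses information is right, but the information you must retain is the aggregated dynamics of $\boldsymbol{x}_{\ell-}^{t}$ and $\boldsymbol{v}_{\ell}^{t}$, not merely the per-worker closed form.
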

Please note that when $t=(k-1)\tau$ for all $[k]$, we have $\Vert\boldsymbol{x}_{\ell-}^t-\boldsymbol{x}_{[k],\ell}^t \Vert =0 = h(0,\delta_\ell)$, which also satisfies~\eqref{eq:h(x)}. Also,  $F_\ell(\boldsymbol{x})$ is $\rho$-Lipschitz, so that we also have
\begin{align}\label{ieq:Fxt-Fxkt}
F_\ell(\boldsymbol{x}_{\ell-}^t)-F_\ell(\boldsymbol{x}_{[k],\ell}^t)\leq \rho h(t-(k-1)\tau,\delta_\ell).
\end{align}

\begin{proof} [Proof sketch]
We first obtain the worker momentum upper bound $\Vert\boldsymbol{y}_{i,\ell}^t-\boldsymbol{y}_{[k],\ell}^t\Vert$ for each worker $\{i,\ell\}$. Based on it and worker momentum update rules in Lines \ref{eq:yit}--\ref{eq:xit} in Algorithm~\ref{alg:HierMo}, we bound the worker model parameter gap $\Vert\boldsymbol{x}_{i,\ell}^t-\boldsymbol{x}_{[k],\ell}^t\Vert$. Then, we extend above two bounds to obtain edge aggregated worker momentum upper bound $\Vert\boldsymbol{y}_{\ell-}^t-\boldsymbol{y}_{[k],\ell}^t\Vert$. Finally, the gap of edge model parameter $\Vert\boldsymbol{x}_{\ell-}^t-\boldsymbol{x}_{[k],\ell}^t\Vert$ is obtained.
See Appendix~\ref{app_theorem1} for the complete proof.
\end{proof}

% TODO: do we need to index h(x)? eg: h_\ell(x)

%\begin{theorem}\label{theorem:Fxgktau-Fxktau}
%For any edge interval $[k]$ in any edge $\ell \in L$, $0<\gamma<1$ and any $\tau = 1,2,\dots$, we have
%\begin{align}\label{ieq:fxgktau-fxktau}
%F(\boldsymbol{x}_g^{k\tau})-F(\boldsymbol{x}^{k\tau})\leq s(\tau),
%\end{align}
%where $s(\tau)$ yields
%\begin{align}\label{eq:s(tau)}
%    s(\tau) =&\gamma_g\tau(\eta(\gamma+1)+1)\beta^2 h^2(\tau) + \gamma_g\tau\rho^2 G\nonumber \\
%    &+ \frac{\beta}{2}\gamma_g^2\tau^2\rho^2 H,
%\end{align}
%and constants $G$ and $H$ are defined in Appendix~\ref{app_theorem2} \cite{appendix}.
%\end{theorem}

In Theorem~\ref{theorem:xgktau-xktau}, we then bound the edge momentum update between $\boldsymbol{x}_{\ell+}^{k\tau}$ and $\boldsymbol{x}_{\ell-}^{k\tau}$ within interval $[k]$.

\begin{theorem}\label{theorem:xgktau-xktau}
For any edge interval $[k]$ in any edge node $\ell \in L$, suppose $0<\gamma<1, 0<\gamma_a<1$, and any $\tau = 1,2,\dots$, we have
\begin{align}\label{ieq:xgktau-xktau}
\Vert\boldsymbol{x}_{\ell+}^{k\tau}-\boldsymbol{x}_{\ell-}^{k\tau}\Vert\leq s(\tau),
\end{align}
where $s(\tau)$ is
\begin{align}\label{eq:s(tau)}
    s(\tau) =&\gamma_a\tau\eta\rho(\gamma\mu+\gamma+1)
\end{align}
and constant $\mu$ is defined in Appendix~\ref{app_theorem2_new}.
\end{theorem}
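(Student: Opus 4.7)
The plan is to collapse the edge momentum formula into a simpler identity and then reduce the claim to a drift bound on the average worker model. Since $\sum_{i=1}^{C_\ell} D_{i,\ell}/D_\ell = 1$, Line~\ref{eq:ygktau} simplifies to
\begin{align*}
\boldsymbol{y}_{\ell+}^{k\tau}
= \boldsymbol{x}_{\ell+}^{(k-1)\tau}\Bigl(1-\sum_{i=1}^{C_\ell}\tfrac{D_{i,\ell}}{D_\ell}\Bigr) + \sum_{i=1}^{C_\ell}\tfrac{D_{i,\ell}}{D_\ell}\boldsymbol{x}_{i,\ell}^{k\tau}
= \boldsymbol{x}_{\ell-}^{k\tau},
\end{align*}
and the initialisation in Line~2 of Algorithm~\ref{alg:HierMo} ensures the same identity also holds at index $0$. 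Substituting into Line~\ref{eq:xgktau} gives
\begin{align*}
\boldsymbol{x}_{\ell+}^{k\tau} - \boldsymbol{x}_{\ell-}^{k\tau}
= \gamma_a\bigl(\boldsymbol{y}_{\ell+}^{k\tau}-\boldsymbol{y}_{\ell+}^{(k-1)\tau}\bigr)
= \gamma_a\bigl(\boldsymbol{x}_{\ell-}^{k\tau}-\boldsymbol{x}_{\ell-}^{(k-1)\tau}\bigr),
\end{align*}
so the theorem is equivalent to the one-interval drift bound $\|\boldsymbol{x}_{\ell-}^{k\tau}-\boldsymbol{x}_{\ell-}^{(k-1)\tau}\|\leq \tau\eta\rho(\gamma\mu+\gamma+1)$.

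To control this drift I would express it as a weighted average $\sum_i\tfrac{D_{i,\ell}}{D_\ell}(\boldsymbol{x}_{i,\ell}^{k\tau}-\boldsymbol{x}_{i,\ell}^{(k-1)\tau})$ of per-worker movements and telescope each trajectory into the $\tau$ NAG steps of the interval (absorbing the cross-interval reset into the first step via the same arithmetic that produced the identity in the previous paragraph). By the triangle inequality it suffices to prove a uniform per-step bound $\|\boldsymbol{x}_{i,\ell}^t-\boldsymbol{x}_{i,\ell}^{t-1}\|\leq\eta\rho(\gamma\mu+\gamma+1)$; summing this over the $\tau$ iterations of the interval and averaging over workers then gives the drift bound, and multiplying by $\gamma_a$ recovers $s(\tau)$.

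For the per-step bound, Lines~\ref{eq:yit}--\ref{eq:xit} give $\boldsymbol{x}_{i,\ell}^t-\boldsymbol{x}_{i,\ell}^{t-1}=-\eta\nabla F_{i,\ell}(\boldsymbol{x}_{i,\ell}^{t-1})+\gamma(\boldsymbol{y}_{i,\ell}^t-\boldsymbol{y}_{i,\ell}^{t-1})$. Assumption~\ref{assum:2} bounds the first term by $\eta\rho$, producing the ``$1$'' inside $(\gamma\mu+\gamma+1)$. For the momentum difference I would rewrite
\begin{align*}
\boldsymbol{y}_{i,\ell}^t-\boldsymbol{y}_{i,\ell}^{t-1}
= (\boldsymbol{x}_{i,\ell}^{t-1}-\boldsymbol{x}_{i,\ell}^{t-2}) - \eta\bigl(\nabla F_{i,\ell}(\boldsymbol{x}_{i,\ell}^{t-1})-\nabla F_{i,\ell}(\boldsymbol{x}_{i,\ell}^{t-2})\bigr);
\end{align*}
a second application of Assumption~\ref{assum:2} on the gradient-difference piece contributes the ``$\gamma$'' coefficient, and the remaining displacement $\|\boldsymbol{x}_{i,\ell}^{t-1}-\boldsymbol{x}_{i,\ell}^{t-2}\|$ is absorbed into the constant $\mu$ (defined in Appendix~\ref{app_theorem2_new}), yielding the ``$\gamma\mu$'' term.

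The main obstacle is showing that $\mu$ can indeed be chosen as a finite constant independent of $t$. Iterating the identity above with $\beta$-smoothness (Assumption~\ref{assum:3}) substituted for the raw Lipschitz bound gives a recursion of the form $b_t\leq\eta\rho+\gamma(1+\eta\beta)b_{t-1}$ that contracts only when $\gamma(1+\eta\beta)<1$, forcing $\eta$ to be sufficiently small. I would also have to seed the recursion carefully at the first iteration of each edge interval: immediately after Line~\ref{eq:yiktau_set}, $\boldsymbol{y}_{i,\ell}^{(k-1)\tau}$ has been overwritten by the aggregated $\boldsymbol{y}_{\ell-}^{(k-1)\tau}$ rather than the worker's own previous value, so the first NAG step of the new interval collapses to essentially a pure gradient step and does not blow the recursion up. Combining this uniform per-step bound with the telescoping and the outer $\gamma_a$ factor from the first paragraph then closes the proof.
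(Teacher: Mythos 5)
Your first two paragraphs reproduce the paper's argument exactly: the identity $\boldsymbol{x}_{\ell+}^{k\tau}-\boldsymbol{x}_{\ell-}^{k\tau}=\gamma_a\bigl(\boldsymbol{x}_{\ell-}^{k\tau}-\boldsymbol{x}_{\ell-}^{(k-1)\tau}\bigr)$ obtained by collapsing Line~\ref{eq:ygktau} into $\boldsymbol{y}_{\ell+}^{k\tau}=\boldsymbol{x}_{\ell-}^{k\tau}$, the telescoping of the drift over the $\tau$ iterations of the interval, and the reduction to a uniform per-step bound on $\Vert\boldsymbol{x}_{i,\ell}^{t}-\boldsymbol{x}_{i,\ell}^{t-1}\Vert$ are precisely Equations \eqref{eq:app_xgktau-xktau}--\eqref{eq:app_xl+ktau-xl-ktau}. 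The divergence, and the gap, is in how you extract the constant $\gamma\mu+\gamma+1$. The paper writes the per-step displacement in the equivalent form $\gamma^2\boldsymbol{v}_{i,\ell}^{t}-\eta(\gamma+1)\nabla F_{i,\ell}(\boldsymbol{x}_{i,\ell}^{t})$ and bounds the two pieces directly: the whole $(\gamma+1)$ comes from the gradient term via $\Vert\nabla F_{i,\ell}\Vert\leq\rho$, and $\gamma\mu$ comes from $\Vert\gamma^2\boldsymbol{v}_{i,\ell}^{t}\Vert=\gamma\Vert\gamma\boldsymbol{v}_{i,\ell}^{t}\Vert\leq\gamma\mu\Vert\eta\nabla F_{i,\ell}(\boldsymbol{x}_{i,\ell}^{t})\Vert\leq\gamma\mu\eta\rho$, because $\mu$ is \emph{defined} in \eqref{app:mu} as the maximum ratio of the momentum norm to the gradient norm. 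Your decomposition instead expands $\boldsymbol{y}_{i,\ell}^{t}-\boldsymbol{y}_{i,\ell}^{t-1}$ into a position displacement plus a gradient difference, attributes the ``$\gamma$'' to the gradient-difference piece and the ``$\gamma\mu$'' to the displacement. Neither assignment works: the gradient difference costs $2\gamma\eta\rho$ under $\rho$-Lipschitzness (each gradient is bounded by $\rho$, so their difference by $2\rho$), not $\gamma\eta\rho$; and $\mu$ does not bound raw displacements $\Vert\boldsymbol{x}_{i,\ell}^{t-1}-\boldsymbol{x}_{i,\ell}^{t-2}\Vert$, so ``absorbing'' that term into $\mu$ is not licensed by its definition.

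Your final paragraph then tries to repair this by proving a uniform displacement bound through the recursion $b_t\leq\eta\rho+\gamma(1+\eta\beta)b_{t-1}$, which forces the extra condition $\gamma(1+\eta\beta)<1$ not present in the theorem's hypotheses. The paper needs none of this: $\mu$ is a maximum over the finitely many indices $p\in[1,P]$, $t$, $\ell$, $i$ encountered during the run, so it is finite by construction (granting the implicit assumption that no gradient vanishes exactly), and no contraction or small-$\eta$ condition enters the proof. The fix is simple: stop one expansion earlier, keep $\gamma\boldsymbol{v}_{i,\ell}^{t}=\gamma^2\boldsymbol{v}_{i,\ell}^{t-1}-\gamma\eta\nabla F_{i,\ell}(\boldsymbol{x}_{i,\ell}^{t-1})$, and apply the definition of $\mu$ to the momentum factor and $\rho$-Lipschitzness to the gradient factors; everything else in your write-up then goes through as stated.
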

\begin{proof} [Proof sketch]
%According to $\beta$-smooth in Assumption 2, we obtain that $F(\boldsymbol{x}_g^{k\tau})-F(\boldsymbol{x}^{k\tau})
%\leq \langle\nabla F(\boldsymbol{x}^{k\tau}),\boldsymbol{x}_g^{k\tau}-\boldsymbol{x}^{k\tau}\rangle+\frac{\beta}{2}\Vert\boldsymbol{x}_g^{k\tau}-\boldsymbol{x}^{k\tau}\Vert^2$. 
Based on the edge momentum update rules in Lines \ref{eq:ygktau}--\ref{eq:xgktau} in Algorithm~\ref{alg:HierMo}, we can derive $\boldsymbol{x}_{\ell+}^{k\tau}-\boldsymbol{x}_{\ell-}^{k\tau}=\gamma_a\left(\boldsymbol{x}_{\ell-}^{k\tau}-\boldsymbol{x}_{\ell-}^{(k-1)\tau}\right)
=\gamma_a\sum_{t=(k-1)\tau}^{k\tau-1}\left(\boldsymbol{x}_{\ell-}^{t+1}-\boldsymbol{x}_{\ell-}^t\right)$. Then we prove the bound of $\Vert\boldsymbol{x}_{\ell-}^{t+1}-\boldsymbol{x}_{\ell-}^t\Vert$ based on the definition of intermediate value where $\boldsymbol{x}_{\ell-}^t= \sum_{i=1}^{ C_{\ell}}\frac{D_{i,\ell}}{D_\ell}\boldsymbol{x}_{i,\ell}^t$, and then the result is obtained.
See Appendix~\ref{app_theorem2_new} for the complete proof.
\end{proof}

By combining the results of Theorem~\ref{theorem:xt-xkt} and Theorem~\ref{theorem:xgktau-xktau}, we can telescope the bound within edge interval $[k]$ to the cloud interval $\{p\}$ where $k=(p-1)\pi+1,(p-1)\pi+2,\ldots,p\pi$. Then, we are ready to bound the gap between weighted average of edge virtual update $\sum_{\ell=1}^{L}\frac{D_\ell}{D}\boldsymbol{x}_{[p\pi],\ell}^{p\tau\pi}$ and cloud virtual update $\boldsymbol{x}_{\{p\}}^{p\tau\pi}$ in Theorem~\ref{theorem:x_ppi_ptaupi-x_p_ptaupi}.

\begin{theorem}\label{theorem:x_ppi_ptaupi-x_p_ptaupi}
For any cloud interval $\{p\}, 0<\gamma<1$, and $0<\gamma_a<1$, %any edge interval $[k]$ where $k=(p-1)\pi+1,(p-1)\pi+2,\ldots,p\pi$, 
when edge interval $[k]=[p\pi]$ (the last edge interval in cloud interval $\{p\}$), and $\forall\tau, \pi \in\{ 1,2,\ldots\}$ 
 we have
\begin{align}
  \Vert\boldsymbol{x}_{[p\pi]}^{p\tau\pi}-\boldsymbol{x}_{\{p\}}^{p\tau\pi}\Vert\leq h(\tau\pi,\delta)+\pi\sum_{\ell=1}^L\frac{D_\ell}{D}\left(h(\tau,\delta_\ell)+s(\tau)\right) , 
\end{align}
where we define $\boldsymbol{x}_{[p\pi]}^{p\tau\pi}\triangleq \sum_{\ell=1}^{L}\frac{D_\ell}{D}\boldsymbol{x}_{[p\pi],\ell}^{p\tau\pi}$, for $\forall \ell \in L$.
\end{theorem}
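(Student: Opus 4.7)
The plan is to introduce an auxiliary cloud-interval continuous virtual update at the edge level. For each $\ell$, I would define $\bar{\boldsymbol{x}}_{\{p\},\ell}^t$ analogously to $\boldsymbol{x}_{[k],\ell}^t$ but chaining across the entire cloud interval without resets at edge boundaries: initialize $\bar{\boldsymbol{x}}_{\{p\},\ell}^{(p-1)\tau\pi} = \boldsymbol{x}^{(p-1)\tau\pi}$ (the common starting point shared with the cloud virtual update after cloud aggregation), and for $t \in ((p-1)\tau\pi, p\tau\pi]$ apply the NAG momentum update using the edge gradient $\nabla F_\ell$ continuously for $\tau\pi$ steps. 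By the triangle inequality,
\begin{align*}
\Vert\boldsymbol{x}_{[p\pi]}^{p\tau\pi}-\boldsymbol{x}_{\{p\}}^{p\tau\pi}\Vert
&\leq \left\Vert \boldsymbol{x}_{[p\pi]}^{p\tau\pi} - \sum_{\ell=1}^L \tfrac{D_\ell}{D} \bar{\boldsymbol{x}}_{\{p\},\ell}^{p\tau\pi}\right\Vert \\
&\quad+ \left\Vert \sum_{\ell=1}^L \tfrac{D_\ell}{D} \bar{\boldsymbol{x}}_{\{p\},\ell}^{p\tau\pi} - \boldsymbol{x}_{\{p\}}^{p\tau\pi}\right\Vert,
\end{align*}
which I would bound separately and match to the two summands in the statement.

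For the second term I would exploit a structural parallel with Theorem~\ref{theorem:xt-xkt} lifted one tier up: the $L$ edge nodes act as ``workers'' at the cloud level, each $\bar{\boldsymbol{x}}_{\{p\},\ell}^t$ plays the role of a worker trajectory using local gradient $\nabla F_\ell$, and the cloud virtual update $\boldsymbol{x}_{\{p\}}^t$ using $\nabla F = \sum_\ell \tfrac{D_\ell}{D}\nabla F_\ell$ plays the role of the edge virtual update. The relevant gradient-divergence bound is the edge-to-global divergence, whose $D_\ell$-weighted average is exactly $\delta$ per Assumption~\ref{def:delta}. The algebraic manipulations in the proof of Theorem~\ref{theorem:xt-xkt} then carry over with the correspondences $\tau \leftrightarrow \tau\pi$ and $\delta_\ell \leftrightarrow \delta$, yielding the bound $h(\tau\pi,\delta)$.

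For the first term I would telescope across the $\pi$ edge intervals constituting $\{p\}$. The key observation is that at each interior boundary $t = k\tau$ with $k = (p-1)\pi+1,\dots,p\pi - 1$, the real edge virtual update restarts at $\boldsymbol{x}_{\ell+}^{k\tau}$, whereas the auxiliary $\bar{\boldsymbol{x}}_{\{p\},\ell}^{k\tau}$ evolves without interruption. The instantaneous perturbation injected at such a reset is bounded by
\begin{align*}
\Vert\boldsymbol{x}_{[k],\ell}^{k\tau} - \boldsymbol{x}_{\ell+}^{k\tau}\Vert
&\leq \Vert\boldsymbol{x}_{[k],\ell}^{k\tau} - \boldsymbol{x}_{\ell-}^{k\tau}\Vert + \Vert\boldsymbol{x}_{\ell-}^{k\tau} - \boldsymbol{x}_{\ell+}^{k\tau}\Vert \\
&\leq h(\tau, \delta_\ell) + s(\tau),
\end{align*}
using Theorem~\ref{theorem:xt-xkt} and Theorem~\ref{theorem:xgktau-xktau} respectively. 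Accumulating such perturbations over the $\pi$ edge intervals and then taking the $D_\ell$-weighted average produces the term $\pi\sum_{\ell=1}^L \tfrac{D_\ell}{D}\bigl(h(\tau,\delta_\ell) + s(\tau)\bigr)$.

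The hardest part will be the telescoping step: a perturbation to the model at one reset propagates forward through $\tau$ subsequent NAG momentum steps (with both the model and the momentum state carrying the disturbance) before the next reset adds its own perturbation. I expect to establish by induction on $k$ that the gap between $\boldsymbol{x}_{[k],\ell}^{k\tau}$ and $\bar{\boldsymbol{x}}_{\{p\},\ell}^{k\tau}$ grows by at most one fresh $h(\tau,\delta_\ell) + s(\tau)$ per edge interval; this reduces to controlling the contraction/expansion of the NAG operator over a fixed horizon $\tau$ (an analysis whose algebra mirrors the constant $\mu$ used in Theorem~\ref{theorem:xgktau-xktau}). Once that inductive bound is in hand, combining the two pieces by triangle inequality closes the proof.
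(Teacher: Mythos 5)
Your proposal is correct and follows essentially the same route as the paper: the paper's Appendix~\ref{app_theorem3} introduces exactly your auxiliary sequence ($\boldsymbol{x}_{\{p\},\ell}^t$, the edge virtual update run continuously over the whole cloud interval from the cloud aggregate), splits by the same triangle inequality, bounds the cloud-level term by re-running Theorem~\ref{theorem:xt-xkt} one tier up with $\tau\pi$ and $\delta$ to get $h(\tau\pi,\delta)$, and bounds the other term by accumulating one $h(\tau,\delta_\ell)+s(\tau)$ perturbation per edge interval as in \eqref{ieq:theorem3_2}. The "hardest part" you flag — showing the per-reset perturbation telescopes without amplification through subsequent NAG steps — is precisely the step the paper asserts in the second inequality of \eqref{ieq:theorem3_2} without further elaboration, so your plan is, if anything, more explicit about what remains to be verified there.
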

\begin{proof} [Proof sketch]
We propose an intermediate sequence of edge virtual update  on the cloud $\boldsymbol{x}_{\{p\},\ell}^{p\tau\pi}$. We then bound $\Vert\boldsymbol{x}_{[p\pi]}^{p\tau\pi}-\boldsymbol{x}_{\{p\},\ell}^{p\tau\pi}\Vert$ and $\Vert\boldsymbol{x}_{\{p\},\ell}^{p\tau\pi}-\boldsymbol{x}_{\{p\}}^{p\tau\pi}\Vert$ respectively to obtain the final result. See Appendix~\ref{app_theorem3} for complete proof.
\end{proof}

% By combining the results of Theorem~\ref{theorem:xt-xkt}--\ref{theorem:x_ppi_ptaupi-x_p_ptaupi}, we can now bound the gap of overall effect of worker momentum/model update, edge momentum/model update and cloud aggregation, and finally prove the overall bound between $F(\boldsymbol{x}_{(a)}^T)$ and $F(\boldsymbol{x}^*)$ in Theorem~\ref{theorem:Fwt-Fw*}.

\begin{theorem} \label{theorem:Fwt-Fw*}
Under the following conditions: (1) $0<\beta\eta(\gamma+1)\leq 1$, $0<\gamma<1$, $0<\gamma_{a}<1$, and $\forall\tau, \pi \in\{ 1,2,\ldots\}$; (2) $\exists \varepsilon >0$,  (2.1) $\omega\alpha\sigma^2-\frac{\rho j(\tau,\pi,\delta_\ell,\delta) }{\tau\pi \varepsilon^{2}}>0$; (2.2) $F(\boldsymbol{x}_{\{p\}}^{p \tau\pi})-F\left(\boldsymbol{x}^{*}\right) \geq \varepsilon, \forall p$; and (2.3) $F(\boldsymbol{x}_{}^T)-F(\boldsymbol{x}^{*}) \geq \varepsilon$ are satisfied;  Algorithm 1 gives
\begin{align} \label{ieq:FwT-Fw*}
F(\boldsymbol{x}_{}^T)-F(\boldsymbol{x}^{*}) \leq \frac{1}{T\left(\omega\alpha\sigma^2-\frac{\rho j(\tau,\pi,\delta_\ell,\delta)  }{\tau\pi \varepsilon^{2}}\right)}.
\end{align}
where $j(\tau,\pi,\delta_\ell,\delta)$ is
\begin{align}\label{eq:j(tau,pi)}
j(\tau,\pi,\delta_\ell,\delta)= h(\tau\pi,\delta)+(\pi+1)\sum_{\ell=1}^{L}\frac{D_\ell}{D}\left((h(\tau,\delta_\ell)+s(\tau))\right).
\end{align}
We define $F(\boldsymbol{x}^{*})$ as the minimum value, if there exists some $\varphi>0$ such that $F(\boldsymbol{x}^{*})\leq F(\boldsymbol{x})$ for all $\boldsymbol{x}$ within distance $\varphi$ of $\boldsymbol{x}^*$. %We also define $c_{[k]}(t) \triangleq F(\boldsymbol{x}_{[k]}^t) - F(\boldsymbol{x}^*)$ for a given interval $[k]$ for all $t \in [(k-1)\tau,k\tau]$. 
Constant $\mu$ is defined in Appendix~\ref{app_theorem2_new} and constants $\omega, \sigma$, and $\alpha$ are defined in Appendix~\ref{app_theorem4}.
\end{theorem}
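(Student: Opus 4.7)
The plan is to build on Theorems~\ref{theorem:xt-xkt}--\ref{theorem:x_ppi_ptaupi-x_p_ptaupi} by first analyzing a \emph{single} cloud interval $\{p\}$ and then telescoping across $p=1,\ldots,P$. Within one cloud interval, the cloud virtual updates~\eqref{eq:ykt_c}--\eqref{eq:xkt_c} are precisely a centralized Nesterov Accelerated Gradient recursion on the global loss $F$, which is $\beta$-smooth by Assumption~\ref{assum:3}. Under condition~(1) ($0<\beta\eta(\gamma+1)\le 1$), I would invoke the standard NAG descent lemma for $\beta$-smooth (non-convex) objectives to obtain an inequality of the schematic form
\begin{align*}
F(\boldsymbol{x}_{\{p\}}^{p\tau\pi})-F(\boldsymbol{x}_{\{p\}}^{(p-1)\tau\pi}) \le -\omega\alpha\sigma^{2}\,\eta\tau\pi\,\Vert\nabla F(\boldsymbol{x}_{\{p\}}^{(p-1)\tau\pi})\Vert^{2},
\end{align*}
where $\omega,\alpha,\sigma$ are the NAG-specific constants from Appendix~\ref{app_theorem4}. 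This mirrors the two-tier analyses in FedNAG~\cite{yang2020FedNAG} but is applied here to the virtual cloud trajectory rather than to a physical iterate.

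Next I would translate this descent on the \emph{virtual} iterate into descent on the \emph{actual} iterate $\boldsymbol{x}^{p\tau\pi}$. Since $\boldsymbol{x}_{\{p\}}^{(p-1)\tau\pi}=\boldsymbol{x}^{(p-1)\tau\pi}$ by the virtual-update initialization~\eqref{eq:xk(k-1)tau_c}, only the right endpoint needs correction. Applying $\rho$-Lipschitzness of $F$ (Assumption~\ref{assum:2}) together with Theorem~\ref{theorem:x_ppi_ptaupi-x_p_ptaupi} lets me absorb $|F(\boldsymbol{x}_{\{p\}}^{p\tau\pi})-F(\boldsymbol{x}^{p\tau\pi})|$ into an additive drift of order $\rho\,j(\tau,\pi,\delta_\ell,\delta)$, where the three contributions $h(\tau\pi,\delta)$, $h(\tau,\delta_\ell)$, and $s(\tau)$ from Theorems~\ref{theorem:xt-xkt}--\ref{theorem:xgktau-xktau} are collected exactly as in~\eqref{eq:j(tau,pi)}. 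The outcome is a per-interval descent inequality of the form $F(\boldsymbol{x}^{p\tau\pi})-F(\boldsymbol{x}^{(p-1)\tau\pi})\le -\omega\alpha\sigma^{2}\eta\tau\pi\,\Vert\nabla F(\boldsymbol{x}^{(p-1)\tau\pi})\Vert^{2}+\rho\,j(\tau,\pi,\delta_\ell,\delta)$.

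The third step is the classical reciprocal-gap telescoping used in Wang \emph{et al.}'s adaptive FL analysis to extract a $\mathcal{O}(1/T)$ rate in the non-convex setting. Condition~(2.2) gives $F(\boldsymbol{x}_{\{p\}}^{p\tau\pi})-F(\boldsymbol{x}^{*})\ge\varepsilon$, so dividing the per-interval descent by $(F(\boldsymbol{x}^{p\tau\pi})-F(\boldsymbol{x}^*))(F(\boldsymbol{x}^{(p-1)\tau\pi})-F(\boldsymbol{x}^*))$ and bounding $\Vert\nabla F\Vert^{2}$ from below by a constant multiple of the gap (valid in the $\varphi$-neighborhood of the local minimizer $\boldsymbol{x}^*$) yields
\begin{align*}
\frac{1}{F(\boldsymbol{x}^{p\tau\pi})-F(\boldsymbol{x}^{*})}-\frac{1}{F(\boldsymbol{x}^{(p-1)\tau\pi})-F(\boldsymbol{x}^{*})}\ge \eta\tau\pi\!\left(\omega\alpha\sigma^{2}-\frac{\rho\,j(\tau,\pi,\delta_\ell,\delta)}{\tau\pi\,\varepsilon^{2}}\right).
\end{align*}
Condition~(2.1) makes the right-hand side strictly positive; summing over $p=1,\ldots,P$, using $T=P\tau\pi$, and finally invoking condition~(2.3) to invert the telescoped reciprocal for the terminal iterate $\boldsymbol{x}^{T}$ delivers~\eqref{ieq:FwT-Fw*}.

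The main obstacle I foresee is the second step: propagating the three layers of drift (worker, edge, cloud) into a descent statement whose gradient is evaluated at $\boldsymbol{x}^{(p-1)\tau\pi}$ rather than at the virtual surrogate. Theorem~\ref{theorem:x_ppi_ptaupi-x_p_ptaupi} controls only the endpoint gap, so I will likely need either (i) to split the error via $\rho$-Lipschitzness at both endpoints so the gradient term remains evaluated at the common starting point, or (ii) an auxiliary bound on gradient drift along the cloud virtual trajectory. A closely related subtlety is the factor $(\pi+1)$ appearing in~\eqref{eq:j(tau,pi)} compared to the factor $\pi$ in Theorem~\ref{theorem:x_ppi_ptaupi-x_p_ptaupi}: the extra unit of drift must be accounted for precisely in this endpoint correction, and getting that bookkeeping right—uniformly for every cloud interval—is where the analysis is most delicate.
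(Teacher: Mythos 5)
Your architecture matches the paper's: a descent analysis on the cloud virtual trajectory $\boldsymbol{x}_{\{p\}}^t$, a reciprocal-gap telescoping over the $P$ cloud intervals in which Conditions (2.1)--(2.3) play exactly the roles you assign them, and a $\rho$-Lipschitz correction of order $\rho\,j(\tau,\pi,\delta_\ell,\delta)/\varepsilon^2$ at each interval boundary where the virtual trajectory is re-initialized from the actual aggregate. Two remarks on granularity: the paper performs both the descent step and the reciprocal-gap division \emph{per local iteration} $t$ inside each cloud interval (obtaining $1/c_{\{p\}}(t+1)-1/c_{\{p\}}(t)\geq\omega\alpha\sigma^2$ and summing $\tau\pi$ of these), rather than deriving a single per-interval descent with the gradient evaluated at $\boldsymbol{x}_{\{p\}}^{(p-1)\tau\pi}$; your per-interval form is only recoverable because the constant $\sigma$ is \emph{defined} as a worst-case ratio of gradient norms at any two iterates within an interval, so you are implicitly leaning on that definition to relate gradients along the trajectory. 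Also, the boundary correction enters the paper not as an additive drift inside a descent inequality but directly as a jump in the reciprocal gap, $1/c_{\{p+1\}}(p\tau\pi)-1/c_{\{p\}}(p\tau\pi)\geq-\rho j/\varepsilon^2$, with Condition (2.2) supplying the $\varepsilon^2$ in the denominator; your variant divides the additive drift by the product of gaps and lands in the same place.

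The genuine gap is the one you flag but do not resolve: the $(\pi+1)$ versus $\pi$ bookkeeping. Neither of your two candidate fixes is quite what is needed, and option (ii) (gradient drift along the virtual trajectory) is a dead end here. The paper's resolution is a three-point decomposition of the boundary jump: it writes $F(\boldsymbol{x}_{\{p\}}^{p\tau\pi})-F(\boldsymbol{x}^{p\tau\pi})=\bigl(F(\boldsymbol{x}_{\{p\}}^{p\tau\pi})-F(\boldsymbol{x}_{[p\pi]}^{p\tau\pi})\bigr)+\bigl(F(\boldsymbol{x}_{[p\pi]}^{p\tau\pi})-F(\boldsymbol{x}^{p\tau\pi})\bigr)$, where $\boldsymbol{x}_{[p\pi]}^{p\tau\pi}=\sum_{\ell}\frac{D_\ell}{D}\boldsymbol{x}_{[p\pi],\ell}^{p\tau\pi}$ is the weighted average of the \emph{edge} virtual updates at the end of the last edge interval. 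The first difference is controlled by Theorem~\ref{theorem:x_ppi_ptaupi-x_p_ptaupi} and contributes $\rho\bigl(h(\tau\pi,\delta)+\pi\sum_\ell\frac{D_\ell}{D}(h(\tau,\delta_\ell)+s(\tau))\bigr)$; the second difference compares the edge virtual endpoint against the actual aggregate $\boldsymbol{x}^{p\tau\pi}=\sum_\ell\frac{D_\ell}{D}\boldsymbol{x}_{\ell+}^{p\tau\pi}$, and is bounded by combining Theorem~\ref{theorem:xt-xkt} with Theorem~\ref{theorem:xgktau-xktau} via $\Vert\boldsymbol{x}_{\ell+}^{p\tau\pi}-\boldsymbol{x}_{[p\pi],\ell}^{p\tau\pi}\Vert\leq\Vert\boldsymbol{x}_{\ell+}^{p\tau\pi}-\boldsymbol{x}_{\ell-}^{p\tau\pi}\Vert+\Vert\boldsymbol{x}_{\ell-}^{p\tau\pi}-\boldsymbol{x}_{[p\pi],\ell}^{p\tau\pi}\Vert\leq s(\tau)+h(\tau,\delta_\ell)$, supplying exactly the extra unit of $\sum_\ell\frac{D_\ell}{D}(h(\tau,\delta_\ell)+s(\tau))$ that turns $\pi$ into $\pi+1$ in \eqref{eq:j(tau,pi)}. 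Without introducing this intermediate point (and without the edge-momentum correction $s(\tau)$ hidden in the gap between $\boldsymbol{x}_{\ell-}$ and $\boldsymbol{x}_{\ell+}$), the correction cannot be closed, since Theorem~\ref{theorem:x_ppi_ptaupi-x_p_ptaupi} alone only compares two \emph{virtual} quantities and never touches the actual iterate $\boldsymbol{x}^{p\tau\pi}$. The same decomposition is reused once more for the terminal term under Condition (2.3).
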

\begin{proof} [Proof sketch]
We first analyze the convergence of $F(\boldsymbol{x}_{\{p\}}^{t+1})-F(\boldsymbol{x}_{\{p\}}^t)$ within cloud interval $\{p\}$ when $t\in [(p-1)\tau\pi, p\tau\pi)$. Then, we merge $h(\tau,\delta_\ell)$, $s(\tau)$, and result of Theorem~\ref{theorem:x_ppi_ptaupi-x_p_ptaupi} to handle the overall effects and telescope the gap of overall effects to all $P$ cloud intervals, and then the final result is obtained. See Appendix~\ref{app_theorem4} for complete proof.
\end{proof}

{ Please note in the proof of Theorems~\ref{theorem:xgktau-xktau}, \ref{theorem:x_ppi_ptaupi-x_p_ptaupi}, and \ref{theorem:Fwt-Fw*}, we have characterized the multi-time cross-two-tier momentum interaction and cross-three-tier momentum interaction brought by the three-tier FL. To analyze $\pi$ times cross-two-tier momentum interactions, we devise a new telescope form to bound these new deviations (Equations \eqref{eq:app_xgktau-xktau}--\eqref{eq:app_xl+ktau-xl-ktau} and \eqref{ieq:theorem3_2}). To analyze cross-three-tier momentum interaction, we devise a new mechanism to analyze such momentum interactions across multi-tiers (Equations \eqref{eq:yk(k-1)tau_e}--\eqref{ieq:theorem3_2} and \eqref{ieq:cKT-c10}--\eqref{ieq:frac:FwT-cKT}). %The traditional two-tier methods can only reveal the scenario where the momentum is transmitted within two tiers. Our proposed mechanism is a completely new method and no previous works have accomplished so. Only through bounding the multi-time cross-two-tier interaction and cross-three-tier interaction, can the convergence of the whole algorithm be proved, which is derived in Theorem~\ref{theorem:Fwt-Fw*}.

}

We have demonstrated that the gap between the global loss function value $F(\boldsymbol{x}_{}^T)$ and the stationary point $F(\boldsymbol{x}^*)$  is upper bounded by a function of $T$ ($T=K\tau=P\tau\pi$) which is inversely proportional to $T$. It converges with the convergence rate $\mathcal{O}\left(\frac{1}{T}\right)$ for smooth non-convex problems under non-i.i.d. data distribution. 
We also give the following observations based on the above theorems.

\begin{observation}
\textup{The overall gap in Theorem~\ref{theorem:Fwt-Fw*},  $F(\boldsymbol{x}_{}^T)-F(\boldsymbol{x}^*)$ decreases when $T$ is larger. From Appendix~\ref{app_monotone}, we have $h(x)\geq0$ for any $x=1,2,\dots,$ and it increases with $x$. According to~\eqref{eq:s(tau)}, $s(\tau)$ increases with $\tau$. According to \eqref{eq:j(tau,pi)}, $j(\tau, \pi)$ increases with $\tau$ and $\pi$. Therefore, the value of $\frac{\rho j(\tau,\pi)}{\tau\pi\varepsilon^2}$ increases with $\tau$ and $\pi$ so as to increase the overall bound $F(\boldsymbol{x}_{}^T)-F(\boldsymbol{x}^*)$. However, in order to let the Condition~(2.1) in Theorem~\ref{theorem:Fwt-Fw*} hold, we cannot set a very large $\tau$ and $\pi$, implying that convergence is guaranteed when $j(\tau,\pi)$ is below a certain threshold. 
Experiments on the effects of $\tau$ and $\pi$ further verify that larger $\tau$ and $\pi$ decreases the convergence performance. }%Similarly, condition 2 in Theorem~\ref{theorem:Fwt-Fw*} holds when $\varepsilon$ is above a positive certain threshold defined as $\varepsilon_0$. When $\varepsilon$ is small and $\varepsilon>\varepsilon_0$, i.e., when the loss function is close to the stationary point, the value of $\omega\alpha\sigma^2-\frac{\rho j(\tau,\pi)}{\tau\pi\varepsilon^2}$ is small and the overall bound increases. In this case, the algorithm finally converges to a small $\varepsilon$ but with large $T$. Conversely, the algorithm needs fewer iterations to only converge to a large $\varepsilon$. This implies a trade-off between the number of training iterations and the tightness of bound.}% {We can also observe that in Theorem~\ref{theorem:Fwt-Fw*}, the overall bound is inversely proportional to $T$ (convergence rate is $\mathcal{O}\left(\frac{1}{T}\right)$).}} %which means if $T\to\infty$, the overall bound finally converges to a sufficient small value $\varepsilon$ as assumed in condition (4) in Theorem~\ref{theorem:Fwt-Fw*}.}}
\end{observation}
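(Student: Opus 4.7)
The observation is a chain of elementary monotonicity statements assembled into a single qualitative conclusion, so my approach is to verify each sub-claim by direct inspection of the explicit formulas in Theorem~\ref{theorem:Fwt-Fw*}, with one genuinely non-trivial step flagged below. First, for the $T$-monotonicity of the bound, I would fix $\tau$ and $\pi$; then Condition~(2.1) guarantees that $\omega\alpha\sigma^{2}-\tfrac{\rho j(\tau,\pi,\delta_{\ell},\delta)}{\tau\pi\varepsilon^{2}}$ is a strictly positive constant independent of $T$, so that the right-hand side of \eqref{ieq:FwT-Fw*} has the form $1/(cT)$ with $c>0$ and is strictly decreasing in $T$, vanishing as $T\to\infty$. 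For $s(\tau)$, the expression \eqref{eq:s(tau)} is affine in $\tau$ with the positive slope $\gamma_{a}\eta\rho(\gamma\mu+\gamma+1)$ (using $\mu\geq 0$ from Appendix~\ref{app_theorem2_new}), so strict monotonicity follows immediately.

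\textbf{Monotonicity of $h$ and $j$.} For $h(x,\delta_{\ell})$, I would cite Appendix~\ref{app_monotone}, where both $h\geq 0$ and $h(x+1,\delta_{\ell})\geq h(x,\delta_{\ell})$ are established by computing the discrete forward difference of \eqref{eq:h(x)} and bounding the exponential contributions $I(\gamma A)^{x}$ and $J(\gamma B)^{x}$ against the mixed affine-geometric subtractive term for $0<\gamma<1$. Given $h$ is non-decreasing, the monotonicity of $j(\tau,\pi,\delta_{\ell},\delta)$ in each of $\tau$ and $\pi$ is immediate from \eqref{eq:j(tau,pi)}: raising $\tau$ with $\pi$ fixed enlarges $h(\tau\pi,\delta)$, $h(\tau,\delta_{\ell})$, and $s(\tau)$ simultaneously; raising $\pi$ with $\tau$ fixed enlarges $h(\tau\pi,\delta)$ and the prefactor $(\pi+1)$, while the summand $h(\tau,\delta_{\ell})+s(\tau)$ remains non-negative. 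Each step here is a one-line comparison once the $h$-monotonicity lemma is in hand.

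\textbf{Main obstacle and threshold.} The step I expect to require the most care is the assertion that the normalized quantity $\tfrac{\rho j(\tau,\pi,\delta_{\ell},\delta)}{\tau\pi\varepsilon^{2}}$ is itself increasing in $\tau$ and $\pi$, rather than merely that $j$ is: division by $\tau\pi$ need not preserve monotonicity. My plan is to exploit the exponential growth of $h$: because $I(\gamma A)^{x}+J(\gamma B)^{x}$ dominates the affine-geometric piece in \eqref{eq:h(x)}, the ratio $h(x)/x$ is non-decreasing in $x$, so $h(\tau\pi,\delta)/(\tau\pi)$ grows in both $\tau$ and $\pi$. For the remaining contribution, $s(\tau)/\tau$ is a positive constant and $h(\tau,\delta_{\ell})/\tau$ is non-decreasing by the same reasoning, while the factor $(\pi+1)/\pi=1+1/\pi$ must be carefully paired with the growth of $h(\tau\pi,\delta)/(\tau\pi)$ to offset its decrease in $\pi$. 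Once $\tfrac{\rho j}{\tau\pi\varepsilon^{2}}$ is shown monotone, the bound in \eqref{ieq:FwT-Fw*} increases in $\tau$ and $\pi$, and the threshold conclusion follows by reading Condition~(2.1) backwards: since $\omega\alpha\sigma^{2}$ is a fixed positive constant, the region in $(\tau,\pi)$ for which Theorem~\ref{theorem:Fwt-Fw*} delivers a meaningful bound is bounded, which is precisely the sense in which convergence is guaranteed only while $j(\tau,\pi)$ stays below a certain threshold.
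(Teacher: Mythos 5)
Your decomposition of the observation into elementary monotonicity checks matches what the paper actually does: the paper's own justification is precisely the chain you reproduce (non-negativity and monotonicity of $h$ from Appendix~\ref{app_monotone}, linearity of $s(\tau)$ in $\tau$, term-by-term monotonicity of $j$), and, like you, it then has to pass from ``$j(\tau,\pi)$ increases'' to ``$\tfrac{\rho j(\tau,\pi)}{\tau\pi\varepsilon^{2}}$ increases.'' The difference is that the paper simply asserts this last step, whereas you correctly flag the division by $\tau\pi$ as the genuine obstacle. Your auxiliary claim that $h(x)/x$ is non-decreasing is in fact provable from the paper's own computation: Appendix~\ref{app_monotone} shows $h(x)-h(x-1)=\gamma\eta\delta\sum_{b=1}^{x}\gamma^{x-b}u(b-1)+\eta\delta u(x-1)$ with $u$ non-negative and increasing, from which one checks the forward differences are non-decreasing, so $h$ is discretely convex with $h(0)=0$ and $h(x)/x$ is non-decreasing. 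This settles monotonicity of $j(\tau,\pi)/(\tau\pi)$ in $\tau$ for fixed $\pi$, and your treatment of the $T$-monotonicity, of $s(\tau)$, and of $j$ itself is correct.

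The gap is in the $\pi$ direction, and the ``careful pairing'' you defer to cannot be carried out unconditionally. Writing $j(\tau,\pi)/(\tau\pi)=h(\tau\pi,\delta)/(\tau\pi)+\left(1+\tfrac{1}{\pi}\right)\sum_{\ell}\tfrac{D_\ell}{D}\left(h(\tau,\delta_\ell)+s(\tau)\right)/\tau$, the second term is strictly decreasing in $\pi$ whenever $s(\tau)>0$, which holds for any $\gamma_a,\eta,\rho>0$ regardless of the data heterogeneity, while the compensating growth of $h(\tau\pi,\delta)/(\tau\pi)$ is proportional to $\delta$ and can be arbitrarily small; in the extreme case $\delta=0$ one gets $j/(\tau\pi)=\left(1+\tfrac{1}{\pi}\right)\gamma_a\eta\rho(\gamma\mu+\gamma+1)$, which is strictly decreasing in $\pi$. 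So the normalized quantity is not, in general, increasing in $\pi$, and no amount of care in the pairing will fix this without an additional lower bound on $\delta$ relative to $\gamma_a\eta\rho(\gamma\mu+\gamma+1)$. Since the statement is an informal observation rather than a theorem, the honest outcome of executing your plan is that you would rigorously establish every claim except the $\pi$-monotonicity of $\tfrac{\rho j}{\tau\pi\varepsilon^{2}}$, which holds only under extra assumptions --- a caveat that neither the paper's text nor your proposal currently records. The remaining conclusions (the threshold reading of Condition~(2.1), and the monotone dependence of the bound on $j$ itself) are unaffected.
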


%\begin{observation}\label{ob:delta}
%\textup{From \eqref{eq:h(x)}, we can find that $h(x)$ linearly increases with $\delta$. The value of $\delta$ reflects the difference of data distribution, implying that larger divergence of data distribution leads to larger gap of $h(x)$, and therefore increases the overall convergence upper bound. Experiments on the effects of number of workers $N$ (Section~\ref{sec:exp_tauN}) and different levels of non-i.i.d. data distribution (Section~\ref{sec:exp_non_IID})
%further verify the effect of gradient divergence $\delta$. }
%\end{observation}

In Theorem~\ref{theorem:Fwf-Fw*}, we further eliminate the value $\varepsilon$ in Theorem~\ref{theorem:Fwt-Fw*} and further demonstrate the bound between the final loss function value that the algorithm can obtain $F(\boldsymbol{x}^{\mathrm{f}})$ and the stationary point $F(\boldsymbol{x}^*)$, where we define 
\begin{align} \label{eq:wf}
\boldsymbol{x}^{\mathrm{f}} \triangleq \underset{\boldsymbol{x} \in\{\boldsymbol{x}_{(a)}^{p\tau\pi}: p=1,2, \ldots, P\}}{\arg \min } F(\boldsymbol{x}).
\end{align}

\begin{theorem} \label{theorem:Fwf-Fw*}
Under the following condition:  $0<\beta\eta(\gamma+1)\leq 1$, $0<\gamma< 1$, $0<\gamma_{a}<1$, and $\forall\tau, \pi \in\{ 1,2,\ldots\}$, we have %and $\alpha>0$
\begin{align} \label{ieq:Fwf-Fw*}
F(\boldsymbol{x}^{\mathrm{f}})&-F\left(\boldsymbol{x}^{*}\right)
\leq \frac{1}{2 T \omega \alpha\sigma^2}+\rho j(\tau,\pi,\delta_\ell,\delta)\\ \nonumber&+\sqrt{\frac{1}{4 T^{2} \omega^{2} \alpha^{2} \sigma^4}+\frac{\rho j(\tau,\pi,\delta_\ell,\delta)}{\omega \alpha \sigma^2\tau\pi}}\triangleq f_{HierMo}(T).
\end{align}
%\begin{align} \label{ieq:Fwf-Fw*}
%\resizebox{.99\linewidth}{!}{$
%F(\mathbf{w}^{\mathrm{f}})-F\left(\mathbf{w}^{*}\right)\leq \frac{1}{2 T \omega \alpha}+\sqrt{\frac{1}{4 T^{2} \omega^{2} \alpha^{2}}+\frac{\rho h(\tau)}{\omega \alpha \tau}}+\rho h(\tau).$}
%\end{align}
\end{theorem}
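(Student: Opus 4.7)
The plan is to eliminate the auxiliary parameter $\varepsilon$ in Theorem~\ref{theorem:Fwt-Fw*} by instantiating it in terms of $F(\boldsymbol{x}^{\mathrm{f}})-F(\boldsymbol{x}^*)$ and handling Condition~(2.1) via a dichotomy. Since $\boldsymbol{x}^{\mathrm{f}}$ minimizes $F$ over the cloud aggregation iterates $\{\boldsymbol{x}^{p\tau\pi}\}_{p=1}^{P}$, the missing link is a uniform drift bound between each actual cloud iterate and the corresponding cloud virtual update $\boldsymbol{x}_{\{p\}}^{p\tau\pi}$; with such a bound in hand, Theorem~\ref{theorem:Fwt-Fw*} can be invoked for a single carefully chosen $\varepsilon$ and the claim reduces to solving a one-variable quadratic inequality.

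First I would establish the uniform real-to-virtual cloud drift. Combining Theorems~\ref{theorem:xt-xkt} and~\ref{theorem:xgktau-xktau} yields $\|\boldsymbol{x}_{\ell+}^{p\tau\pi}-\boldsymbol{x}_{[p\pi],\ell}^{p\tau\pi}\|\leq h(\tau,\delta_\ell)+s(\tau)$ at each edge node $\ell$; averaging with weights $D_\ell/D$ and chaining with Theorem~\ref{theorem:x_ppi_ptaupi-x_p_ptaupi} via the triangle inequality produces $\|\boldsymbol{x}^{p\tau\pi}-\boldsymbol{x}_{\{p\}}^{p\tau\pi}\|\leq j(\tau,\pi,\delta_\ell,\delta)$ --- which is precisely why the factor $(\pi+1)$ (rather than $\pi$) appears in~\eqref{eq:j(tau,pi)}. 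The $\rho$-Lipschitz property of $F$ then gives $|F(\boldsymbol{x}^{p\tau\pi})-F(\boldsymbol{x}_{\{p\}}^{p\tau\pi})|\leq \rho\, j(\tau,\pi,\delta_\ell,\delta)$ for every $p$.

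Next, set $\varepsilon \triangleq F(\boldsymbol{x}^{\mathrm{f}})-F(\boldsymbol{x}^*)-\rho\, j(\tau,\pi,\delta_\ell,\delta)$. If $\varepsilon\leq 0$ then $F(\boldsymbol{x}^{\mathrm{f}})-F(\boldsymbol{x}^*)\leq\rho\, j$, which is dominated by the right-hand side of~\eqref{ieq:Fwf-Fw*}, so assume $\varepsilon>0$. By Step~1 and the defining property of $\boldsymbol{x}^{\mathrm{f}}$, $F(\boldsymbol{x}_{\{p\}}^{p\tau\pi})-F(\boldsymbol{x}^*) \geq F(\boldsymbol{x}^{p\tau\pi})-F(\boldsymbol{x}^*)-\rho\, j \geq F(\boldsymbol{x}^{\mathrm{f}})-F(\boldsymbol{x}^*)-\rho\, j = \varepsilon$ for every $p$, verifying Condition~(2.2); Condition~(2.3) is immediate from $F(\boldsymbol{x}^{T})\geq F(\boldsymbol{x}^{\mathrm{f}})\geq F(\boldsymbol{x}^*)+\varepsilon$. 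Now split on Condition~(2.1). If $\omega\alpha\sigma^{2}\leq\rho\, j/(\tau\pi\varepsilon^{2})$ then $\varepsilon\leq\sqrt{\rho\, j/(\omega\alpha\sigma^{2}\tau\pi)}$ and adding $\rho\, j$ yields a bound dominated by~\eqref{ieq:Fwf-Fw*}; otherwise all three conditions of Theorem~\ref{theorem:Fwt-Fw*} hold, so $\varepsilon\leq 1/[T(\omega\alpha\sigma^{2}-\rho\, j/(\tau\pi\varepsilon^{2}))]$, and after clearing denominators one obtains the quadratic $T\omega\alpha\sigma^{2}\varepsilon^{2}-\varepsilon-T\rho\, j/(\tau\pi)\leq 0$, whose positive root is $\varepsilon\leq \tfrac{1}{2T\omega\alpha\sigma^{2}}+\sqrt{\tfrac{1}{4T^{2}\omega^{2}\alpha^{2}\sigma^{4}}+\tfrac{\rho\, j}{\omega\alpha\sigma^{2}\tau\pi}}$. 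Adding $\rho\, j(\tau,\pi,\delta_\ell,\delta)$ back recovers exactly $f_{HierMo}(T)$.

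The delicate step is the choice of $\varepsilon$: the single additive shift $\rho\, j$ must simultaneously absorb the virtual-iterate drift at \emph{all} $P$ cloud intervals, so that one common $\varepsilon$ certifies Conditions~(2.2) and~(2.3) at once. Without the uniform bound from Step~1 --- which forces the $(\pi+1)$ factor in~\eqref{eq:j(tau,pi)} and requires carefully chaining Theorems~\ref{theorem:xt-xkt}, \ref{theorem:xgktau-xktau}, and~\ref{theorem:x_ppi_ptaupi-x_p_ptaupi} across three tiers --- this reduction would not go through; everything remaining is standard algebra on a one-variable quadratic.
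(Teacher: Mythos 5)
Your proposal is correct and follows essentially the same route as the paper: both arguments eliminate $\varepsilon$ from Theorem~\ref{theorem:Fwt-Fw*} by combining the uniform real-to-virtual drift bound $F(\boldsymbol{x}^{p\tau\pi}) \leq F(\boldsymbol{x}_{\{p\}}^{p\tau\pi}) + \rho j(\tau,\pi,\delta_\ell,\delta)$ (the source of the $(\pi+1)$ factor) with the positive root of the quadratic $T\omega\alpha\sigma^2\varepsilon^2 - \varepsilon - T\rho j/(\tau\pi) \leq 0$. The only difference is organizational: the paper defines $\varepsilon_0$ as that root and argues by contradiction that Conditions~(2.2)--(2.3) cannot both hold for any $\varepsilon>\varepsilon_0$, whereas you instantiate $\varepsilon = F(\boldsymbol{x}^{\mathrm{f}})-F(\boldsymbol{x}^*)-\rho j$ directly, verify the conditions, and read off the same inequality.
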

\begin{proof}
See Appendix~\ref{app_theorem5} for complete proof.
\end{proof}

Theorem~\ref{theorem:Fwf-Fw*} will be used in {Section \ref{sec:compare_mom}} and Section \ref{sec:optmization} to { compare the convergence upper bound and} formulate the optimization problem respectively. 

{

\subsection{Comparison between HierMo and HierFAVG}\label{sec:compare_mom}
In this section, we theoretically quantify the performance gain brought by HierMo compared with HierFAVG (without momentum). The convergence upper bound of HierFAVG can be derived from \cite{liu2020HierFAVG} as follows:
\begin{align} \label{f_FAVG}
F(\hat{\boldsymbol{x}}^{\mathrm{f}})&-F\left(\boldsymbol{x}^{*}\right)
\leq \frac{1}{2 T \omega \hat{\alpha}\sigma^2}+\rho \hat{j}(\tau,\pi,\delta_\ell,\delta)\\ \nonumber&+\sqrt{\frac{1}{4 T^{2} \omega^{2} \hat{\alpha}^{2} \sigma^4}+\frac{\rho \hat{j}(\tau,\pi,\delta_\ell,\delta)}{\omega \hat{\alpha} \sigma^2\tau\pi}}\triangleq f_{HierFAVG}(T).
\end{align}
% where
% \begin{align}\label{eq:hat_h(x)}
% \hat{h}(\tau)&=\frac{\delta}{\beta}\left((\eta \beta+1)^{\tau}-1\right)-\eta\delta\tau,\\
% \hat{\alpha}&\triangleq\eta\left(1-\frac{\beta\eta}{2}\right).\nonumber
% \end{align}
% Please note that $\rho, \beta, \tau, \omega$, and $\eta$ are defined the same way as those in FedNAG in this paper. $\hat{\alpha}$ and $\hat{h}(\cdot)$ are defined differently, but with similar meanings as $\alpha$ and $h(\cdot)$ in this paper.
The definitions of $\hat{\alpha}$ and $\hat{j}(\cdot)$ can be found in \cite{liu2020HierFAVG}.

%It is easy to find that the definitions of $\hat{h}(\cdot), \delta$ and $\omega$, which can be founded in [shiqiang] are equivalent to the definitions of $h(\cdot), \delta$ and $\omega$.

% Although FedNAG has the same convergence rate as FedAvg, we can still compare the convergence performance by comparing the convergence upper bound for a given $T$. In order to make a fair comparison, we let FedAvg and FedNAG trained under the same environment using the same configuration. Here, we note that $\delta$ and $\omega$ reflect the properties of data distribution. We assume  the dataset is distributed in each worker in the same way in FedAvg and FedNAG, so that the values of $\omega$ and $\delta$ are same. The loss function $F_{i}(\cdot)$, $F(\cdot)$, constants $\rho$ and $\beta$, and hyper-parameters $\tau$ and $\eta$ are the same. We also set the same initial value for $\mathbf{w}^{\mathrm{f}}$, $\mathbf{w}_i(0)$ for FedAvg and FedNAG. The only new term in FedNAG is $\mathbf{v}_i(t)$, and we  set $\mathbf{v}_i(0)=\mathbf{0}$.

% We use $f_1(T)$ and $f_2(T)$ to define the convergence upper bound of FedNAG and FedAvg respectively. Small function value implies better convergence performance. 
% \begin{align}
% f_1(T) &\triangleq \frac{1}{2 T \omega \alpha}+\sqrt{\frac{1}{4 T^{2} \omega^{2} \alpha^{2}}+\frac{\rho h(\tau)}{\omega \alpha \tau}}+\rho h(\tau),\\
% f_2(T) &\triangleq \frac{1}{2T\omega\hat{\alpha}}+\sqrt{\frac{1}{4T^2 \omega^{2} \hat{\alpha}^{2}}+\frac{\rho \hat{h}(\tau)}{\omega \hat{\alpha} \tau}}+\rho \hat{h}(\tau).
% \end{align}

To prevent the gradient descent from overshooting \cite{goodfellow2016deep}, it is common to choose a very small $\eta$. The following theorem is made when $\eta\to 0^+$.

\begin{theorem} \label{theorem:fast}
%When $0<\beta\eta(\gamma+1)\leq 1$ and $0<\gamma< 1$, 
When $0<\beta\eta(\gamma+1)\leq 1$, $0<\gamma< 1$, $0<\gamma_{a}<1$, and $\forall\tau, \pi \in\{ 1,2,\ldots\}$, HierMo outperforms HierFAVG, i.e.,
\begin{align*}
    f_{HierFAVG}(T) - f_{HierMo}(T) > 0 
\end{align*}
for any $T$ and  $\eta\to 0^+$.
\end{theorem}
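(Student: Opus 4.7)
The overall plan is to analyze the limit of $f_{HierFAVG}(T) - f_{HierMo}(T)$ as $\eta \to 0^+$, show it is strictly positive, and then conclude by continuity of both bounds in $\eta$ that the difference remains positive for all sufficiently small $\eta$. Since \eqref{ieq:Fwf-Fw*} and \eqref{f_FAVG} share an identical functional form, the problem decomposes cleanly into identifying which ingredient differs between HierMo and HierFAVG, and then isolating the leading $\eta$-dependence of each summand.

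First I would catalogue how each ingredient scales with $\eta$. From \eqref{eq:h(x)}, $h(x,\delta_\ell) = \eta\delta_\ell(\cdots)$ with a bracketed factor that is uniformly bounded, and from \eqref{eq:s(tau)}, $s(\tau) = \gamma_a\tau\eta\rho(\gamma\mu+\gamma+1)$; so \eqref{eq:j(tau,pi)} gives $j(\tau,\pi,\delta_\ell,\delta) = O(\eta)$, and the HierFAVG counterpart $\hat{j}$ scales the same way via \cite{liu2020HierFAVG}. In particular $\rho j$ and $\rho \hat{j}$ both vanish as $\eta \to 0^+$. Next, for the square-root terms I would write them as $\sqrt{A + B}$ with $A = 1/(4T^2\omega^2\alpha^2\sigma^4)$ and $B = \rho j/(\omega\alpha\sigma^2\tau\pi)$; a direct check shows $B/A \to 0$ as $\eta \to 0^+$, so a first-order expansion gives $\sqrt{A+B} = \sqrt{A} + O(\eta) = 1/(2T\omega\alpha\sigma^2) + O(\eta)$. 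Combining the three summands of \eqref{ieq:Fwf-Fw*} then yields
\begin{equation*}
f_{HierMo}(T) = \frac{1}{T\omega\alpha\sigma^2} + O(\eta),
\end{equation*}
and the same expansion, with $\alpha$ replaced by $\hat{\alpha}$, holds for $f_{HierFAVG}(T)$.

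Subtracting, the limiting difference reduces to $(\alpha - \hat{\alpha})/(T\omega\sigma^2\alpha\hat{\alpha})$ plus a vanishing remainder, so the entire claim collapses to proving the inequality $\alpha > \hat{\alpha}$. This is the main obstacle. I would unpack the definition of $\alpha$ from Appendix~\ref{app_theorem4} and compare it term by term against the HierFAVG analogue $\hat{\alpha}$ in \cite{liu2020HierFAVG}. Structurally the two quantities play the same role of effective per-iteration progress, but $\alpha$ inherits multiplicative contributions from the worker momentum factor $\gamma$ and the edge momentum factor $\gamma_a$, both in $(0,1)$, which originate from the Nesterov acceleration steps in Lines~\ref{eq:xit} and~\ref{eq:xgktau} of Algorithm~\ref{alg:HierMo}; schematically this shows up as an extra multiplier of order $1+\gamma$ (and an analogous edge-level contribution from $\gamma_a$) that is absent from $\hat{\alpha}$. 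Using the hypothesis $\beta\eta(\gamma+1)\le 1$ to discard higher-order-in-$\eta$ cross terms and the strict bounds $0<\gamma,\gamma_a<1$ to keep all factors positive, I expect the comparison to deliver $\alpha > \hat{\alpha}$ strictly. Once this inequality is established, the limiting difference is strictly positive (and in fact diverges like $1/\eta$ if $\alpha,\hat{\alpha}$ scale linearly with $\eta$), so by continuity of $f_{HierFAVG}(T) - f_{HierMo}(T)$ in $\eta$ there exists $\eta_0 > 0$ such that the difference is positive for all $\eta \in (0,\eta_0)$, which is exactly the claim.
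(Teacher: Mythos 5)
Your overall architecture is the same as the paper's: send $\eta\to 0^+$, argue that $\rho j$ and $\rho\hat{j}$ vanish, reduce both bounds to $\frac{1}{T\omega\alpha\sigma^2}$ and $\frac{1}{T\omega\hat{\alpha}\sigma^2}$, and conclude from $\alpha>\hat{\alpha}$. There is, however, a genuine gap at the first step. You justify $j=O(\eta)$ by claiming that the bracketed factor in \eqref{eq:h(x)} is ``uniformly bounded.'' As written, that bracket contains $-\tfrac{1}{\eta\beta}$, which diverges, and the coefficient $I$ also diverges like $\tfrac{1}{\beta\eta}$, since $\gamma A\to 1$ as $\eta\to 0^+$ and $I\sim\frac{1}{(1-\gamma)(\gamma A-1)}$. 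The bracket does remain bounded, but only because of an exact cancellation: one needs the identity $I+J=\tfrac{1}{\eta\beta}$ to rewrite it as $I\left((\gamma A)^{x}-1\right)+J\left((\gamma B)^{x}-1\right)$ minus the polynomial term, and then a L'H\^{o}pital (or Taylor) computation of $\lim_{\eta\to0^+}\eta/(\gamma A-1)=(1-\gamma)/\beta$ to see that $I\left((\gamma A)^{x}-1\right)$ stays finite. This cancellation is precisely the technical content of the paper's proof in Appendix~\ref{app_theorem6}; asserting boundedness of the bracket without it is assuming the hard part.

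The reduction to $\alpha>\hat{\alpha}$ matches the paper, which likewise asserts this inequality rather than deriving it, so you are not worse off there. But your heuristic for it (an ``extra multiplier of order $1+\gamma$'') overlooks that $\alpha$ as defined in Appendix~\ref{app_theorem4} also carries the negative momentum penalties $-\tfrac{\beta\eta^{2}\gamma^{2}\mu^{2}}{2}-\eta\gamma\mu\left(1-\beta\eta(\gamma+1)\right)$; to leading order $\alpha\approx\eta(1+\gamma-\gamma\mu)$ versus $\hat{\alpha}\approx\eta$, so the strict inequality hinges on $\mu<1$ for $\mu$ defined in \eqref{app:mu}, a condition neither you nor the paper actually verifies. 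Your closing step, noting that the difference diverges like $1/\eta$ and invoking continuity to get positivity for all sufficiently small $\eta$, is fine and in fact slightly more careful than the paper's purely limiting statement.
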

\begin{proof}
See Appendix \ref{app_theorem6} for detailed proof.
\end{proof}
The above theorem indicates that HierMo leads to a tighter convergence upper bound compared with HierFAVG, showing that HierMo theoretically outperforms HierFAVG. 

}

\section{Aggregation Period Optimization by HierOPT}\label{sec:optmization}
We have proved that HierMo is convergent in section~\ref{sec:theory}. We observe that the worker-edge and edge-cloud aggregation periods $\tau$ and $\pi$ are two key design variables that will influence the convergence performance. The values of $\tau$ and $\pi$ will also influence the usage of communication and computation resources in the real-world  training process. Therefore, we aim to optimize these two variables and formulate an optimization problem: Under a given total training time denoted as $\Psi$, how the HierMo algorithm achieves the best performance (min global model loss). %subject to $\tau$ and $\pi$.

We denote the worker computation delay for one iteration as $\Theta_{w}$, edge computation delay for one edge aggregation as $\Theta_{e}$, and cloud computation delay for one cloud aggregation as $\Theta_{c}$. %We also denote the one way worker communication delay to transmit one value to the edge as $\Phi_{w2e}$ and one way edge communication delay to transmit one value to the cloud as $\Phi_{e2c}$. 
We also denote the %two ways (uploading and downloading) 
worker communication delay to the edge as $\Phi_{w2e}$ and %two ways 
edge communication delay to the cloud as $\Phi_{e2c}$. All the above values are assumed to be given as they can be measured in the real world. %Such assumption is  commonly adopted in the literature [cites]. %which will be introduced in Section XX. 
We assume each worker $\{i,\ell\}$ communicates with connected edge node $\ell$ in parallel and each edge node $\ell$ communicates with cloud in parallel~\cite{liu2020HierFAVG,dinh2020proxVR,liu2021hierQSGD}. The above assumptions are commonly adopted in the literature~\cite{wang2019adaptive,dinh2020proxVR}. As a result, the total training time for HierMo is calculated as follows
\begin{align}
    \Psi\triangleq P\cdot\left(\tau\pi \Theta_{w}+\pi \Theta_{e}+ \Theta_{c} + \pi \Phi_{w2e} + \Phi_{e2c}\right),
\end{align}
where $P$ is the total number of cloud aggregations ($P=\frac{T}{\tau\pi}$). 
%\begin{align}
%    \Psi\triangleq P\left(\tau\pi \Theta_{w}+\pi \Theta_{e}+ \Theta_{c} + 4\pi \Phi_{w2e} + 4 \Phi_{e2c}\right),
%\end{align}
%where the number $4$ means two values transmission workload multiple two ways transmission (upload and download).

In order to find the optimal pair of $(\tau, \pi)$, we target to minimize \eqref{ieq:Fwf-Fw*}, where \eqref{ieq:Fwf-Fw*} demonstrates the bound between the global loss and the stationary point~\cite{wang2019adaptive,liu2020HierFAVG}. By incorporating the constraints, the optimization problem can be formulated as follows
%Since for a given global loss function $F(\boldsymbol{x})$, its minimum value $F(\boldsymbol{x}^*)$ is a constant, the minimization of $F(\boldsymbol{x})$ in \eqref{eq:wf} is to equivalent to minimize \eqref{ieq:Fwf-Fw*} [cite]. Then, by adding the constraints, the optimization problem can be formulated as follows:
\begin{align}
&\min _{\tau, \pi}  \frac{1}{2 T \omega \alpha\sigma^2}+\rho j(\tau,\pi,\delta_\ell,\delta)\label{opt:obj}\\
&\quad\quad\quad\quad\quad\quad\quad+\sqrt{\frac{1}{4 T^{2} \omega^{2} \alpha^{2} \sigma^4}+\frac{\rho j(\tau,\pi,\delta_\ell,\delta)}{\omega \alpha \sigma^2\tau\pi}},\nonumber\\
&\text { s.t. } P\cdot(\tau\pi \Theta_{w}+\pi \Theta_{e}+ \Theta_{c} + \pi \Phi_{w2e} + \Phi_{e2c})=\Psi, \tag{\ref{opt:obj}{a}}\label{opt:math_T}\\
%&\quad\quad\quad\quad\quad\quad\quad\quad\quad\quad\tau\in\mathbb{Z}^+\\
%&\quad\quad\quad\quad\quad\quad\quad\quad\quad\quad\pi\in\mathbb{Z}^+
&\quad\quad\quad\quad\quad\quad\quad\quad\quad\quad T=P\tau\pi\tag{\ref{opt:obj}{b}},\label{opt:T}\\
&\quad\quad\quad\quad\quad\quad\quad\quad\quad\quad\tau\geq1,\tag{\ref{opt:obj}{c}}\\
&\quad\quad\quad\quad\quad\quad\quad\quad\quad\quad\pi\geq1.\tag{\ref{opt:obj}{d}}
\end{align}
From constraints \eqref{opt:math_T} and \eqref{opt:T}, we obtain
\begin{align}
    \frac{1}{T}=\frac{\Theta_{e}+\Phi_{w2e}}{\Psi}\frac{1}{\tau}+\frac{\Theta_c+\Phi_{e2c}}{\Psi}\frac{1}{\tau\pi}+\frac{\Theta_{w}}{\Psi}\label{opt:1/T}.
\end{align}
%From \eqref{opt:1/T}, we further obtain 
%\begin{align}
%    \frac{1}{\pi}=\left(\frac{\Psi}{(\Theta_c+\Phi_{e2c})T}-\frac{\Theta_{w}}{\Theta_c+\Phi_{e2c}}\right)\tau-\frac{\Theta_{e}+\Phi_{w2e}}{\Theta_c+\Phi_{e2c}}
%\end{align}
Substituting \eqref{opt:1/T} into \eqref{opt:obj}, we can eliminate the equation constraints. We also define
\begin{align}
    q(\tau,\pi)&\triangleq \frac{1}{2T\omega\alpha\sigma^2}\\
    &=\frac{\Theta_{e}+\Phi_{w2e}}{2\Psi\omega \alpha\sigma^2}\frac{1}{\tau}+\frac{\Theta_c+\Phi_{e2c}}{2\Psi\omega \alpha\sigma^2}\frac{1}{\tau\pi}+\frac{\Theta_{w}}{2\Psi\omega \alpha\sigma^2}.\nonumber
\end{align}
\begin{algorithm}[b!t]\small
\caption{HierOPT algorithm.}
\label{alg:optimization}
\textbf{Input}: $\Psi, \Theta_{w}, \Theta_{e}, \Theta_{c}, \Phi_{w2e}, \Phi_{e2c}$\\
\textbf{Output}: $\tau^*$ and $\pi^*$
\begin{algorithmic}[1] %[1] enables line numbers
\STATE Initialize $\tau_0$ and $\pi_0$ as random positive integers, $i=0$ as the index of search iteration.% and \texttt{FLAG}=$True$ to determine when to end the algorithm.
%\STATE For the aggregator, initialize: $\boldsymbol{x}_g^0=\boldsymbol{x}_i^0$, and $\boldsymbol{y}_g^0=\boldsymbol{x}_g^0$
\WHILE{$true$}
\STATE Calculate $\mathcal{R}^\prime(\tau_i)$\label{alg2:tau_start}
\IF{$\mathcal{R}^\prime(\tau_i)>0$ } 
    \STATE $\tau_{i+1}\gets\max\{\tau_i-1, 1\}$\label{alg2:tau_>0}
\ELSIF{$\mathcal{R}^\prime(\tau_i)<0$}
    \STATE $\tau_{i+1}\gets\tau_i+1$\label{alg2:tau_<0}
%\ELSE
%    \STATE $\tau_{i+1}\gets\tau_i$
\ENDIF\label{alg2:tau_end}
\STATE Calculate $\mathcal{R}^\prime(\pi_i)$\label{alg2:pi_start}
\IF{$\mathcal{R}^\prime(\pi_i)>0$ }
    \STATE $\pi_{i+1}\gets\max\{\pi_i-1,1\}$\label{alg2:pi_>0}
\ELSIF{$\mathcal{R}^\prime(\pi_i)<0$}
    \STATE $\pi_{i+1}\gets\pi_i+1$\label{alg2:pi_<0}
%\ELSE
    %\STATE $\pi_{i+1}\gets\pi_i$
\ENDIF\label{alg2:pi_end}
\STATE Record $(\tau_i,\pi_i)$
\IF{the pair of values ($\tau_i$, $\pi_i$) is visited before}\label{alg2:visit_start}
    \STATE Set $\tau^*\gets\tau_i$ and $\pi^*\gets\pi_i$
    \STATE \texttt{BREAK}
\ENDIF\label{alg2:visit_end}
\STATE $i\gets i+1$
\ENDWHILE
\end{algorithmic}
\end{algorithm}\noindent
The problem \eqref{opt:obj} can be re-formulated as
\begin{align}
\min _{\tau, \pi}  q(\tau,\pi)+\rho j(\tau,\pi,\delta_\ell,\delta)&+\sqrt{q^2(\tau,\pi)+\frac{\rho j(\tau,\pi,\delta_\ell,\delta)}{\omega \alpha \sigma^2\tau\pi}},\label{opt:obj_new}\\
\text {  s.t. }\tau&\geq1\tag{\ref{opt:obj_new}{a}},\label{opt:tau}\\
\pi&\geq1\tag{\ref{opt:obj_new}{b}}.\label{opt:pi}
\end{align}

It is non-trivial to find a closed-form optimal pair of $(\tau, \pi)$ in the three-tier hierarchical FL because problem \eqref{opt:obj_new} 
%is non-convex
includes both polynomial and exponential terms of $\tau$ and $\pi$, where the exponential term is nest-embedded in $h(\cdot)$ that is embedded in $j(\cdot)$. Even if for  a two-tier FL problem, the objective function of the bound is complicated, and it is still infeasible to find an optimal solution in closed form~\cite{wang2019adaptive,dinh2020proxVR}. %Since the optimal $\tau$ and $\pi$ can only be the positive integer, 
In what follows, we propose the Hierarchical Optimizing Periods (HierOPT) algorithm to find a local optimal solution to problem~\eqref{opt:obj_new}. 

%The same methodology also appears in typical two-tier FL which helps to find the optimal solution when the objective function is too complected [cites].

In Algorithm~\ref{alg:optimization}, for convenience, we define the objective function \eqref{opt:obj_new} as $\mathcal{R}(\tau,\pi)$ with respect to $\tau$ and $\pi$. We also define the partial derivative of $\tau$ and $\pi$ as $\mathcal{R}^\prime(\tau)$ and $\mathcal{R}^\prime(\pi)$ respectively. Since $\mathcal{R}(\tau,\pi)$ is in  closed-form, $\mathcal{R}^\prime(\tau)$ and $\mathcal{R}^\prime(\pi)$ are also in  closed-form and can be calculated numerically given any $\pi$ and $\tau$ respectively. Algorithm \ref{alg:optimization} is  operated as follows: \tikztextcircle{1} We take turns to calculate $\mathcal{R}^\prime(\tau)$ (Lines~\ref{alg2:tau_start}--\ref{alg2:tau_end}) and $\mathcal{R}^\prime(\pi)$ (Lines~\ref{alg2:pi_start}--\ref{alg2:pi_end}). When the gradient is greater than zero, implying that the objective function has the trend to increase, we decrease the value by $1$ (Lines~\ref{alg2:tau_>0} and \ref{alg2:pi_>0}). When the gradient is less than zero, implying that the objective function has the trend to decrease, we increase the value by $1$ (Lines~\ref{alg2:tau_<0} and \ref{alg2:pi_<0}). Due to constraints \eqref{opt:tau} and \eqref{opt:pi}, we restrict the values of $\tau$ and $\pi$ to be equal or greater than $1$. \tikztextcircle{2} If the pair of value ($\tau$, $\pi$) is visited before (Lines \ref{alg2:visit_start}--\ref{alg2:visit_end}), it means  Algorithm~\ref{alg:optimization} converges and $(\tau,\pi)$ oscillates within a number of feasible value pairs (because $\tau$ and $\pi$ can only be integers). In this case, we find a local optimal pair of $(\tau^*, \pi^*)$ and we can exit the algorithm. %such pair of $(\tau, \pi)$ makes the gradient of the objective function to be zero. In this case, we find the optimal pair of $(\tau^*, \pi^*)$ and exit the while loop. Otherwise, start a new loop. 

\section{Experimental Results}\label{sec:exp}
In this section, we evaluate the convergence performance of HierMo compared with three typical categories of benchmark algorithms: \tikztextcircle{1} three-tier FL without momentum (HierFAVG~\cite{liu2020HierFAVG} and {CFL~\cite{wang2021CFL}}), \tikztextcircle{2} two-tier FL with momentum (DOMO~\cite{xu2021DOMO}, {FedADC~\cite{FedADC}}, FedMom~\cite{huo2020FedMom}, SlowMo~\cite{Wang2020SlowMo}, FedNAG~\cite{yang2020FedNAG}, and Mime~\cite{karimireddy2020mime}), and \tikztextcircle{3} two-tier FL without momentum (FedAvg~\cite{mcmahan2017FedAvg}).
For the two-tier benchmarks, we assume that the edge nodes do not exist and the workers are directly connected to the cloud. %In all situations, the data are  non-i.i.d.. 
We then discuss the effects of $\tau$ and $\pi$ respectively and their joint effects. %Afterwards, we evaluate the impact of interactions between $\gamma$ and $\gamma_{g}$. Then, 
{Afterwards, we explicitly quantify different levels of non-i.i.d. data and analyze their effects.} %when training FedAdMo and benchmark algorithms. 
Finally, we perform a trace-driven simulation of the 
three-tier hierarchical 
FL environment as if real-world hierarchical FL is implemented so that we can test the overall training time. Through this way, we verify that $(\tau^*, \pi^*)$ derived in Section~\ref{sec:optmization} leads to near-optimal performance in the realistic scenario. 

%to find the local optimal pair of $(\tau, \pi)$ for a given total training delay (including computation delay and communication delay).

\begin{table*}[htb!]
  \caption{Performance comparison of different FL algorithms (accuracy \%).}% Global loss is in Appendix.}
  \label{tab:exp_compare}
  \centering
  \begin{tabular}{p{0.2\columnwidth}p{0.2\columnwidth}p{0.2\columnwidth}p{0.2\columnwidth}p{0.2\columnwidth}p{0.2\columnwidth}p{0.2\columnwidth}p{0.2\columnwidth}}
  \toprule
               & Linear on \newline MNIST & Logistic on \newline MNIST & CNN on \newline MNIST & CNN on\newline CIFAR10 & VGG16 on\newline CIFAR10 & ResNet18 on\newline ImageNet & {CNN on \newline UCI-HAR}\\
    \midrule
    HierMo  & $\pmb{85.97}\pm0.03$ & $\pmb{89.23}\pm0.04$& $\pmb{96.13}\pm0.07$   & $\pmb{64.18}\pm0.08$  & $\pmb{90.06}\pm0.15$    & $\pmb{69.64}\pm0.12$ & {$\pmb{88.36}\pm0.06$}\\
    HierFAVG \cite{liu2020HierFAVG}    & $83.62\pm0.03$    & $87.00\pm0.05$    & $93.40\pm0.07$      & $38.46\pm0.13$      & $89.46\pm0.12$      & $68.63\pm0.10$ & {$54.56\pm0.11$}\\
    {CFL} \cite{wang2021CFL}         & {$83.36\pm0.04$}    & {$86.98\pm0.06$}    & {$93.58\pm0.06$}      & {$38.79\pm0.11$}      & {$89.80\pm0.11$}      & {$68.87\pm0.09$} & {$69.19\pm0.09$}\\    
    DOMO \cite{xu2021DOMO}        & $85.79\pm0.05$    & $89.02\pm0.05$    & $95.90\pm0.05$      & $59.39\pm0.07$      & $88.53\pm0.09$      & $67.05\pm0.10$ & {$88.15\pm0.06$}\\
    {FedADC} \cite{FedADC}     & {$85.51\pm0.04$}    & {$88.18\pm0.05$}    & {$95.09\pm0.07$}      & {$56.00\pm0.11$}      & {$89.38\pm0.08$}      & {$67.76\pm0.12$} & {$85.14\pm0.09$}\\
    FedMom \cite{huo2020FedMom}      & $84.84\pm0.06$    & $88.05\pm0.05$    & $94.74\pm0.05$      & $54.87\pm0.07$      & $88.03\pm0.10$      & $66.91\pm0.11$ & {$84.69\pm0.07$}\\
    SlowMo \cite{Wang2020SlowMo}     & $84.82\pm0.06$    & $88.00\pm0.06$    & $94.88\pm0.05$      & $54.43\pm0.06$      & $88.47\pm0.09$      & $66.84\pm0.09$ & {$83.03\pm0.10$}\\
    FedNAG \cite{yang2020FedNAG}     & $84.97\pm0.04$    & $88.14\pm0.05$    & $95.04\pm0.06$      & $55.54\pm0.09$      & $88.33\pm0.06$      & $66.81\pm0.14$ & {$84.69\pm0.06$}\\
    Mime \cite{karimireddy2020mime}       & $84.41\pm0.06$    & $87.73\pm0.06$    & $93.89\pm0.08$      & $48.24\pm0.15$      & $81.76\pm0.11$      & $64.33\pm0.21$ & {$76.75\pm0.11$}\\
    FedAvg \cite{mcmahan2017FedAvg}     & $83.57\pm0.04$    & $86.89\pm0.05$    & $93.31\pm0.08$      & $37.79\pm0.19$      & $88.27\pm0.15$      & $66.59\pm0.09$ & {$53.31\pm0.12$}\\
    \bottomrule
  \end{tabular}
\end{table*}

\begin{figure*}[htb!]
    \centering
    \begin{minipage}[t]{0.241\textwidth}
        \centering
        \includegraphics[width=\textwidth]{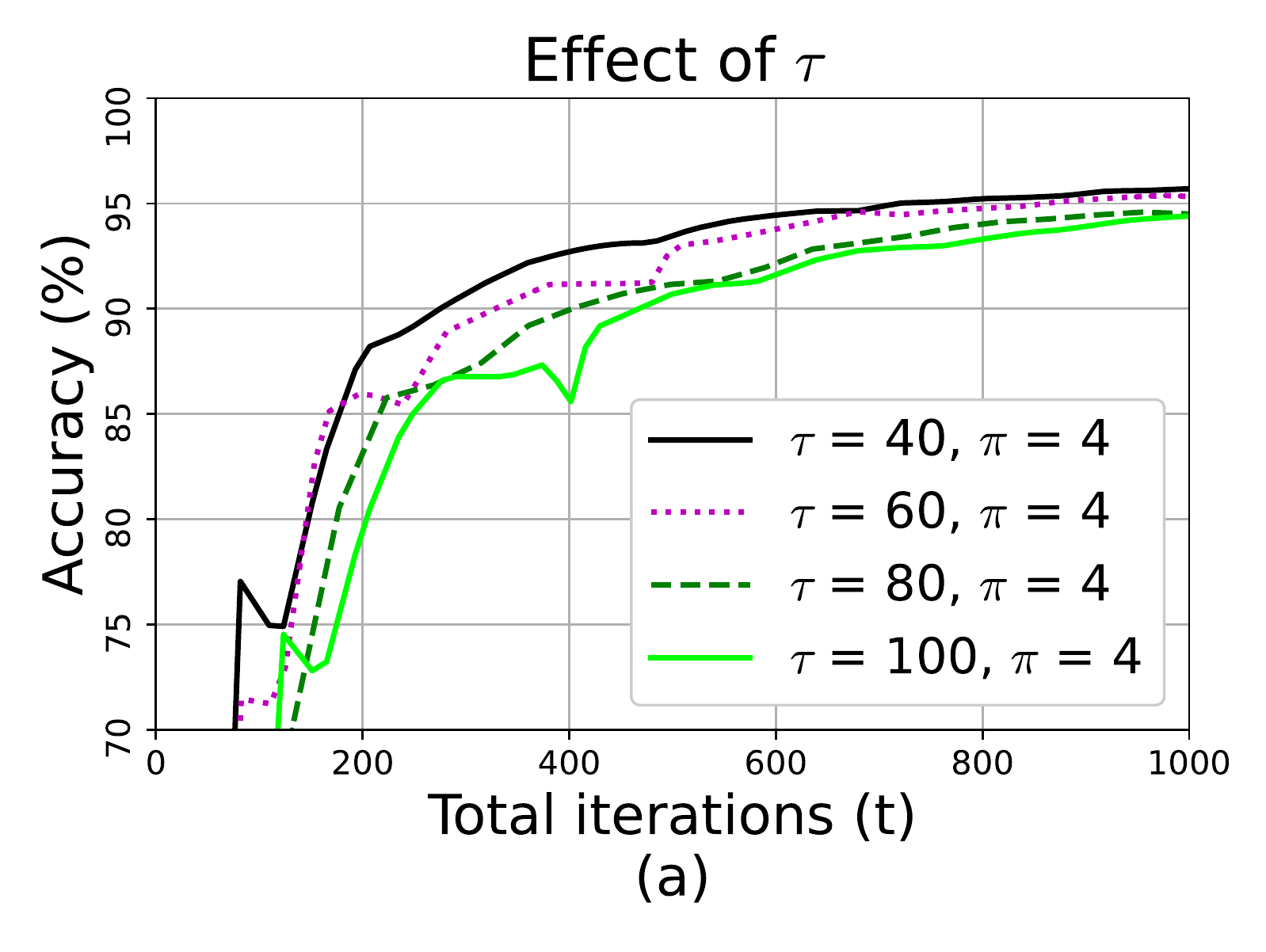}
    \end{minipage}
    \begin{minipage}[t]{0.241\textwidth}
        \centering
        \includegraphics[width=\textwidth]{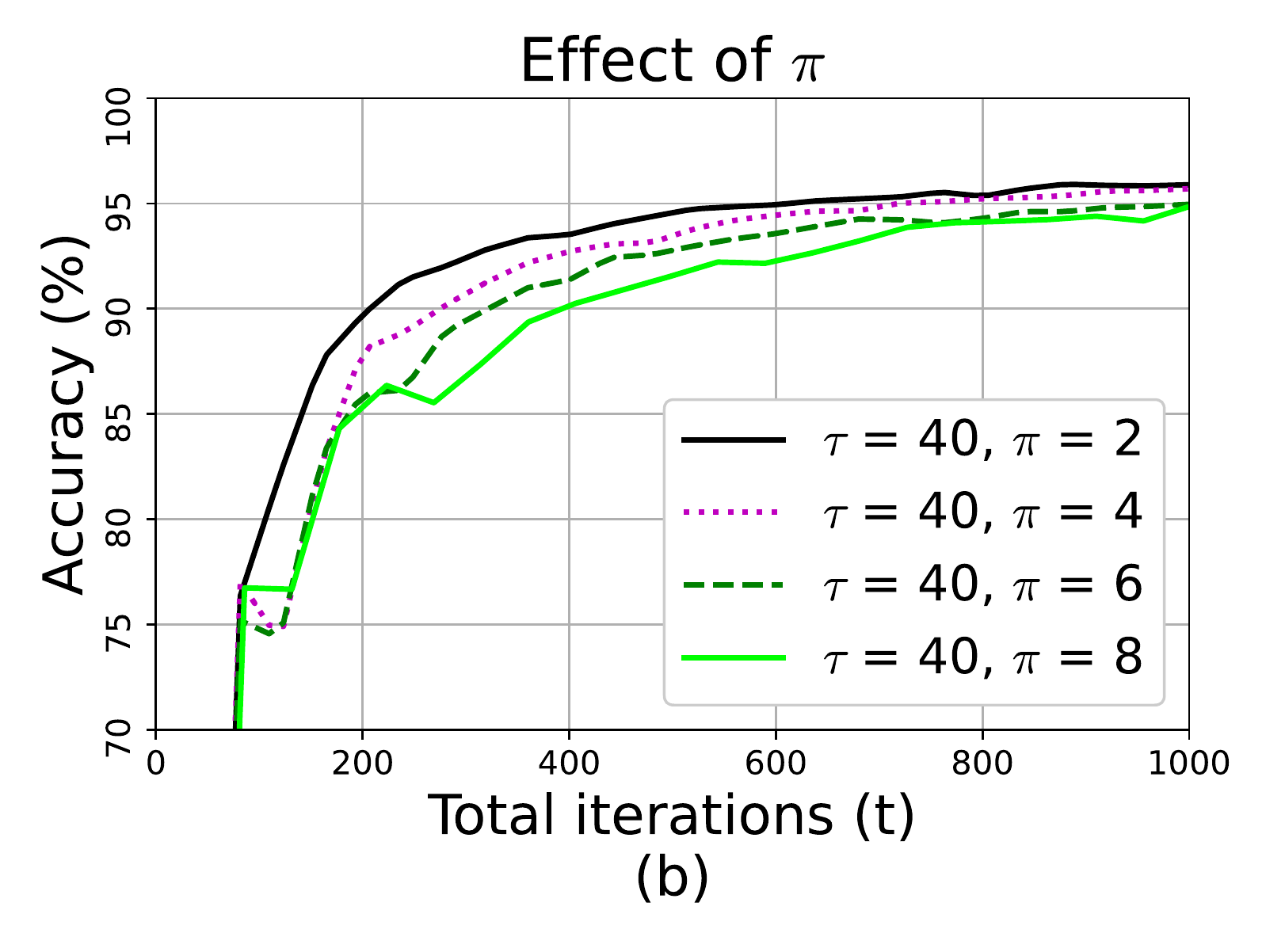}
    \end{minipage}
    \begin{minipage}[t]{0.241\textwidth}
        \centering
        \includegraphics[width=\textwidth]{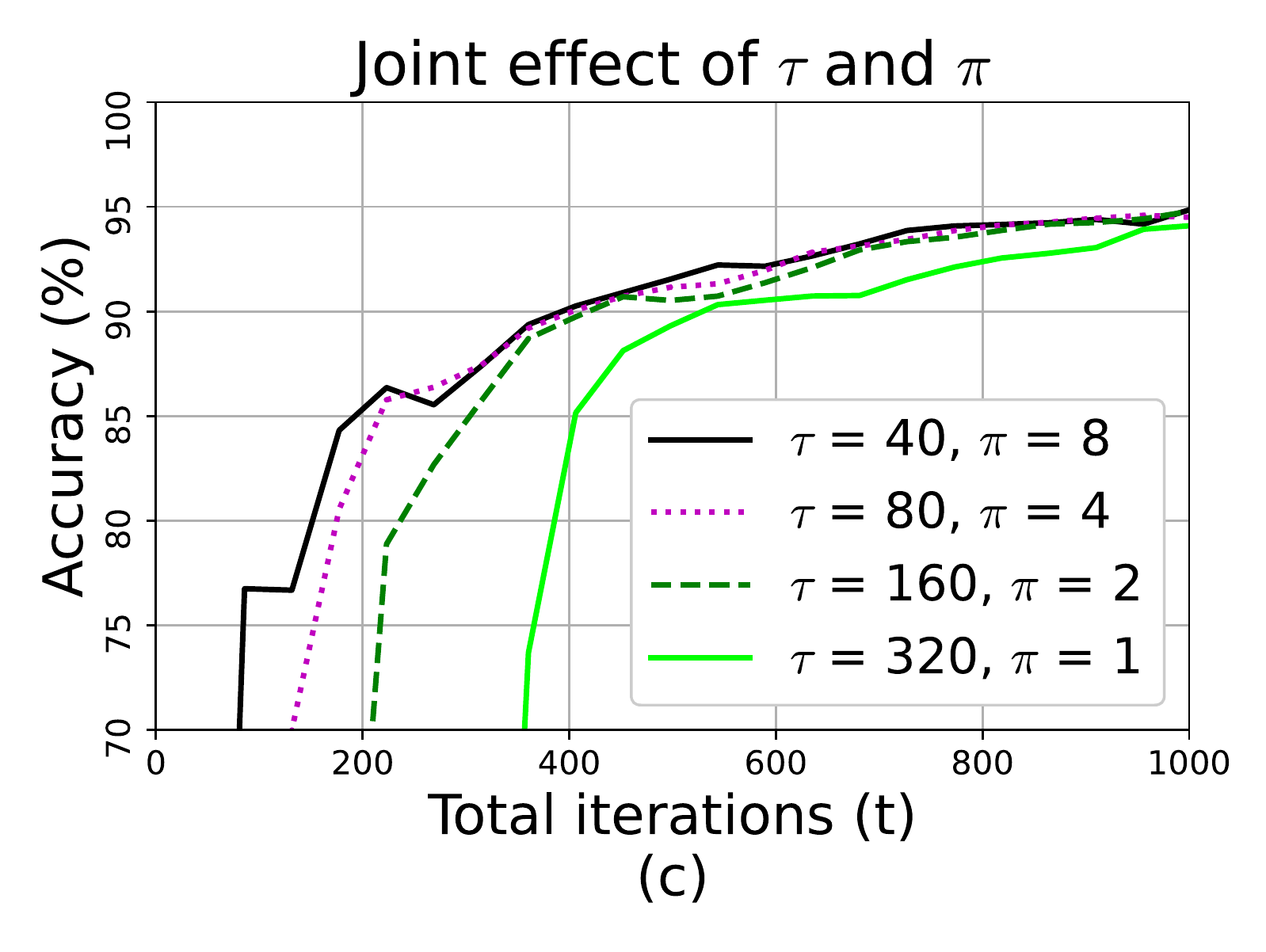}
    \end{minipage}
    \begin{minipage}[t]{0.241\textwidth}
        \centering
        \includegraphics[width=\textwidth]{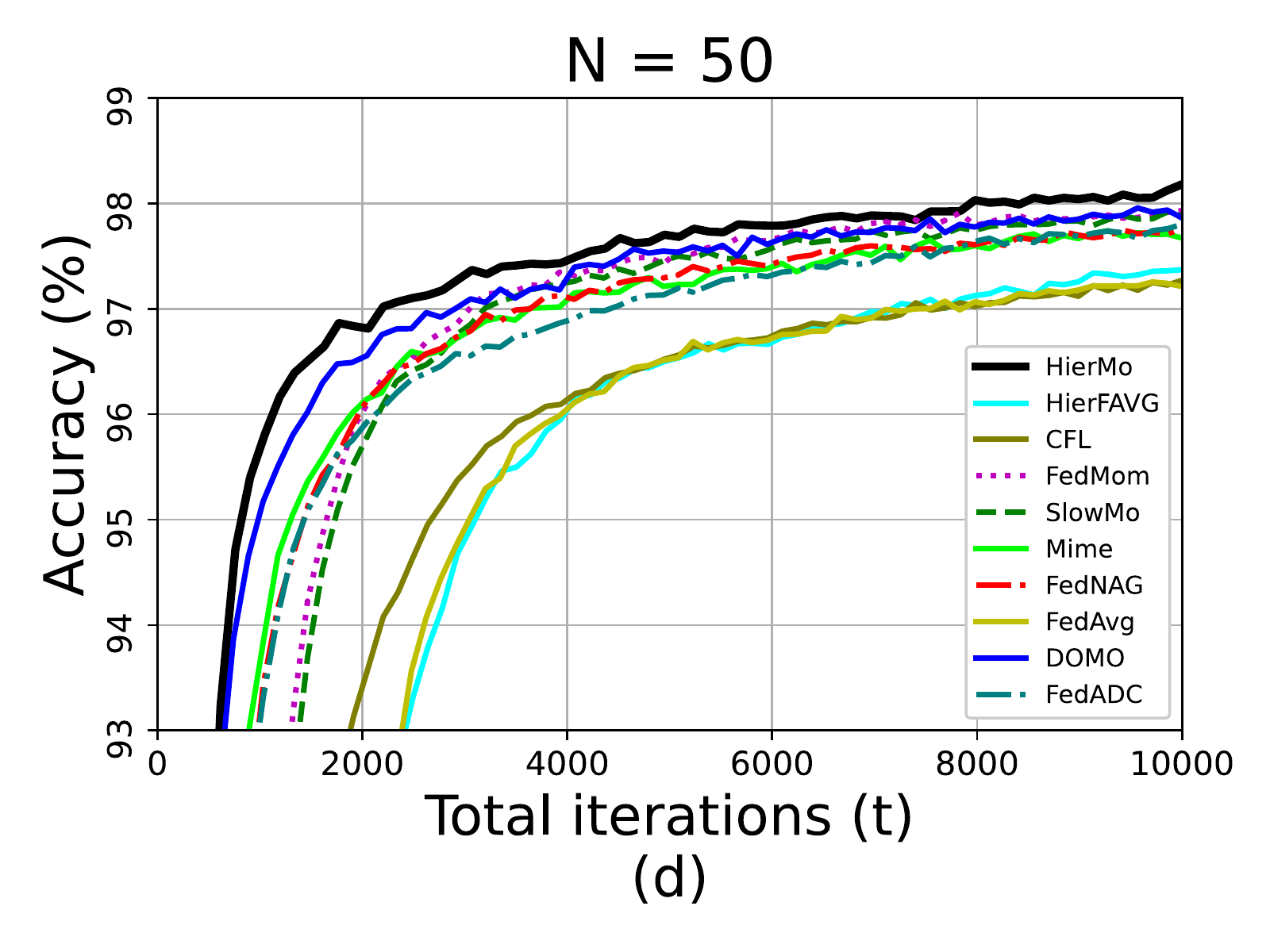}
    \end{minipage}
        \begin{minipage}[t]{0.241\textwidth}
        \centering
        \includegraphics[width=\textwidth]{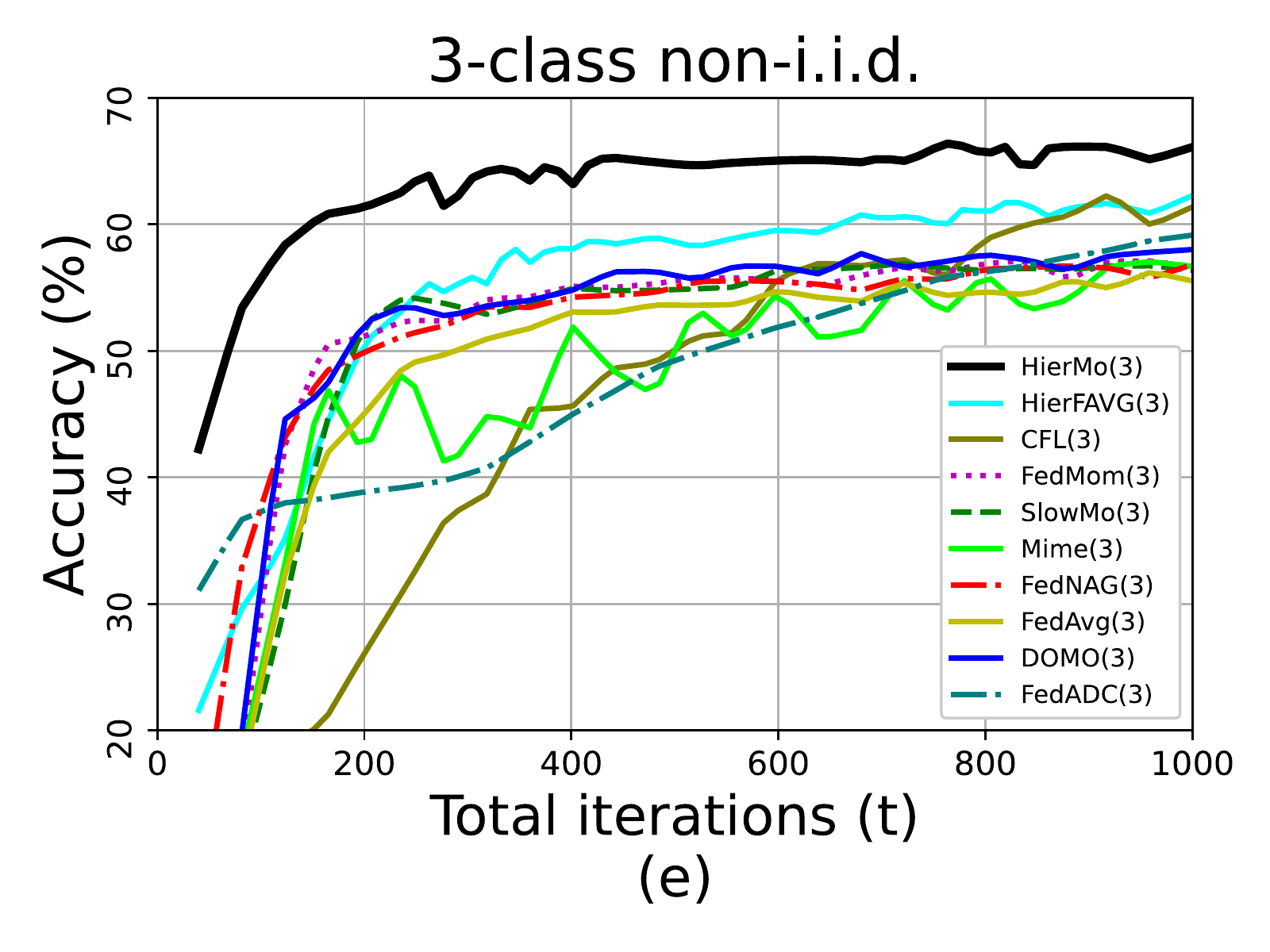}
    \end{minipage}
    \begin{minipage}[t]{0.241\textwidth}
        \centering
        \includegraphics[width=\textwidth]{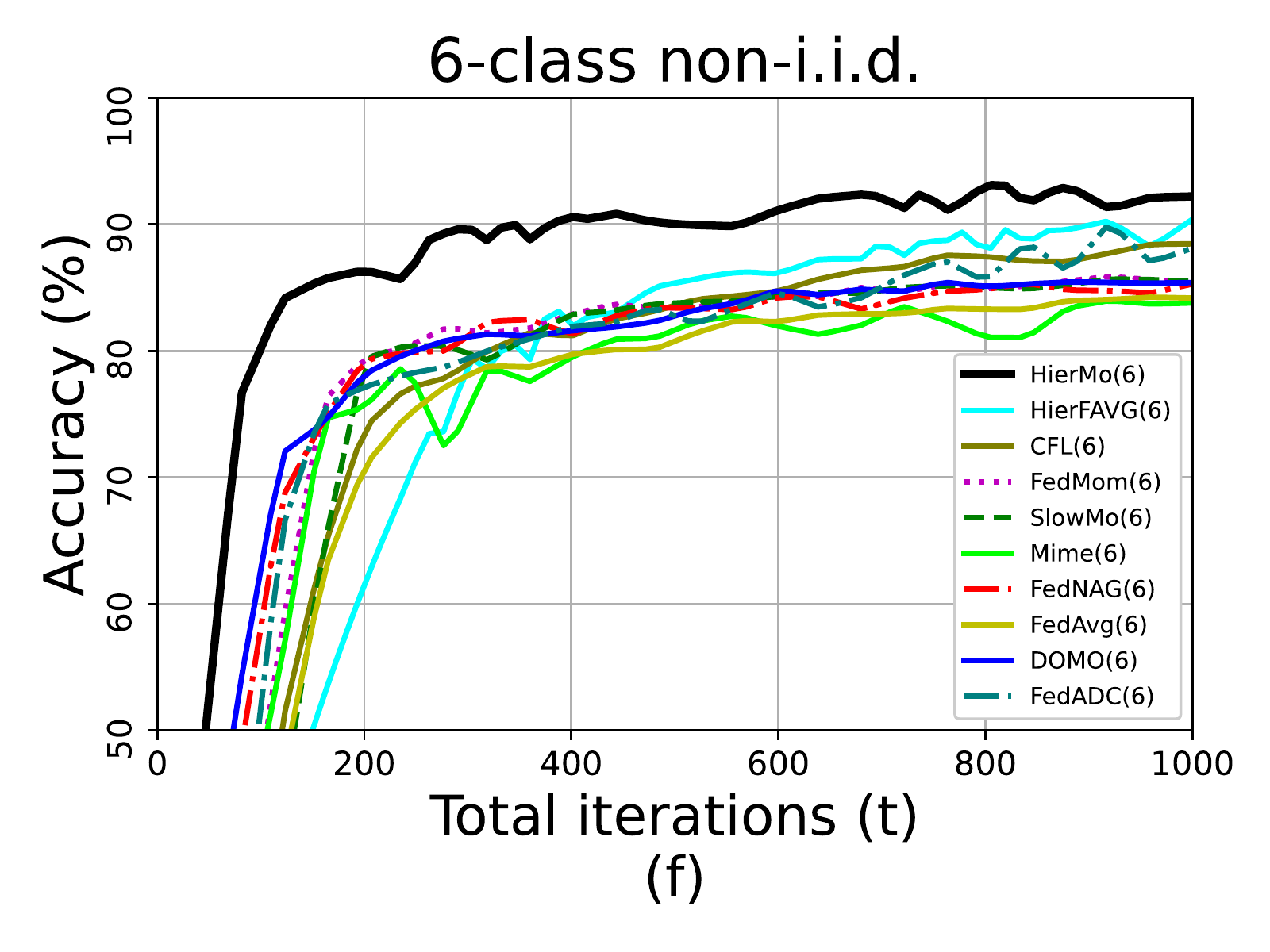}
    \end{minipage}
    \begin{minipage}[t]{0.241\textwidth}
        \centering
        \includegraphics[width=\textwidth]{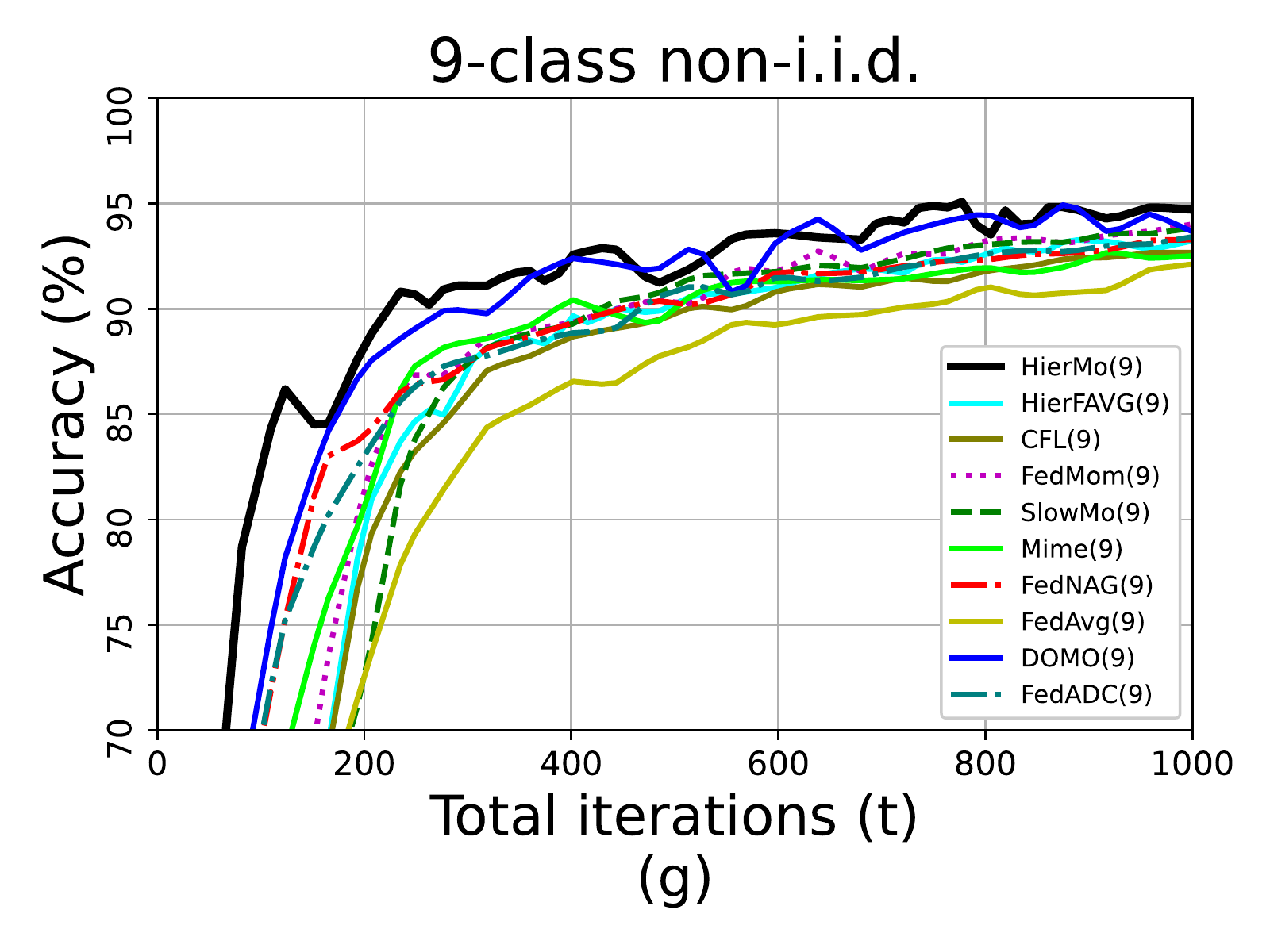}
    \end{minipage}
        \begin{minipage}[t]{0.241\textwidth}
        \centering
        \includegraphics[width=\textwidth]{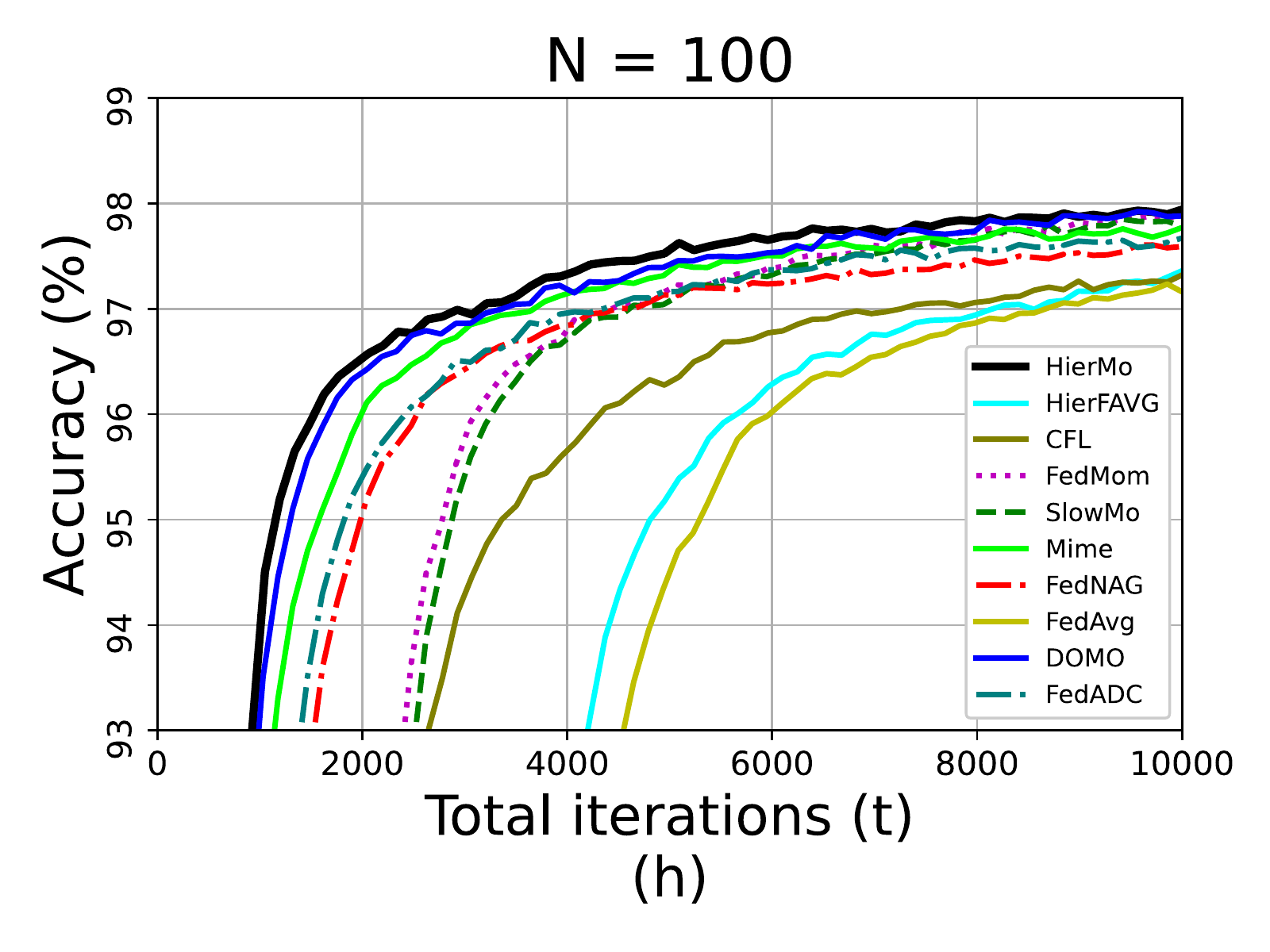}
    \end{minipage}
    \caption{(a)--(c): Accuracy comparison for HierMo under different settings of worker-edge aggregation period $\tau$ and edge-cloud aggregation period $\pi$ when CNN is trained on MNIST. {(e)--(g): Accuracy comparison under 3-class (e), 6-class (f), and 9-class (g) non-i.i.d. data when CNN is trained on MNIST. (d) and (h): Accuracy comparison for large $N$ ($N=50$ and $N=100$) when CNN is trained on MNIST.} }
    \label{fig:compare_tau_pi}
\end{figure*}

\subsection{Experiment on Convergence of HierMo}\label{sec:exp:performance}
\subsubsection{Experimental Setup}
We employ four real-world datasets including MNIST~\cite{mnist}, CIFAR-10~\cite{cifar10}, and ImageNet~\cite{deng2009imagenet,tinyimagenet} for image classification, and {UCI-HAR~\cite{anguita2013UCI-HAR} for human activity recognition.} %MNIST contains 70,000 gray-scale images (60,000 for training and 10,000 for testing). CIFAR-10 contains 60,000 color images (50,000 for training and 10,000 for testing). ImageNet in our experiment contains 100,000 training samples and 10,000 testing samples, which is a tiny version of the full dataset~\cite{wu2017tiny,tinyimagenet}. 
All training and testing samples are randomly shuffled and distributed to workers. Please note there is no restriction on how the data is distributed at different workers, therefore, the level of non-i.i.d. data distribution captured by $\delta_{i,\ell}$ is different for each worker $\{i,\ell\}$. %We explicitly generate \textit{x-class non-i.i.d.} dataset for different workers where $x$ means only $x$ number of classes of dataset is assigned for each worker.
%We implement HierMo and other benchmarks using PySyft library \cite{ryffel2018generic} based on the PyTorch framework where PySyft emulates various virtual workers to process distributed jobs. 
The training is run on a GPU tower server with 4 NVIDIA GeForce RTX 2080Ti GPUs. %(2 Intel(R) Xeon(R) Bronze 3104 CPUs @ 1.70GHz 6 cores, 4 NVIDIA GeForce RTX 2080Ti 11GB GDDR6 14Gbps memory GPUs, and 32GB DDR4 2933 MHz memory, running Ubuntu 18.04.5 LTS).

We use five models including linear regression, logistic regression, CNN, VGG16, and ResNet18. %Linear regression uses mean squared error loss, and logistic regression uses cross-entropy loss. 
The CNN model's structure is the classic one in~\cite{cnnModel}, which has two $5\times5$ convolutional layers with 32 and 64 channels respectively. In each convolutional layer, $2\times2$ max pooling is used. The last three layers are fully connected layers with ReLu activation and softmax. The structure of VGG16 and ResNet18 can be found in~\cite{vgg16code,tinyimagenet} respectively. We use mini-batch in all experiments, and the batch size is 64. We set the learning rate $\eta=0.01$. Other hyper-parameters will be specified in each experiment. %We also note that the number of global aggregation iterations is $K=T/\tau$. If $T$ is not an integer multiple of $\tau$, we conduct a final-round aggregation at iteration $T$.   

In this experiment, we focus on the convergence performance (i.e., accuracy given the number of iterations) of different algorithms. We do not consider the real-world delay for now. 
The results do not depend on hardware but on the algorithm itself. Therefore, we can create several virtual machines within a single server to carry out the experiment.  (Even if real-world hardware is used in the experiment, it will still give the same results.) The experiment on the optimization considering real-world delay will be discussed in Section~\ref{sec:exp_trace-driven}.

\subsubsection{Performance Comparison}
In Table~\ref{tab:exp_compare}, we compare the convergence performance of HierMo with benchmark algorithms. The numbers show the accuracy when different algorithms are run for $T$ iterations. The experiment is conducted on linear regression, logistic regression, CNN, VGG16, and ResNet18.
We set $T=1000$ (MNIST), $T=4000$ (UCI-HAR), or $T=10000$ (CIFAR10 and ImageNet), $\gamma=0.5, \gamma_{a}=0.5$. There are 4 workers and 2 edge nodes with each edge node serving 2 workers (three-tier algorithm). There are 4 workers directly served by the cloud (two-tier algorithm).
For two-tier algorithms, we set $\tau=20$ (convex model) or $\tau=40$ (non-convex model). For three-tier algorithms, we set $\tau=10, \pi=2$ (convex model) or $\tau=20,\pi=2$ (non-convex model).
 Please note that since $\pi$ does not exist for two-tier algorithms, we set $\tau$ value for two-tier algorithms equal to  $\tau\pi$ value for three-tier algorithms for a fair comparison.  
% We set $\tau=20$ (convex model) or $\tau=40$ (non-convex model), $T=1000$ (MNIST) or $T=10000$ (CIFAR10 and ImageNet), $\gamma=0.5, \gamma_{a}=0.5$ (if $\gamma_g$ is not adaptive), $N=4$, and $L=2$ (three-tier algorithm). %We note that we do not need to set $\gamma_{g}$ for FedAdMo, since FedAdMo can adjust the $\gamma_{g,k}$ automatically.
These hyper-parameters are typically used in existing works~\cite{liu2020accelerating,yu2019linear,yang2020FedNAG,liu2020HierFAVG,liu2021hierQSGD}.

In all cases, HierMo outperforms all other benchmarks. %For convex model and CNN, we observe HierMo $>$ DOMO $>$ FedNAG $>$ FedMom $\approx$ SlowMo $>$ Mime $>$ HierFAVG $>$ FedAvg. (We use ``$>$'' to indicate ``is better than'' for convenient presentation.). For DNN, we observe HierMo $>$ HierFAVG $>$ DOMO $>$ FedNAG $>$ FedMom $\approx$ SlowMo $>$ Mime $>$ FedAvg. 
This confirms that applying momentum on both worker-level and edge-level with three-tier architecture achieves the best performance.

Comparing HierMo with HierFAVG and {CFL}, we observe that HierMo $>$ {CFL} $>$ HierFAVG . (We use ``$>$'' to indicate ``is better than'' for  presentation convenience.) This verifies that the momentum can accelerate the convergence in three-tier architecture.

Comparing HierMo with DOMO and {FedADC}, we observe that HierMo $>$ DOMO $>$ {FedADC}. This verifies that when two types of momentum are applied, the three-tier architecture outperforms the two-tier architecture. 
This is because the additional edge aggregation can decrease the effect of data heterogeneity among workers under the same edge node, so as to improve the performance. 

Comparing DOMO and {FedADC} with FedMom, SlowMo, FedNAG, and Mime, we observe that DOMO $>$ {FedADC} $>$ FedNAG $>$ FedMom $\approx$ SlowMo $>$ Mime. This confirms that using combined worker momentum and aggregator momentum can accelerate the convergence compared with those using momentum only on workers or only on the aggregator. For worker momentum only or aggregator momentum only algorithms, we can still observe their acceleration compared with FedAvg. %We also see worker momentum outperforms aggregator momentum. This is because $\tau =20$ or $40$  leads to a lower frequent acceleration for the aggregator momentum while the worker momentum is more effective to accelerate in each local iteration. This result confirms our discussion on the limitation of aggregator momentum only algorithms. % in Section~\ref{sec:compare_alg}.
We also observe Mime may not perform well. Sometimes, it is even worse than FedAvg. This is because Mime uses the fixed momentum value in worker momentum update, where such value can be refreshed only in the global aggregation phase. As a result, the momentum value may be stale, especially when $\tau$ is as large as $40$. %This issue has been addressed by FedAdMo, in which we update worker momenta in each iteration.

Comparing HierFAVG and {CFL} with two-tier momentum-based algorithms (DOMO, {FedADC}, FedMom, SlowMo, FedNAG, and Mime), we observe that for DNN, HierFAVG and {CFL} outperform two-tier momentum-based algorithms, while for convex model and CNN, the later is better. This shows that for complicated models, the three-tier architecture plays a more significant role to accelerate the convergence while for less complicated models, 
%This shows that without using momentum, the performance in the three-tier environment could be much upgraded. In this experiment, when the models are less complicated, 
the momentum plays a more significant role to accelerate the convergence. 

{We also compare the training accuracy when more workers ($N=50$ and $N=100$) participate the training to demonstrate the cross-siloed FL~\cite{kairouz2021advances} (typically up to one hundred participants). The results in Fig.~\ref{fig:compare_tau_pi}(d) and (h) show the same trend as results in Table~\ref{tab:exp_compare}.

}

%This shows the trade off between momentum acceleration and three-tier acceleration. Our guess is that the performance varies based on the models, and possibly datasets and hyper-parameters settings, etc. Since such trade off is not our focus in this paper, we leave it for future work.

\subsubsection{Effects of $\tau$ and $\pi$}
In Fig.~\ref{fig:compare_tau_pi}, we evaluate the effects of $\tau$ and $\pi$, and their joint effects. The curves in the figure show the accuracy when CNN is trained on MNIST. We set $T=1000, \gamma=0.5, \gamma_a=0.5$. There are 16 workers and 4 edge nodes with each edge node serving 4 workers.

When $\pi$ and $\tau$ are fixed in Fig.~\ref{fig:compare_tau_pi}(a) and Fig.~\ref{fig:compare_tau_pi}(b) respectively, we observe that larger $\tau$ or $\pi$  lowers the performance. This observation matches our expectation and verifies the result of Theorem~\ref{theorem:Fwt-Fw*} showing that the larger $\tau$ or $\pi$ leads to larger convergence upper bound.

%When $\tau$ is fixed in Fig.~\ref{fig:compare_tau_pi} (middle), we observe that larger $\pi$ also lowers the performance. This observation match our expectation and verify the result of Theorem~\ref{theorem:Fwt-Fw*} showing that the larger $\pi$ leads to larger convergence upper bound.

% \begin{figure*}[h]
%     \centering
%     \begin{minipage}[t]{0.3\textwidth}
%         \centering
%         \includegraphics[width=\textwidth]{noniid3.pdf}
%     \end{minipage}
%     \begin{minipage}[t]{0.3\textwidth}
%         \centering
%         \includegraphics[width=\textwidth]{noniid6.pdf}
%     \end{minipage}
%     \begin{minipage}[t]{0.3\textwidth}
%         \centering
%         \includegraphics[width=\textwidth]{noniid9.pdf}
%     \end{minipage}
%     \caption{ {Accuracy comparison under 3-class (a), 6-class (b), and 9-class (c) non-i.i.d. data when CNN is trained on MNIST.} \textcolor{red}{TODO miss FedADC}
%     \label{fig:noniid}
%     }

% \end{figure*}

\begin{figure}[tb!]
    \centering
    \begin{minipage}[t]{0.241\textwidth}
        \centering
        \includegraphics[width=\textwidth]{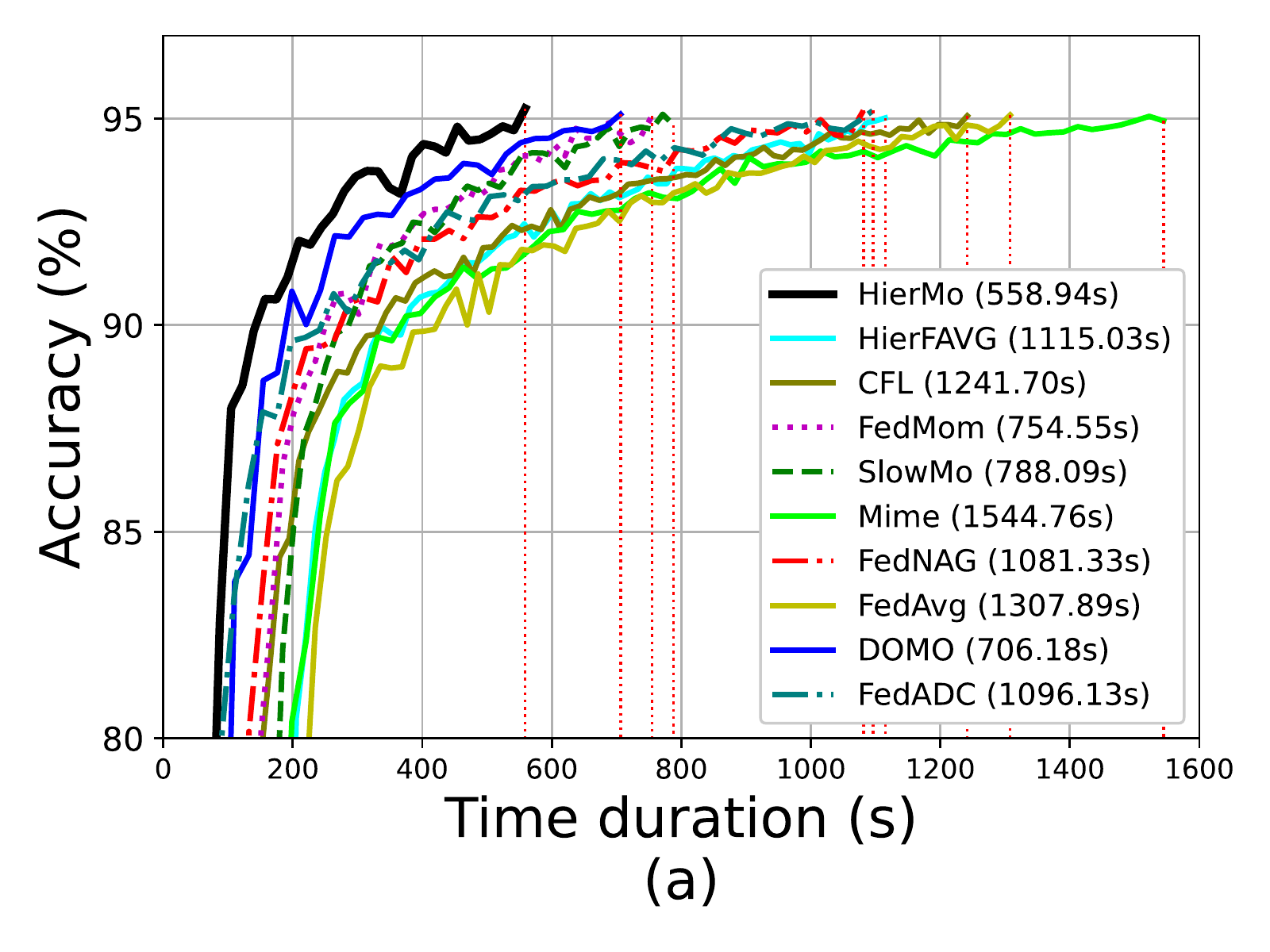}
    \end{minipage}
    \begin{minipage}[t]{0.241\textwidth}
        \centering
        \includegraphics[width=\textwidth]{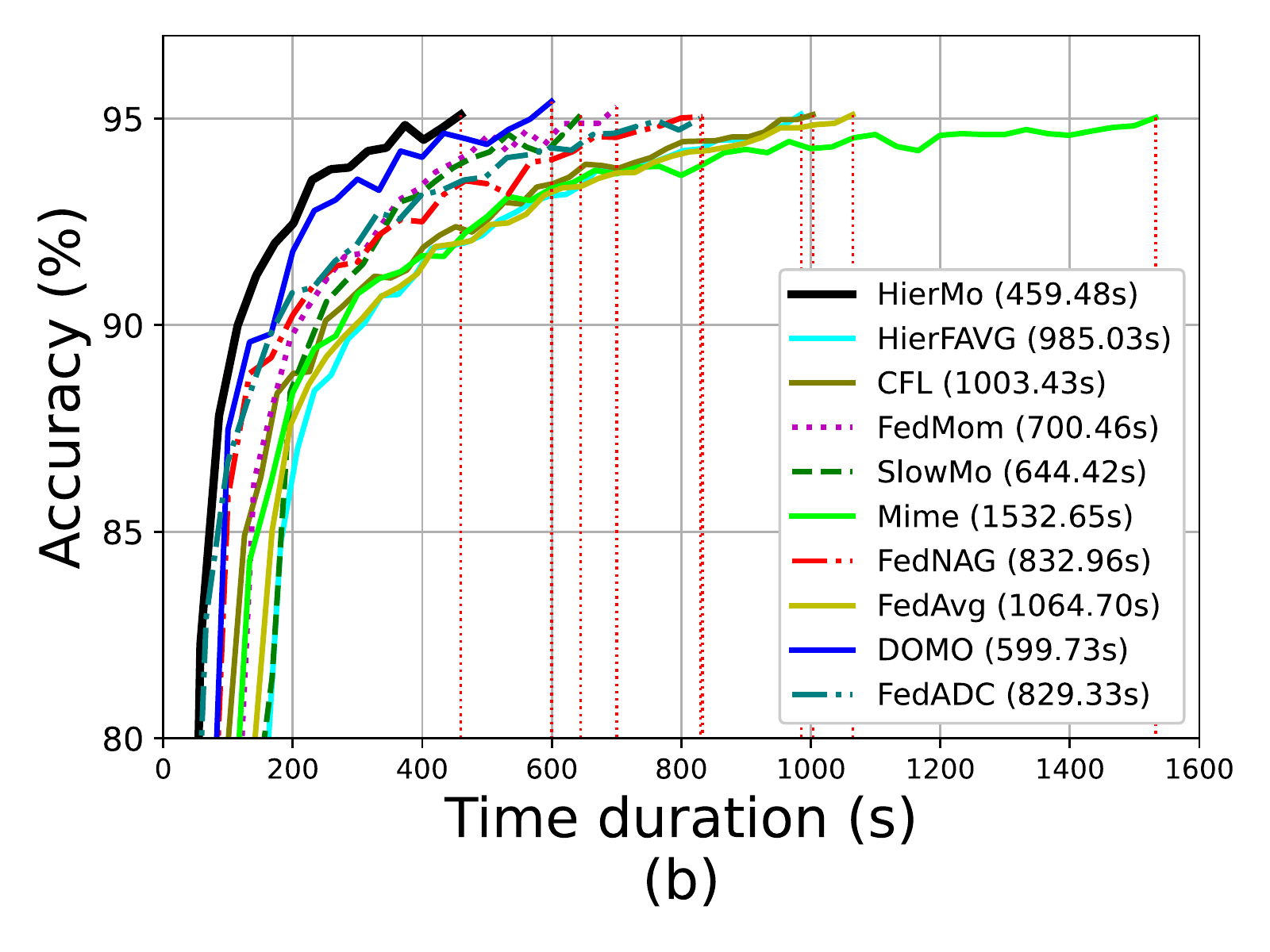}
    \end{minipage}
    \caption{Comparison of total training time to reach  0.95 accuracy under two different settings when CNN is trained on MNIST. The time to reach 0.95 accuracy is labeled in the legends. (a): $\gamma=0.5, \gamma_a=0.5$, $\tau=20$ (two-tier) or $\tau=10,\pi=2$ (three-tier). (b): $\gamma=0.5, \gamma_a=0.5$, $\tau=40$ (two-tier) or $\tau=20, \pi=2$ (three-tier).}
    \label{fig:compare_delay}
\end{figure}

When $\tau\cdot\pi$ (the product of $\tau$ and $\pi$) is fixed in Fig.~\ref{fig:compare_tau_pi}(c), we observe that smaller $\tau$ (larger $\pi$) leads to better performance. This shows that more frequent edge aggregation is more effective compared with more frequent cloud aggregation. 
%Please note that with $\pi=1$, HierMo is equivalent to a two-tier FL. Edge nodes and the cloud always aggregate together so that we can regard the edge nodes together with the cloud as a single aggregation. However $\pi=1$ gives the worst performance when $\tau\cdot\pi$ is fixed, 

{

\subsubsection{Effects of non-i.i.d. data distribution}\label{sec:exp_non_IID}
In Fig.~\ref{fig:compare_tau_pi}(e)--(g), we evaluate the effects of different levels of non-i.i.d. data distribution. %on FedAdMo and other benchmarks. 
We train CNN on MNIST with the setting $\tau=40$ (two-tier) or $\tau=20, \pi=2$ (three-tier), $N=4, L=2$, and $T=1000$. The curves show the training accuracy. To quantify the level of non-i.i.d. data distribution, we explicitly assign only $x<10$ out of 10 classes of data for each worker. (Each worker has data samples from a subset of classes.) The class of data is randomly allocated to each worker. Smaller \textit{x} represents higher level of non-i.i.d. setting. We use \textit{3-class non-i.i.d.}, \textit{6-class non-i.i.d.}, and \textit{9-class non-i.i.d.} to represent high, middle and low level of non-i.i.d. data respectively.

We observe that HierMo $>$ HierFAVG $>$ DOMO $>$ FedADC $>$ FedNAG $>$ CFL $>$ FedMom $>$ SlowMo $>$ Mime $\approx$ FedAvg in most cases. This is consistent with the results in Table~\ref{tab:exp_compare}, showing that HierMo outperforms all benchmarks under any levels of non-i.i.d. data distribution.
We also observe higher level of non-i.i.d. setting decreases convergence performance for all algorithms. Specifically, HierMo achieves 66.11\% accuracy for high level non-i.i.d. data, while achieving 92.21\% accuracy and 94.70\% accuracy for middle and low level non-i.i.d. data respectively. This matches our expectations where higher level of non-i.i.d. setting causes more data divergence that is denoted by larger $\delta$, and therefore lowers the accuracy.

}

\subsection{Experiment on Trace-driven simulation of HierMo}\label{sec:exp_trace-driven}
\subsubsection{Experimental Setup}
We emulate the real-world three-tier hierarchical FL environment to test the performance of HierMo in the following two aspects. \tikztextcircle{1} To reach a target training accuracy (0.95), we compare the total training time of HierMo and benchmarks. \tikztextcircle{2} For a given total training time $\Psi$, we compare the performance of HierMo under different $(\tau, \pi)$ and verify that  $(\tau^*, \pi^*)$ derived by HierOPT is near optimal. 

We train the CNN on MNIST in the GPU tower server to keep the trace of the sequence of iterations. We use real-world devices as workers (one laptop with Intel Core i3 M380 CPU, three Android phones: Nubia z17s with Qualcomm Snapdragon 835 CPU, Realme GT Neo with MTK Dimensity 1200 CPU, Redmi K30 Ultra with MTK Dimensity 1000+ CPU) to sample worker computation delays. We use Macbook Pro 2018 with Intel Core i7-8750H CPU as the edge node to sample the edge computation delays. The GPU tower server is regarded as the cloud server and the cloud computation delays  are sampled on it. The workers are connected to a HUAWEI Honor router X2+ with 5GHz WiFi. The edge node is also connected to the router with a wired cable (1 Gbps Ethernet). The router is then connected to the public Internet. 

\begin{figure*}[tb!]
    \centering
    \begin{minipage}[t]{0.241\textwidth}
        \centering
        \includegraphics[width=\textwidth]{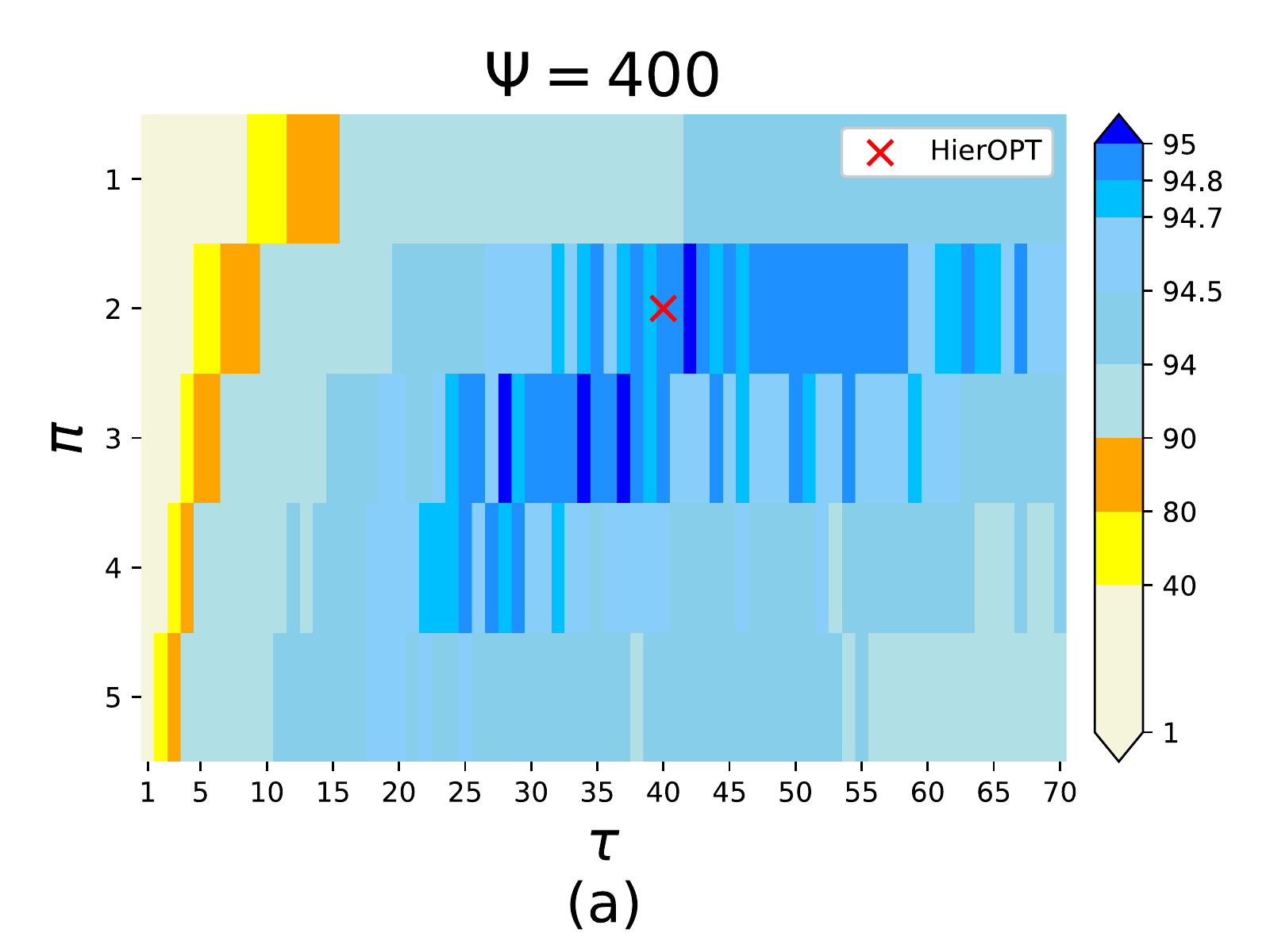}
    \end{minipage}
    \begin{minipage}[t]{0.241\textwidth}
        \centering
        \includegraphics[width=\textwidth]{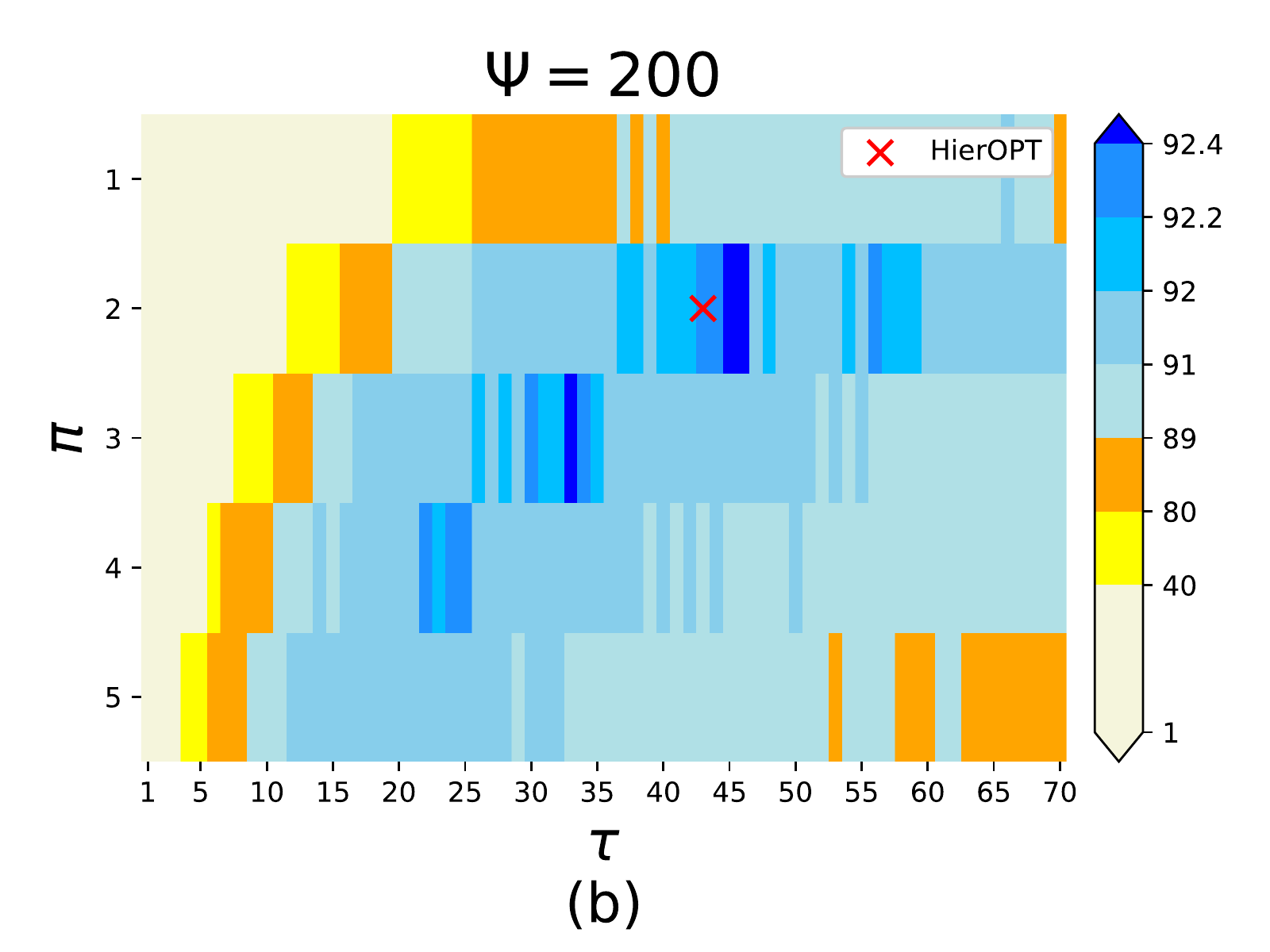}
    \end{minipage}
        \begin{minipage}[t]{0.241\textwidth}
        \centering
        \includegraphics[width=\textwidth]{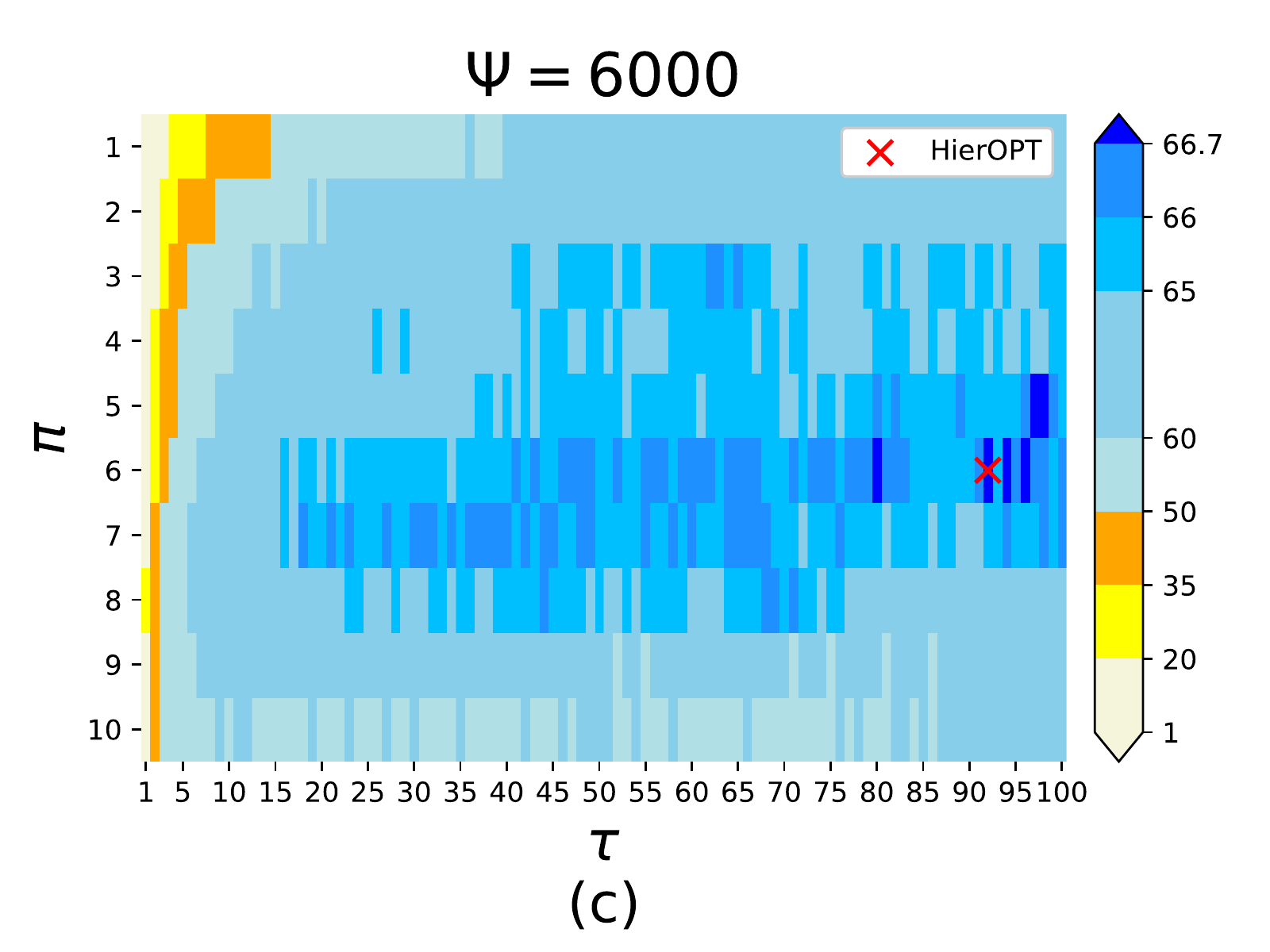}
    \end{minipage}
    \begin{minipage}[t]{0.241\textwidth}
        \centering
        \includegraphics[width=\textwidth]{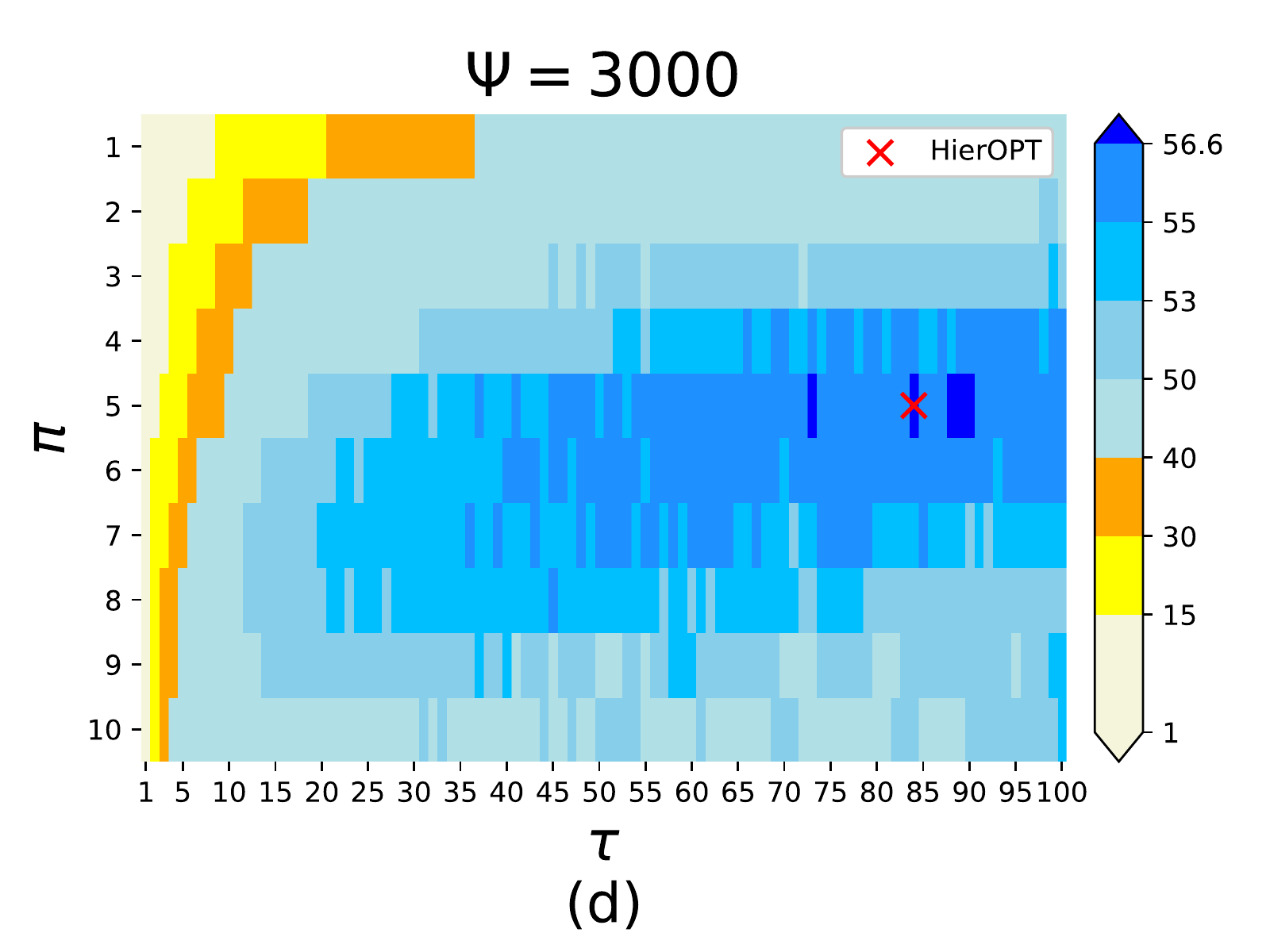}
    \end{minipage}
    \caption{Accuracy comparison for HierMo with derived pair of $(\tau^*,\pi^*)$ by HierOPT (red cross) and different pairs of $(\tau, \pi)$ under limited total training time $\Psi$. The darker color indicates the higher training accuracy (in \%). (a): $\Psi=400$s on MNIST. (b): $\Psi=200$s on MNIST. {(c): $\Psi=6000$s on CIFAR10. (d): $\Psi=3000$s on CIFAR10.} }
    \label{fig:opt}
\end{figure*}

The cloud server is  connected to the Internet via another ISP's access network.
The worker communication delays  are sampled between the workers and the edge node. The edge communication delays  are sampled between the edge node and  the server via the public Internet. Please note that for two-tier FL algorithms, since the workers directly communicate with the cloud, the worker-to-cloud communication delays are sampled as the delays from the devices to the server. We use the trace of the sequence of iterations and the sampled delays to figure out the overall delays as if the training process is conducted in real-world three-tier or two-tier FL environment. Please note that such approach to use a digital representation of physical objects to conduct the experiment is widely used in distributed systems, IoT, Industry 4.0, and machine learning applications \cite{teng2021recent,kirchhof2021understanding}. It can generate a convincing system performance evaluation without deploying physical devices. 

\subsubsection{Total Training Time Comparison}
In Fig.~\ref{fig:compare_delay}, we compare the total training time of HierMo and benchmarks when CNN is trained on MNIST. The experiment is conducted under two settings:
\tikztextcircle{1} $\gamma=0.5, \gamma_a=0.5$, $\tau=20$ (two-tier) or $\tau=10,\pi=2$ (three-tier) and \tikztextcircle{2} $\gamma=0.5, \gamma_a=0.5$, $\tau=40$ (two-tier) or $\tau=20, \pi=2$ (three-tier). %set $\gamma=0.5, \gamma_a=0.5$, $\tau=40$ or $\tau=20$ (two-tier algorithm), $\tau=20$ or $\tau=10$ (three-tier algorithm) respectively, and $\pi=2$ (three-tier algorithm) for fair comparison. 
There are 4 workers and 2 edge nodes with each edge node serving 2 workers (three-tier algorithm). There are 4 workers directly served by the cloud (two-tier algorithm).

We observe that %under two different settings: \tikztextcircle{1} $\tau=20$ (two-tier) or $\tau=10,\pi=2$ (three-tier) and \tikztextcircle{2} $\tau=40$ (two-tier) or $\tau=20, \pi=2$ (three-tier), 
to reach the accuracy 0.95, HierMo spends 558.94s under setting \tikztextcircle{1} and 459.48s under setting \tikztextcircle{2} while other benchmarks spend 706.18s--1544.76s under setting \tikztextcircle{1} and 599.73s--1532.65s under setting \tikztextcircle{2} respectively. This demonstrates that HierMo is efficient and decreases the total training time by 21--70\%  compared with the benchmarks.

\subsubsection{Performance of HierOPT}
In Fig.~\ref{fig:opt}, we illustrate the performance of HierOPT. In this experiment, CNN is trained on MNIST and CIFAR10. We set $\gamma=0.5$, $\gamma_a=0.5$, $\Psi=400$s or $\Psi=200$s (MNIST), and {$\Psi=6000$s or $\Psi=3000$s (CIFAR10)}. There are 16 workers and 4 edge nodes with each edge node serving 4 workers. All constants in the objective function \eqref{opt:obj_new} can be sampled in advance of the training process~\cite{wang2019adaptive,dinh2020proxVR}. 

We show the accuracy under different pairs of $(\tau, \pi)$ and flag $(\tau^*, \pi^*)$ derived by HierOPT. The darker color in the chromatography indicates a higher training accuracy. The red cross indicates the derived $(\tau^*, \pi^*)$  by HierOPT. We observe that in all figures, HierOPT can find near-optimal solutions. In Fig.~\ref{fig:opt}(a), when $\Psi=400$s, the optimal accuracy is $95.05\%$, with optimal $(\tau, \pi)=(42, 2)$, while HierOPT finds $(\tau^*, \pi^*)=(40, 2)$, with accuracy $94.82\%$, only a $0.23\%$ gap from the optimum. In Fig.~\ref{fig:opt}(b), when $\Psi=200$s, the optimal accuracy is $92.52\%$, with optimal $(\tau, \pi)=(46, 2)$, while HierOPT finds $(\tau^*, \pi^*)=(43, 2)$, with accuracy $92.23\%$, only a $0.29\%$ gap from the optimum. {For CIFAR10, HierOPT can still find the near-optimal ($\tau^*$, $\pi^*$), with only $0.04\%$ ($67.09\%$ to $67.05\%$) and $0.16\%$ ($56.82\%$ to $56.66\%)$ gap from the real-world optimum, when $\Psi=6000$s and $\Psi=3000$s respectively.}

%  We observe that under two different settings ($\Psi=XX1$ or $\Psi=XX1$), the black cross is near the optimal result, implying that our proposed HierOPT is feasible and can derive the near-optimal pair of $(\tau^*, \pi^*)$ for a given limited total training delay.

%We observe that when $\Psi=XX1$, the derived setting is $(\tau^*=X1, \pi^*=X1)$ with accuracy XX1 while the actual optimal setting is $(\tau^*=X1, \pi^*=X1)$ with accuracy XX1. When $\Psi=XX2$, the derived setting is $(\tau^*=X2, \pi^*=X2)$ with accuracy XX2 while the actual optimal setting is $(\tau^*=X2, \pi^*=X2)$ with accuracy XX2. This demonstrates that the setting of $(\tau^*, \pi^*)$ derived by HierOPT can achieve the near-optimal performance.

%In Fig. XX, we compare the performance of HierMo under derived pair of $(\tau^*, \pi^*)$ and different settings of $(\tau, \pi)$ when CNN is trained on MNIST. There are 16 workers and 4 edge nodes with each edge node serving 4 workers. All constants in the objective function \eqref{opt:obj_new} can be tested in the training [cite]. We observe that when $\Psi=XX1$, the derived setting is $(\tau^*=X1, \pi^*=X1)$ with accuracy XX1 while the actual optimal setting is $(\tau^*=X1, \pi^*=X1)$ with accuracy XX1. When $\Psi=XX2$, the derived setting is $(\tau^*=X2, \pi^*=X2)$ with accuracy XX2 while the actual optimal setting is $(\tau^*=X2, \pi^*=X2)$ with accuracy XX2. This demonstrates that the setting of $(\tau^*, \pi^*)$ derived by HierOPT can achieve the near-optimal performance.

\section{Conclusion}\label{sec:conclusion}
In this paper, we propose HierMo, a three-tier hierarchical FL algorithm that applies momentum to accelerate convergence. We provide convergence analysis for HierMo, showing that it converges with a rate of $\mathcal{O}\left(\frac{1}{T}\right)$ for smooth non-convex problems under non-i.i.d. data. In the analysis, we {develop a new two-level virtual update (edge and cloud) method to characterize the multi-time cross-two-tier momentum interaction and the cross-three-tier momentum interaction. The performance gain of momentum is also quantified.} We also propose HierOPT to derive a near-optimal setting of worker-edge and edge-cloud aggregation periods $(\tau,\pi)$ under a limited total training time. We verify that HierMo outperforms existing mainstream benchmarks under a wide range of settings. In addition, HierOPT can achieve a near-optimal performance when we test HierMo under different values of $(\tau,\pi)$.
%Based on five machine learning models and three real-world datasets, we experimentally analyze the effects of edge aggregation period $\tau$, cloud aggregation period $\pi$ and their joint effects. It empirically shows that HierMo consistently improves the convergence performance compared to benchmark algorithms (HierFAVG, DOMO, FedMom, SlowMo, FedNAG, Mime, and FedAvg), and decreases the total training delay by XX\%--XX\%. An additional experiment is also conducted, verifying that the $(\tau, \pi)$ value pair derived by HierOPT can achieve the near-optimal performance compared with the actual optimal setting of $(\tau^*, \pi^*)$.

\appendix
%\newpage
%\input{2app_preliminaries}
%\input{1app_equivalent_new}
%\input{1app_equivalent}
\subsection{Proof of Theorem \ref{theorem:xt-xkt}} \label{app_theorem1}

\subsubsection{Equivalent Update}First, we  define $\boldsymbol{v}_{i,\ell}^t\triangleq\boldsymbol{y}_{i,\ell}^t-\boldsymbol{y}_{i,\ell}^{t-1}$ with $\boldsymbol{v}_{i,\ell}^0=\boldsymbol{0}$ for all $i,\ell$. We can obtain $\boldsymbol{x}_{i,\ell}^{t-1}=\boldsymbol{y}_{i,\ell}^{t-1}+\gamma\boldsymbol{v}_{i,\ell}^{t-1}$. The worker momentum/model update in Lines \ref{eq:yit}--\ref{eq:xit} in Algorithm~\ref{alg:HierMo} can then be equivalently written as 
\begin{align}
\boldsymbol{v}_{i,\ell}^t &\gets \gamma\boldsymbol{v}_{i,\ell}^{t-1} - \eta\nabla F_{i,\ell}(\boldsymbol{x}_{i,\ell}^{t-1}),\label{eq:localVi}\\ 
\boldsymbol{x}_{i,\ell}^t 
&\gets \boldsymbol{x}_{i,\ell}^{t-1}+ \gamma\boldsymbol{v}_{i,\ell}^t - \eta\nabla F_{i,\ell}(\boldsymbol{x}_{i,\ell}^{t-1}).\label{eq:localWi}
\end{align}
The aggregated value $\boldsymbol{v}_{\ell}^t$ and the intermediate value $\boldsymbol{x}_{\ell-}^t$ can also be equivalently written as
\begin{align}
\boldsymbol{v}_\ell^t &\gets \sum_{i=1}^{C_\ell}\frac{ D_{i,\ell}}{D_\ell} \boldsymbol{v}_{i,\ell}^t,\quad
\label{eq:globalW}
\boldsymbol{x}_{\ell-}^t \gets \sum_{i=1}^{C_\ell}\frac{ D_{i,\ell}}{D_\ell} \boldsymbol{x}_{i,\ell}^t.
\end{align}
Similarly, the edge and cloud virtual updates \eqref{eq:ykt}--\eqref{eq:xkt} and \eqref{eq:ykt_c}--\eqref{eq:xkt_c} can be equivalently written as
\begin{align}
\boldsymbol{v}_{[k],\ell}^t \gets& \gamma\boldsymbol{v}_{[k],\ell}^{t-1} - \eta\nabla F_{\ell}(\boldsymbol{x}_{[k],\ell}^{t-1}),\nonumber\\ 
\boldsymbol{x}_{[k],\ell}^t 
 \gets& \boldsymbol{x}_{[k],\ell}^{t-1} + \gamma\boldsymbol{v}_{[k],\ell}^t - \eta\nabla F_{\ell}(\boldsymbol{x}_{[k],\ell}^{t-1}),\label{eq:xkt_app}\\
\boldsymbol{v}_{\{p\}}^t \gets& \gamma\boldsymbol{v}_{\{p\}}^{t-1} - \eta\nabla F(\boldsymbol{x}_{\{p\}}^{t-1}),\nonumber\\
\boldsymbol{x}_{\{p\}}^t 
 \gets& \boldsymbol{x}_{\{p\}}^{t-1} + \gamma\boldsymbol{v}_{\{p\}}^t - \eta\nabla F(\boldsymbol{x}_{\{p\}}^{t-1})\label{eq:xpt_app}. 
\end{align}
 We employ the above equivalent update format \eqref{eq:localVi}--\eqref{eq:xpt_app} to complete the proof in the rest of the Appendix.
 
%\begin{align}
%\boldsymbol{v}_{\{p\}}^t \gets& \gamma\boldsymbol{v}_{\{p\}}^{t-1} - \eta\nabla F(\boldsymbol{x}_{\{p\}}^{t-1}),\nonumber\\ 
%\boldsymbol{x}_{\{p\}}^t 
% \gets& \boldsymbol{x}_{\{p\}}^{t-1} + \gamma\boldsymbol{v}_{\{p\}}^t - \eta\nabla F(\boldsymbol{x}_{\{p\}}^{t-1})\label{eq:xpt_app}. 
%\end{align}

\subsubsection{Constant Definition}We define the constants as follows, which are more conveniently used in the rest of the Appendix.
{\small
\begin{align*}
&A\triangleq\frac{(1+\eta\beta)(1+\gamma)+\sqrt{(1+\eta\beta)^2(1+\gamma)^{2}-4\gamma(1+\eta\beta)}}{2\gamma},\\
&B\triangleq\frac{(1+\eta\beta)(1+\gamma)-\sqrt{(1+\eta\beta)^2(1+\gamma)^{2}-4\gamma(1+\eta\beta)}}{2\gamma},\\
&I\triangleq\frac{\gamma A+A-1}{(A-B)(\gamma A-1)},J\triangleq\frac{\gamma B+B-1}{(A-B)(1-\gamma B)},\\
U& \triangleq \frac{\frac{1+\eta\beta+\eta\beta\gamma}{\gamma}-B}{A-B}=\frac{A-1}{A-B},
V \triangleq \frac{A-\frac{1+\eta\beta+\eta\beta\gamma}{\gamma}}{A-B}=\frac{1-B}{A-B}.
\end{align*}
}

\subsubsection{Subscript \texorpdfstring{$\ell$}{l}}
Since Theorem~\ref{theorem:xt-xkt} focuses on a specific edge node $\ell$, for presentation convenience, in the proofs of Theorem~\ref{theorem:xt-xkt} (including  Lemmas~\ref{lemma:sequence}--\ref{lemma:vt-vkt}), we ignore all  subscript $\ell$. We use $\boldsymbol{x}_i, \boldsymbol{v}_i, \boldsymbol{x}, \boldsymbol{v}, \boldsymbol{x}_{[k]}, \boldsymbol{v}_{[k]}, F_i, F, D_i, D, \delta_i, \delta$, and $C$ to represent $\boldsymbol{x}_{i,\ell}, \boldsymbol{v}_{i,\ell}, \boldsymbol{x}_{\ell-}, \boldsymbol{v}_\ell, \boldsymbol{x}_{[k],\ell}, \boldsymbol{v}_{[k],\ell}, F_{i,\ell}, F_\ell, D_{i,\ell}, D_\ell, \delta_{i,\ell}, \delta_\ell$, and $C_\ell$ respectively. Please note that in the proofs of the theorems other than Theorem~\ref{theorem:xt-xkt}, we do not ignore subscript $\ell$.

\subsubsection{Prerequisite Lemmas for the Proof of Theorem~\ref{theorem:xt-xkt}}
To prove Theorem \ref{theorem:xt-xkt}, the progress mainly includes four steps. (1) We first introduce an important equality in Lemma \ref{lemma:sequence}, which will be used to prove Lemma~\ref{lemma:w_it-w_kt}. (2) We bound $\Vert \boldsymbol{x}_i^t - \boldsymbol{x}_{[k]}^t \Vert$ in Lemma \ref{lemma:w_it-w_kt} based on Lemma \ref{lemma:sequence}. (3) Based on the result of Lemma \ref{lemma:w_it-w_kt}, we then bound $\Vert \boldsymbol{v}^t - \boldsymbol{v}_{[k]}^t \Vert$ in Lemma \ref{lemma:vt-vkt}. Please note that the proofs of Lemmas \ref{lemma:sequence}--\ref{lemma:vt-vkt} are in Appendix~\ref{app:lemma:sequence}--\ref{app:lemma:vt-vkt} respectively. 
(4) Finally, based on the result of Lemma \ref{lemma:vt-vkt}, we bound $\Vert \boldsymbol{x}^t - \boldsymbol{x}_{[k]}^t \Vert$, which concludes Theorem \ref{theorem:xt-xkt}.

\begin{lemma} \label{lemma:sequence}
Given
\begin{align}
&a_{t}=\frac{\delta_{i}}{\beta}\left(\frac{\frac{1+\eta\beta+\eta\beta\gamma}{\gamma}-B}{A-B} A^{t}-\frac{\frac{1+\eta\beta+\eta\beta\gamma}{\gamma}-A}{A-B} B^{t}\right),\\
&A+B=\frac{1+\eta\beta+\eta\beta\gamma+\gamma}{\gamma}=\frac{(1+\eta\beta)(1+\gamma)}{\gamma},\nonumber\\
&AB=\frac{1+\eta\beta}{\gamma},
\end{align}
where $t=0,1,2,...,0<\gamma<1,\eta\beta>0$, we have $(1+\eta\beta)a_{t-1}+\eta\beta\gamma\sum_{i=0}^{t-1} a_i = \gamma a_t.$
\end{lemma}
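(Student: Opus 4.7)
The plan is to reduce the claimed identity to an equivalent second-order linear recurrence in $\{a_t\}$ and then verify that recurrence directly from the closed-form expression for $a_t$. To set this up, I would subtract the identity evaluated at index $t-1$ from the identity at index $t$; since the sums $\sum_{i=0}^{t-1} a_i$ and $\sum_{i=0}^{t-2} a_i$ differ only by the single term $a_{t-1}$, after collecting terms the claim becomes equivalent to the two-term recurrence
$$\gamma a_t = (1+\eta\beta)(1+\gamma)\, a_{t-1} - (1+\eta\beta)\, a_{t-2}, \qquad t \geq 2,$$
together with the base case at $t = 1$, namely $\gamma a_1 = (1+\eta\beta+\eta\beta\gamma)\, a_0$.

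Next, I would observe that the characteristic polynomial of this recurrence, $\gamma z^2 - (1+\eta\beta)(1+\gamma) z + (1+\eta\beta)$, has roots $A$ and $B$ by the Vieta-type relations $A+B = (1+\eta\beta)(1+\gamma)/\gamma$ and $AB = (1+\eta\beta)/\gamma$ stated in the lemma. Setting $K \triangleq (1+\eta\beta+\eta\beta\gamma)/\gamma$, the closed-form expression can be rewritten as $a_t = c_A A^t + c_B B^t$ with $c_A = \delta_i(K-B)/(\beta(A-B))$ and $c_B = -\delta_i(K-A)/(\beta(A-B))$. Every such linear combination automatically satisfies the second-order recurrence, so the recurrence step is immediate for all $t \geq 2$ without needing to touch the particular values of $c_A$, $c_B$.

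Finally, to verify the base case I would evaluate the closed form at $t = 0$ and $t = 1$. At $t = 0$, the numerator collapses to $(K-B)-(K-A) = A-B$, cancelling the $A - B$ denominator and yielding $a_0 = \delta_i/\beta$. At $t = 1$, the numerator expands to $A(K-B) - B(K-A) = K(A-B)$ (the $AB$ cross-terms cancel), giving $a_1 = \delta_i K/\beta$, from which $\gamma a_1 = (1+\eta\beta+\eta\beta\gamma)\, a_0$, as required. A routine induction combining this base case with the recurrence then propagates the identity to all $t \geq 1$. The only subtlety I anticipate is bookkeeping in the reduction step — one must confirm that knowing the identity at $t-1$ plus the second-order recurrence implies the identity at $t$, which is an easy rearrangement — after which the core proof is driven by the observation that the specific constants $c_A, c_B$ in the given closed form are engineered precisely so that the constant terms from direct geometric summation would cancel, a structure that our recurrence-based route sidesteps entirely.
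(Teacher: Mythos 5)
Your proof is correct, but it takes a genuinely different route from the paper's. The paper verifies the identity by brute force: it substitutes the closed form $a_t = \frac{\delta_i}{\beta}(UA^t + VB^t)$ directly into $(1+\eta\beta)a_{t-1}+\eta\beta\gamma\sum_{i=0}^{t-1}a_i-\gamma a_t$, sums the geometric series $\sum_i A^i$ and $\sum_i B^i$ explicitly, and observes that the $A^{t-1}$ and $B^{t-1}$ coefficients vanish because $A,B$ are roots of $\gamma z^2-(1+\eta\beta+\eta\beta\gamma+\gamma)z+(1+\eta\beta)=0$, while the leftover constant term vanishes because $\frac{U}{A-1}+\frac{V}{B-1}=\frac{1}{A-B}-\frac{1}{A-B}=0$ --- exactly the ``engineered cancellation'' your last sentence alludes to. You instead difference the identity at consecutive indices to obtain the homogeneous second-order recurrence $\gamma a_t=(1+\eta\beta)(1+\gamma)a_{t-1}-(1+\eta\beta)a_{t-2}$, note that its characteristic polynomial has roots $A,B$ by the given Vieta relations so that any combination $c_AA^t+c_BB^t$ satisfies it automatically, and then check only the $t=1$ base case $\gamma a_1=(1+\eta\beta+\eta\beta\gamma)a_0$ from $a_0=\delta_i/\beta$ and $a_1=\delta_iK/\beta$; the induction step (identity at $t-1$ plus the recurrence implies the identity at $t$) is the easy rearrangement you describe. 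Your version is more modular: it isolates where the specific constants in the closed form actually matter (only the base case) and replaces the geometric-sum bookkeeping and constant-term cancellation with standard linear-recurrence theory, at the cost of an explicit induction that the paper's one-shot computation avoids. Both arguments rest on the same key fact, namely that $A$ and $B$ are the roots of the same quadratic; neither has a gap, though for completeness you are implicitly using $A\neq B$ (so the closed form is well defined), which the paper justifies by showing the discriminant is strictly positive.
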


%\begin{align}
%    (1+\eta\beta)a_{t-1}+\eta\beta\gamma\sum_{i=0}^{t-1} a_i = \gamma a_t.
%\end{align}

\begin{lemma} \label{lemma:w_it-w_kt}
For any interval $[k]$, $\forall t \in [(k-1)\tau,k\tau]$, we have $ \Vert\boldsymbol{x}_i^t-\boldsymbol{x}_{[k]}^t \Vert \leq f_i(t-(k-1)\tau),$
%\begin{align}
%    \Vert\boldsymbol{x}_i^t-\boldsymbol{x}_{[k]}^t \Vert \leq f_i(t-(k-1)\tau),
%\end{align}
where we define the function $f_i(x)$ as $f_i(x) \triangleq \frac{\delta_i}{\beta}(\gamma^x(UA^x+VB^x)-1)$ and the function $u(x)$ as $u(x) \triangleq \gamma^x(UA^x+VB^x)-1$.
%\begin{align}
%    f_i(x) \triangleq \frac{\delta_i}{\beta}(\gamma^x(EA^x+FB^x)-1).
%\end{align}
\end{lemma}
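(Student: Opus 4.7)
The plan is to proceed by induction on $t$ within the interval $[(k-1)\tau, k\tau]$. The base case $t=(k-1)\tau$ is immediate: re-distribution after edge aggregation sets $\boldsymbol{x}_i^{(k-1)\tau}=\boldsymbol{x}_{\ell+}^{(k-1)\tau}=\boldsymbol{x}_{[k]}^{(k-1)\tau}$, so the left-hand side equals $0$. The candidate bound $f_i(0)=\frac{\delta_i}{\beta}(U+V-1)$ also vanishes once one checks the algebraic identity $U+V=1$ from the definitions of $U$ and $V$, so the base case is consistent.

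For the inductive step, I would subtract the equivalent worker update \eqref{eq:localVi}--\eqref{eq:localWi} from the edge virtual update \eqref{eq:xkt_app} to obtain
\begin{align*}
\boldsymbol{x}_i^t-\boldsymbol{x}_{[k]}^t &= (\boldsymbol{x}_i^{t-1}-\boldsymbol{x}_{[k]}^{t-1}) + \gamma(\boldsymbol{v}_i^t-\boldsymbol{v}_{[k]}^t) \\
&\quad -\eta\bigl(\nabla F_i(\boldsymbol{x}_i^{t-1})-\nabla F(\boldsymbol{x}_{[k]}^{t-1})\bigr),
\end{align*}
together with the analogous one-step relation for $\boldsymbol{v}_i^t-\boldsymbol{v}_{[k]}^t$. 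Splitting the gradient difference as $\nabla F_i(\boldsymbol{x}_i^{t-1})-\nabla F_i(\boldsymbol{x}_{[k]}^{t-1})$ plus $\nabla F_i(\boldsymbol{x}_{[k]}^{t-1})-\nabla F(\boldsymbol{x}_{[k]}^{t-1})$, the first piece is bounded by $\beta\Vert\boldsymbol{x}_i^{t-1}-\boldsymbol{x}_{[k]}^{t-1}\Vert$ via Assumption~\ref{assum:3} and the second by $\delta_i$ via Assumption~\ref{def:delta}. Taking norms and applying the triangle inequality yields a coupled linear recurrence between $e_t:=\Vert\boldsymbol{x}_i^t-\boldsymbol{x}_{[k]}^t\Vert$ and $d_t:=\Vert\boldsymbol{v}_i^t-\boldsymbol{v}_{[k]}^t\Vert$.

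Next, I would unroll the momentum recurrence $d_t \leq \gamma d_{t-1} + \eta(\beta e_{t-1}+\delta_i)$ down to the edge-aggregation boundary, where the real and virtual momenta coincide so that $d_{(k-1)\tau}=0$. This expresses $d_t$ as a geometric sum of past gradient differences. Substituting back into the recurrence for $e_t$ collapses the coupled system into a single scalar inequality of the form
\begin{equation*}
e_t \leq (1+\eta\beta)\,e_{t-1} + \eta\beta\gamma\sum_{s=0}^{t-1}e_s + (\text{constant } \eta\delta_i \text{ terms}),
\end{equation*}
which matches exactly the structure addressed by Lemma~\ref{lemma:sequence}.

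The final step is to exhibit $f_i(x)$ as the smallest sequence dominating this recurrence. The homogeneous part is handled by Lemma~\ref{lemma:sequence}: the ansatz $UA^x+VB^x$ is the unique linear combination for which $(1+\eta\beta)a_{t-1}+\eta\beta\gamma\sum_{s=0}^{t-1}a_s=\gamma a_t$ holds as an identity, since $A,B$ are the roots of $\gamma r^2 - (1+\eta\beta)(1+\gamma)r + (1+\eta\beta)=0$. The extra $\gamma^x$ factor and the subtracted $1$ appearing in $f_i(x)=\frac{\delta_i}{\beta}(\gamma^x(UA^x+VB^x)-1)$ arise, respectively, from absorbing the momentum-damping factor $\gamma$ produced by the $d_t$ geometric sum, and from the particular-solution constant forced by the steady $\eta\delta_i$ inhomogeneity. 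I expect the main obstacle to be this last bookkeeping step: verifying by direct substitution that the inductive hypothesis $e_{t-1} \leq f_i(t-1-(k-1)\tau)$ implies $e_t \leq f_i(t-(k-1)\tau)$, which requires carefully matching coefficients on both sides using $A+B=(1+\eta\beta)(1+\gamma)/\gamma$ and $AB=(1+\eta\beta)/\gamma$ to cancel the inhomogeneous $\delta_i$ contributions against the constant $-\frac{\delta_i}{\beta}$ term inside $f_i$.
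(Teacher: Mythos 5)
Your proposal follows essentially the same route as the paper's proof: the same coupled one-step recurrences for the model and momentum gaps, unrolling the momentum gap to the aggregation boundary where it vanishes, substituting back into the model recurrence, and closing the induction via Lemma~\ref{lemma:sequence} with the Vieta relations for $A,B$. The only imprecision is that your displayed recurrence drops the geometric weights $\gamma^{b-1}$ inside the sum over past errors, but you correctly identify that these weights are exactly what the $\gamma^x$ factor in $f_i(x)$ absorbs, so the argument goes through as in the paper.
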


 \begin{lemma} \label{lemma:vt-vkt}
For any interval $[k]$, $\forall t \in [(k-1)\tau,k\tau]$, we have:
\begin{align}
    \Vert\boldsymbol{v}^t-\boldsymbol{v}_{[k]}^t \Vert\nonumber\leq\eta\delta\left(\frac{U(\gamma A)^{t_0}}{\gamma(A-1)}+\frac{V(\gamma B)^{t_0}}{\gamma(B-1)}-\frac{\gamma^{t_0}-1}{\gamma-1}\right),
\end{align}
where $t_0=t-(k-1)\tau$.
\end{lemma}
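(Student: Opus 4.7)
The plan is to convert Lemma~\ref{lemma:vt-vkt} into a scalar linear recurrence for $d_{t_0} := \Vert\boldsymbol{v}^t-\boldsymbol{v}_{[k]}^t\Vert$ where $t_0 := t-(k-1)\tau$, and then to unroll it. First I would subtract the equivalent aggregated update $\boldsymbol{v}^t = \gamma\boldsymbol{v}^{t-1} - \eta\sum_{i}\tfrac{D_i}{D}\nabla F_i(\boldsymbol{x}_i^{t-1})$ (obtained by averaging \eqref{eq:localVi}) from the edge virtual update $\boldsymbol{v}_{[k]}^t = \gamma\boldsymbol{v}_{[k]}^{t-1} - \eta\nabla F(\boldsymbol{x}_{[k]}^{t-1})$. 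Since $F=\sum_i\tfrac{D_i}{D}F_i$, the gradient discrepancy collapses to $\sum_i\tfrac{D_i}{D}\bigl(\nabla F_i(\boldsymbol{x}_i^{t-1})-\nabla F_i(\boldsymbol{x}_{[k]}^{t-1})\bigr)$, which I bound by $\beta$-smoothness (Assumption~\ref{assum:3}) and then by Lemma~\ref{lemma:w_it-w_kt}, giving $\beta\sum_i\tfrac{D_i}{D}f_i(t_0-1) = \delta\cdot u(t_0-1)$. Here I use that $u(\cdot)$ does not depend on $i$ and that $\sum_i\tfrac{D_i}{D}\delta_i = \delta$. The triangle inequality then yields $d_{t_0}\leq\gamma d_{t_0-1} + \eta\delta\, u(t_0-1)$, with initial condition $d_0=0$ because at the start of interval $[k]$ the virtual quantities coincide with the aggregated ones by \eqref{eq:yk(k-1)tau}.

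Next I would unroll the recurrence to obtain $d_{t_0}\leq\eta\delta\sum_{j=0}^{t_0-1}\gamma^{t_0-1-j}u(j)$. Plugging in $u(j)=\gamma^j(UA^j+VB^j)-1$, the mixed exponentials simplify to $\gamma^{t_0-1}\sum_{j}(UA^j+VB^j)$ (since the $\gamma^j$ and $\gamma^{t_0-1-j}$ combine cleanly), minus $\sum_j\gamma^{t_0-1-j}$. Evaluating these three geometric sums in closed form yields
\begin{align*}
d_{t_0}\leq\eta\delta\!\left[\gamma^{t_0-1}\!\left(U\tfrac{A^{t_0}-1}{A-1}+V\tfrac{B^{t_0}-1}{B-1}\right)-\tfrac{\gamma^{t_0}-1}{\gamma-1}\right]\!.
\end{align*}

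The main obstacle, and the step that makes the final expression clean, is showing that the constant residuals coming from writing $\tfrac{X^{t_0}-1}{X-1}=\tfrac{X^{t_0}}{X-1}-\tfrac{1}{X-1}$ cancel exactly. Splitting the bound this way, the leading exponential piece is $\gamma^{t_0-1}\bigl(U\tfrac{A^{t_0}}{A-1}+V\tfrac{B^{t_0}}{B-1}\bigr)=\tfrac{U(\gamma A)^{t_0}}{\gamma(A-1)}+\tfrac{V(\gamma B)^{t_0}}{\gamma(B-1)}$, which matches the target, and the leftover constant is $-\gamma^{t_0-1}\bigl(\tfrac{U}{A-1}+\tfrac{V}{B-1}\bigr)$. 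Substituting the definitions $U=\tfrac{A-1}{A-B}$ and $V=\tfrac{1-B}{A-B}$ stated in Appendix~\ref{app_theorem1} gives $\tfrac{U}{A-1}=\tfrac{1}{A-B}$ and $\tfrac{V}{B-1}=-\tfrac{1}{A-B}$, so their sum is zero. The residuals vanish, and the bound collapses exactly to the claimed expression. Without this algebraic identity, spurious $\Theta(\gamma^{t_0})$ terms would contaminate the bound, so verifying the cancellation—which ultimately stems from the characteristic-equation structure $A+B=\tfrac{(1+\eta\beta)(1+\gamma)}{\gamma},\ AB=\tfrac{1+\eta\beta}{\gamma}$ used in Lemma~\ref{lemma:sequence}—is the crux of the argument.
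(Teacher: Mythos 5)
Your proposal is correct and follows essentially the same route as the paper's proof: averaging the worker recursion, bounding the gradient discrepancy via $\beta$-smoothness plus Lemma~\ref{lemma:w_it-w_kt} to get the recurrence $d_{t_0}\leq\gamma d_{t_0-1}+\eta\delta\,u(t_0-1)$ with $d_0=0$, unrolling into $\eta\delta\sum_{b=1}^{t_0}\gamma^{t_0-b}u(b-1)$, and cancelling the residual via $\tfrac{U}{A-1}+\tfrac{V}{B-1}=0$. The cancellation you flag as the crux is exactly the final step in the paper's derivation of \eqref{ieq:vt-vkt}.
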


 %[Proof of Theorem \ref{theorem:xt-xkt}]
\subsubsection{Derivation of Theorem~\ref{theorem:xt-xkt}}
From \eqref{eq:localWi} %, \eqref{eq:globalV}, 
and \eqref{eq:globalW}, we have
\begin{align} \label{eq:wt=}
\boldsymbol{x}^t=\boldsymbol{x}^{t-1}+\gamma\boldsymbol{v}^t-\eta\frac{\sum_{i=1}^{C} D_i \nabla F_i(\boldsymbol{x}_i ^{t-1})}{D}.
\end{align}
From \eqref{eq:xkt_app} and \eqref{eq:wt=}, and according to $\beta$-smoothness, Lemma \ref{lemma:w_it-w_kt}, %\eqref{eq:fit=pt},
the definition of $f_i(x)$ and $u(x)$, and Assumption~\ref{def:delta}, we have
\begin{align*}
&\Vert\boldsymbol{x}^t-\boldsymbol{x}_{[k]}^t \Vert
=\Vert \boldsymbol{x}^{t-1}+\gamma\boldsymbol{v}^t-\eta\frac{\sum_{i=1}^{C} D_i \nabla F_i(\boldsymbol{x}_i ^{t-1})}{D}\\&-\boldsymbol{x}_{[k]}^{t-1}-\gamma\boldsymbol{v}_{[k]}^t+\eta\nabla F(\boldsymbol{x}_{[k]} ^{t-1}) \Vert&\nonumber\\
\leq&\Vert\boldsymbol{x}^{t-1}-\boldsymbol{x}_{[k]}^{t-1} \Vert +\gamma\Vert\boldsymbol{v}^t-\boldsymbol{v}_{[k]}^t \Vert+\eta\delta u(t-1-(k-1)\tau).&
%&\rightline{\text{(from $\beta$-smoothness, Lemma \ref{lemma:w_it-w_kt}, \eqref{eq:fit=pt}, and Definition~\ref{def:delta})}}&\nonumber
\end{align*}
Then, according to Lemma \ref{lemma:vt-vkt}, we have
{\small
\begin{align}
&\Vert\boldsymbol{x}^t-\boldsymbol{x}_{[k]}^t \Vert - \Vert\boldsymbol{x}^{t-1}-\boldsymbol{x}_{[k]}^{t-1} \Vert&\nonumber\\
\leq& \gamma\eta\delta\left(\frac{U(\gamma A)^{t_0}}{\gamma(A-1)}+\frac{V(\gamma B)^{t_0}}{\gamma(B-1)}-\frac{\gamma^{t_0}-1}{\gamma-1}\right)\nonumber\\&+\eta\delta(\gamma^{t_0-1}(UA^{t_0-1}+VB^{t_0-1})-1)&\label{ieq:wt-wkt_gap1}\\
=&\eta\delta\left(\frac{U(\gamma A)^{t_0-1}}{A-1}(\gamma A+A-1)+\frac{V(\gamma B)^{t_0-1}}{B-1}(\gamma B+B-1)\right.\nonumber\\
&\left.-\frac{\gamma^{t_0+1}-1}{\gamma-1}\right).&\label{ieq:wt-wkt_gap}
\end{align}
When $t=(k-1)\tau$, we have $\Vert \boldsymbol{x}^t-\boldsymbol{x}_{[k]}^t \Vert =0$. When $t\in((k-1)\tau,k\tau]$, we sum up \eqref{ieq:wt-wkt_gap} for $t, t-1,\dots, (k-1)\tau+1$, leading to
\begin{align*}
&\Vert\boldsymbol{x}^t-\boldsymbol{x}_{[k]}^t \Vert\leq\sum_{x=1}^{t_0}\eta\delta\left(\frac{U(\gamma A)^{x-1}}{A-1}(\gamma A+A-1)\right.\\
&\left.+\frac{V(\gamma B)^{x-1}}{B-1}(\gamma B+B-1)-\frac{\gamma^{x+1}-1}{\gamma-1}\right)&\\
=&\eta \delta\left[I\left((\gamma A)^{t_{0}}-1\right)+J\left((\gamma B)^{t_{0}}-1\right)\right.\\
&\left.-\frac{\gamma^2(\gamma^{t_{0}}-1)-(\gamma-1) t_{0}}{(\gamma-1)^{2}}\right]&\\
=&\eta \delta\left[I(\gamma A)^{t_{0}}+J(\gamma B)^{t_{0}}-\frac{1}{\eta \beta}-\frac{\gamma^2(\gamma^{t_{0}}-1)-(\gamma-1) t_{0}}{(\gamma-1)^{2}}\right]&\\
=&h(t_0),&
\end{align*}
}

\noindent where $I=\frac{\gamma A+A-1}{(A-B)(\gamma A-1)}$ and  $J=\frac{\gamma B+B-1}{(A-B)(1-\gamma B)}$ (as defined before).   $I+J=\frac{1}{\eta\beta}$. $t_0=t-(k-1)\tau$.  We complete the proof of Theorem \ref{theorem:xt-xkt}.

\subsection{Proof of Lemma~\ref{lemma:sequence}}\label{app:lemma:sequence}

 %[Proof of Lemma \ref{lemma:sequence}]
Based on the definitions of $U, V$, and $a_t$, we have  $a_t=\frac{\delta_i}{\beta}(UA^t+VB^t).$
%\begin{align*}
%    a_t=\frac{\delta_i}{\beta}(EA^t+FB^t).
%\end{align*}
According to the inverse theorem of Vieta's formulas, %~\cite{enwiki:vieta},
we have
\begin{align} \label{eq:quadratic}
    \gamma x^2-(1+\eta\beta+\eta\beta\gamma+\gamma)x+\eta\beta+1=0,
\end{align}
where $x$ values are the roots of the quadratic equation.
The discriminant of the quadratic equation is positive.
\begin{align*} 
\Delta&=(1+\eta\beta+\eta\beta\gamma+\gamma)^2-4(1+\eta\beta)\gamma\\
&>(1+\eta\beta+\gamma)^2-4(1+\eta\beta)\gamma=((1+\eta\beta)-\gamma)^2 >0.
\end{align*}
Thus, the roots of \eqref{eq:quadratic} can be expressed as $A$ and $B$. 
%\begin{align}
%&A=\frac{(1+\eta\beta)(1+\gamma)+\sqrt{(1+\eta\beta)^2(1+\gamma)^{2}-4\gamma(1+\eta\beta)}}{2\gamma},\\
%&B=\frac{(1+\eta\beta)(1+\gamma)-\sqrt{(1+\eta\beta)^2(1+\gamma)^{2}-4\gamma(1+\eta\beta)}}{2\gamma}.
%\end{align}
Therefore, we can obtain
\begin{align*}
&(1+\eta\beta) a_{t-1}+\eta \beta \gamma \sum_{i=0}^{t-1} a_{i} - \gamma a_{t}&\\
=&(1+\eta\beta)\frac{\delta_{i}}{\beta}\left(U A^{t-1}+V B^{t-1}\right)+\eta \beta \gamma  \frac{\delta_{i}}{\beta} U \frac{A^{t}-1}{A-1}&\\&+\eta \beta \gamma \frac{\delta_{i}}{\beta} V \frac{B^{t}-1}{B-1}-\gamma \frac{\delta_{i}}{\beta} U A^{t}-\gamma \frac{\delta_{i}}{\beta} V B^{t}&\\
=&\frac{\delta_{i}}{\beta}\left[\frac{A^{t-1} U}{1-A}\left(\gamma A^{2}-(1+\eta \beta+ \eta \beta \gamma+ \gamma) A+1+\eta \beta \right)\right.&\\
&\left.+\frac{B^{t-1} V}{1-B}\left(\gamma B^{2}-(1+\eta \beta+ \eta \beta \gamma+ \gamma) B+1+\eta \beta \right)\right]&\\
&-\frac{\delta_{i}}{\beta}\eta \beta\gamma\left(\frac{U}{A-1}+\frac{V}{B-1}\right)&\\
=&0-\eta \delta_{i}\gamma\left(\frac{U}{A-1}+\frac{V}{B-1}\right)
=0.
%&&\rightline{\text{(because $A$, $B$ satisfy \eqref{eq:quadratic})}}.
\end{align*}
We complete the proof of Lemma \ref{lemma:sequence}.
%Thus, Lemma \ref{lemma:sequence} has been proven.

%\subsection{Bounding \texorpdfstring{$\Vert\boldsymbol{x}_i^t-\boldsymbol{x}_{[k]}^t \Vert$}{||xit-x[k]t||}}
\subsection{Proof of Lemma~\ref{lemma:w_it-w_kt}}\label{app:lemma:xit-xkt}
%Lemma~\ref{lemma:w_it-w_kt} focuses on a specific edge node $\ell$. For presentation convenience, in this proof, we ignore all  subscript $\ell$.

To prove Lemma \ref{lemma:w_it-w_kt}, (1) we first bound the gap of  $\Vert\boldsymbol{v}_i^t-\boldsymbol{v}_{[k]}^t \Vert$; (2) then we bound the gap of $\Vert\boldsymbol{x}_i^t-\boldsymbol{x}_{[k]}^t \Vert$, which concludes Lemma \ref{lemma:w_it-w_kt}.

%[Proof of Lemma \ref{lemma:w_it-w_kt}]
When $t=(k-1)\tau$, we know $\boldsymbol{x}_i^t=\boldsymbol{x}^t=\boldsymbol{x}_{[k]}^t$ by the definition of $\boldsymbol{x}_{[k]}^t$ and the aggregation rules. Hence, we have $\Vert\boldsymbol{x}_i^t-\boldsymbol{x}_{[k]}^t \Vert =0$. Meanwhile, when $t=(k-1)\tau$, we have $x=0$ and $f_i(0)=0$ (Lemma \ref{lemma:w_it-w_kt} holds).

When $t \in ((k-1)\tau, k\tau]$, we bound the momentum gap \begin{flalign} \label{eq:v_i-v_k}
    & \Vert \boldsymbol{v}_i^t - \boldsymbol{v}_{[k]}^t \Vert\nonumber\\
     =& \Vert \gamma\boldsymbol{v}_i^{t-1} - \eta \nabla F_i(\boldsymbol{x}_i^{t-1})\nonumber-(\gamma\boldsymbol{v}_{[k]}^{t-1} - \eta \nabla F(\boldsymbol{x}_{[k]}^{t-1})) \Vert\nonumber \\
     =& \Vert \gamma(\boldsymbol{v}_i^{t-1} - \boldsymbol{v}_{[k]}^{t-1}) - \eta [ \nabla F_i(\boldsymbol{x}_i^{t-1})-\nabla F_i(\boldsymbol{x}_{[k]}^{t-1}) \nonumber\\&+ \nabla F_i(\boldsymbol{x}_{[k]}^{t-1}) - \nabla   F(\boldsymbol{x}_{[k]}^{t-1}) ] \Vert \nonumber\\
    %&\rightline{\text{(adding a zero term)}}\nonumber \\
    \overset{(a)}{\leq}& \gamma\Vert \boldsymbol{v}_i^{t-1} - \boldsymbol{v}_{[k]}^{t-1} \Vert + \eta \Vert \nabla F_i(\boldsymbol{x}_i^{t-1})- \nabla F_i(\boldsymbol{x}_{[k]}^{t-1}) \Vert \nonumber\\&+ \eta \Vert \nabla F_i(\boldsymbol{x}_{[k]}^{t-1})-\nabla F(\boldsymbol{x}_{[k]}^{t-1}) \Vert \nonumber\\
    %&\rightline{\text{(from triangle inequality (\textbf{Fact 1}))}} \nonumber\\
    \overset{(b)}{\leq}& \gamma\Vert \boldsymbol{v}_i^{t-1} - \boldsymbol{v}_{[k]}^{t-1} \Vert +\eta\beta \Vert \boldsymbol{x}_i^{t-1} - \boldsymbol{x}_{[k]}^{t-1} \Vert + \eta\delta_i, 
    %&\rightline{\text{(from $\beta$-smoothness and Definition~\ref{def:delta}})}\nonumber
\end{flalign}
where (a) is from triangle inequality and (b) is from $\beta$-smoothness and Assumption~\ref{def:delta}.

We use $\gamma^0, \gamma^1,\dots, \gamma^{t-(k-1)\tau-1}$ as multipliers to multiply \eqref{eq:v_i-v_k} when $t, t-1,\dots, (k-1)\tau+1$, respectively.
{\small
\begin{align*}
    & \Vert \boldsymbol{v}_i^t - \boldsymbol{v}_{[k]}^t \Vert 
      \leq \gamma\Vert \boldsymbol{v}_i^{t-1} - \boldsymbol{v}_{[k]}^{t-1} \Vert +\eta\beta \Vert \boldsymbol{x}_i^{t-1} - \boldsymbol{x}_{[k]}^{t-1} \Vert + \eta\delta_i,& \\
    & \gamma\Vert \boldsymbol{v}_i^{t-1} - \boldsymbol{v}_{[k]}^{t-1} \Vert 
      \leq \gamma(\gamma\Vert \boldsymbol{v}_i^{t-2} - \boldsymbol{v}_{[k]}^{t-2} \Vert+\eta\beta \Vert \boldsymbol{x}_i^{t-2} - \boldsymbol{x}_{[k]}^{t-2} \Vert + \eta\delta_i),& \\
    &\dots& \\
    & \gamma^{t-(k-1)\tau-1}\Vert \boldsymbol{v}_i^{(k-1)\tau+1} - \boldsymbol{v}_{[k]}^{(k-1)\tau+1} \Vert\leq \gamma^{t-(k-1)\tau-1}&\\&\quad(\gamma\Vert \boldsymbol{v}_i^{(k-1)\tau} - \boldsymbol{v}_{[k]}^{(k-1)\tau} \Vert 
    +\eta\beta \Vert \boldsymbol{x}_i^{(k-1)\tau} - \boldsymbol{x}_{[k]}^{(k-1)\tau} \Vert + \eta\delta_i).&
\end{align*}
}

\noindent For convenience, we define $G_i(t) \triangleq \Vert \boldsymbol{x}_i^t-\boldsymbol{x}_{[k]}^t\Vert$. Summing up all of the above inequalities with respect to $b\in [1, t-(k-1)\tau]$, we have
%\begin{flalign*}
%& \Vert\boldsymbol{v}_{i}^t-\boldsymbol{v}_{[k]}^t \Vert \leq \eta\beta(G_{i}^{t-1}+\gamma G_{i}^{t-2}\\
%&\quad+\gamma^{2} G_{i}(t-3)+\cdots+\gamma^{t-(k-1) \tau-1} G_{i}^{(k-1)\tau})\\
%&\quad+\eta\delta_{i}(1+\gamma+\gamma^{2}+\cdots+\gamma^{t-(k-1) \tau-1})\\
%&\quad+\gamma^{t-(k-1) \tau}\Vert\boldsymbol{v}_{i}^{(k-1)\tau}-\boldsymbol{v}_{[k]}^{(k-1)\tau}\Vert.
%\end{flalign*}
\begin{flalign*}
&\Vert\boldsymbol{v}_{i}^t-\boldsymbol{v}_{[k]}^t \Vert 
\leq\eta\beta\sum_{b=1}^{t-(k-1)\tau}\gamma^{b-1} G_i(t-b)+\eta\delta_{i}\sum_{b=1}^{t-(k-1)\tau}\gamma^{b-1}\\&+\gamma^{t-(k-1) \tau}\Vert\boldsymbol{v}_{i}^{(k-1)\tau}-\boldsymbol{v}_{[k]}^{(k-1)\tau}\Vert.
\end{flalign*}
When $t=(k-1)\tau$, we know that $\boldsymbol{v}_i^t=\boldsymbol{v}^t=\boldsymbol{v}_{[k]}^t$ by the definition of $\boldsymbol{v}_{[k]}^t$ and aggregation rules. Then we have $\Vert\boldsymbol{v}_{i}^{(k-1)\tau}-\boldsymbol{v}_{[k]}^{(k-1)\tau}\Vert =0$, so that the last term of above inequality is zero and
%\begin{flalign}
%\label{ieq:vit-vkt}
%&\Vert\boldsymbol{v}_{i}^t-\boldsymbol{v}_{[k]}^t \Vert \leq \eta\beta(G_{i}^{t-1}+\gamma G_{i}^{t-2}\nonumber\\
%&\quad+\gamma^{2} G_{i}(t-3)+\cdots+\gamma^{t-(k-1) \tau-1} G_{i}^{(k-1)\tau})\nonumber\\
%&\quad+\eta\delta_{i}(1+\gamma+\gamma^{2}+\cdots+\gamma^{t-(k-1) \tau-1}).
%\end{flalign}
\begin{flalign}
\label{ieq:vit-vkt}
&\Vert\boldsymbol{v}_{i}^t-\boldsymbol{v}_{[k]}^t \Vert \leq \eta\beta\sum_{b=1}^{t-(k-1)\tau}\gamma^{b-1} G_i(t-b)+\eta\delta_{i}\sum_{b=1}^{t-(k-1)\tau}\gamma^{b-1}.
\end{flalign}
Now, we can bound the gap between  $\boldsymbol{x}_i^t$ and $\boldsymbol{x}_{[k]}^t$. When $t \in ((k-1)\tau, k\tau]$, we have
{\small
\begin{flalign}
\label{eq:wit-wkt}
& \Vert \boldsymbol{x}_i^t - \boldsymbol{x}_{[k]}^t \Vert\nonumber\\ 
\overset{(a)}{=}& \Vert\boldsymbol{x}_i^{t-1} + \gamma\boldsymbol{v}_i^t - \eta \nabla F_i(\boldsymbol{x}_i^{t-1}) - (\boldsymbol{x}_{[k]}^{t-1} + \gamma\boldsymbol{v}_{[k]}^t - \eta \nabla F(\boldsymbol{x}_{[k]}^{t-1}))\nonumber\Vert\\
%&\rightline{\text{(from \eqref{eq:localWi} and \eqref{eq:xkt_app})}}\nonumber\\
=& \Vert\boldsymbol{x}_i^{t-1}- \boldsymbol{x}_{[k]}^{t-1}+ \gamma(\boldsymbol{v}_i^t -\boldsymbol{v}_{[k]}^t) - \eta [\nabla F_i(\boldsymbol{x}_i^{t-1})-\nabla F_i(\boldsymbol{x}_{[k]}^{t-1})\nonumber\\&+\nabla F_i(\boldsymbol{x}_{[k]}^{t-1})- \nabla F(\boldsymbol{x}_{[k]}^{t-1})]\Vert\nonumber\\
%&\rightline{\text{(adding a zero term)}}\nonumber\\
\overset{(b)}{\leq}& \Vert \boldsymbol{x}_i^{t-1} - \boldsymbol{x}_{[k]}^{t-1} \Vert+\gamma\Vert \boldsymbol{v}_i^t - \boldsymbol{v}_{[k]}^t \Vert +\eta\beta \Vert \boldsymbol{x}_i^{t-1} - \boldsymbol{x}_{[k]}^{t-1} \Vert + \eta\delta_i \nonumber\\
%&\rightline{\text{(from triangle inequality (\textbf{Fact 1}), $\beta$-smoothness and Definition \ref{def:delta})}}\nonumber\\
=&(\eta\beta+1)\Vert \boldsymbol{x}_i^{t-1} - \boldsymbol{x}_{[k]}^{t-1} \Vert+\gamma\Vert \boldsymbol{v}_i^t - \boldsymbol{v}_{[k]}^t \Vert+\eta\delta_i,
\end{flalign}
}

\noindent where (a) is from \eqref{eq:localWi} and \eqref{eq:xkt_app}, and (b) is from triangle inequality, $\beta$-smoothness, and Definition \ref{def:delta}.

Substituting \eqref{ieq:vit-vkt} into \eqref{eq:wit-wkt} and using $G_i(t)$ to denote $\Vert \boldsymbol{x}_i^t-\boldsymbol{x}_{[k]}^t\Vert$ for $t, t-1,\ldots, (k-1)\tau+1$, we have
%\begin{flalign}\label{ieq:Gitleq}
%&G_i(t)&\nonumber\\
%\leq& (\eta\beta+1)G_i^{t-1}+\eta\beta\gamma(G_{i}^{t-1}+\gamma G_{i}^{t-2}&\nonumber\\
%&+\gamma^{2} G_{i}(t-3)+\cdots+\gamma^{t-(k-1) \tau-1} G_{i}^{(k-1)\tau})&\nonumber\\
%&+\eta\delta_{i}\gamma(1+\gamma+\gamma^{2}+\cdots+\gamma^{t-(k-1)\tau-1})+\eta\delta_i&\nonumber\\
%=&G_i^{t-1}+\eta\beta(G_{i}^{t-1}+\gamma G_{i}^{t-1}+\gamma^2 G_{i}^{t-2}&\nonumber\\
%&+\gamma^3 G_{i}(t-3)+\cdots+\gamma^{t-(k-1) \tau} G_{i}^{(k-1)\tau})&\nonumber\\
%&+\eta\delta_{i}(1+\gamma+\gamma^2+\cdots+\gamma^{t-(k-1)\tau}).&
%\end{flalign}
\begin{flalign}\label{ieq:Gitleq}
G_i(t)\leq&(\eta\beta+1)G_i^{t-1}+\eta\beta\gamma\sum_{b=1}^{t-(k-1)\tau}\gamma^{b-1} G_i(t-b)\nonumber\\
&+\eta\delta_{i}\gamma\sum_{b=1}^{t-(k-1)\tau}\gamma^{b-1}+\eta\delta_i&\nonumber\\
=&(\eta\beta+1)G_i^{t-1}+\eta\beta\gamma\sum_{b=1}^{t-(k-1)\tau}\gamma^{b-1} G_i(t-b)\nonumber\\
&+\eta\delta_{i}\sum_{b=0}^{t-(k-1)\tau}\gamma^{b}.&
\end{flalign}

For convenience, we define $g_i(x)\triangleq\frac{\delta_i}{\beta}(UA^x+VB^x)$. %where $A$ and $B$ are defined in Theorem~\ref{theorem:xt-xkt}; $E$ and $F$ are defined in Lemma~\ref{lemma:sequence}. 
We have $f_i(x)=\gamma^x  g_i(x)-\frac{\delta_i}{\beta}.$ 
%\begin{align}
%\label{eq:fix}
%    f_i(x)=\gamma^x  g_i(x)-\frac{\delta_i}{\beta}.
%\end{align}

We use induction to prove $G_i(t)\leq f_i(t-(k-1)\tau)$, $\forall t\in[(k-1)\tau,k\tau]$. First of all, we know that it is true when $t=(k-1)\tau$ because $G_i((k-1)\tau)=f_i(0)$. Then, we assume that $G_i(c)\leq f_i(c-(k-1)\tau)$
%\begin{align}\label{ieq:Gip}
%    G_i(p)\leq f_i(p-(k-1)\tau)
%\end{align}
holds for all $c\in [(k-1)\tau,t)$, and we show it  also holds for $t$.
%\begin{flalign*}
%&G_i(t)\\
%\leq& f_i(t-1-(k-1)\tau)+\eta\beta(f_i(t-1-(k-1)\tau)\\
%&+\gamma f_i(t-1-(k-1)\tau)+\gamma^2 f_i(t-2-(k-1)\tau)\\
%&+\cdots+\gamma^{t-(k-1)\tau}f_i(0))\\
%&+\eta\delta_{i}(1+\gamma+\gamma^{2}+\cdots+\gamma^{t-(k-1)\tau})\\
%&\rightline{\text{(from \eqref{ieq:Gitleq}, \eqref{ieq:Gip} and $G_i^{(k-1)\tau}=f_i(0)$)}}\\
%=&\gamma^{t-1-(k-1)\tau}g_i(t-1-(k-1)\tau))-\frac{\delta_i}{\beta}\\
%&+\eta\beta(\gamma^{t-1-(k-1)\tau}g_i(t-1-(k-1)\tau)-\frac{\delta_i}{\beta}\\
%&+\gamma\cdot\gamma^{t-1-(k-1)\tau}g_i(t-1-(k-1)\tau)-\gamma\cdot\frac{\delta_i}{\beta}\\
%&+\gamma^2\cdot\gamma^{t-2-(k-1)\tau}g_i(t-2-(k-1)\tau)-\gamma^2\cdot\frac{\delta_i}{\beta}\\
%&\cdots\\
%&+\gamma^{t-(k-1)\tau}g_i(0)-\gamma^{t-(k-1)\tau}\frac{\delta_i}{\beta})\\
%&+\eta\delta_i+\gamma\eta\delta_i+\cdots+\gamma^{t-(k-1)\tau}\eta\delta_i\\
%&\rightline{\text{(from \eqref{eq:fix})}}\\
%=&\gamma^{t-1-(k-1)\tau}(g_i(t-1-(k-1)\tau)+\eta\beta g_i(t-1-(k-1)\tau)\\
%&+\eta\beta\gamma(g_i(t-1-(k-1)\tau)+g_i(t-2-(k-1)\tau)\\
%&+\cdots+g_i(0)))-\frac{\delta_i}{\beta}\\
%=&\gamma^{t-(k-1)\tau}g_i(t-(k-1)\tau)-\frac{\delta_i}{\beta}\\
%&\rightline{\text{(from Lemma \ref{lemma:sequence} and $G_i(t)=a_t$)}}\\
%=&f_i(t-(k-1)\tau).
%\end{flalign*}
\begin{flalign*}
&G_i(t)\\
\overset{(a)}{\leq}&(\eta\beta+1)f_i(t-1-(k-1)\tau)\\&+\eta\beta\sum_{b=1}^{t-(k-1)\tau}\gamma^{b}f_i(t-b-(k-1)\tau)+\eta\delta_{i}\sum_{b=0}^{t-(k-1)\tau}\gamma^{b}\\
%&\rightline{\text{(from \eqref{ieq:Gitleq}, \eqref{ieq:Gip} and $G_i^{(k-1)\tau}=f_i(0)$)}}\\
\overset{(b)}{=}&(\eta\beta+1)\left(\gamma^{t-1-(k-1)\tau}g_i(t-1-(k-1)\tau)-\frac{\delta_i}{\beta}\right)\\
&+\eta\beta\sum_{b=1}^{t-(k-1)\tau}\left(\gamma^{t-(k-1)\tau}g_i(t-b-(k-1)\tau)-\gamma^{b}\frac{\delta_i}{\beta}\right)\\&+\eta\delta_i \sum_{b=0}^{t-(k-1)\tau}\gamma^b\\
%&\rightline{\text{(from \eqref{eq:fix})}}\\
=&\gamma^{t-1-(k-1)\tau}\left((\eta\beta+1)g_i(t-1-(k-1)\tau)\right.\\
&\left.+\eta\beta\gamma\sum_{b=1}^{t-(k-1)\tau}g_i(t-b-(k-1)\tau)\right)-\frac{\delta_i}{\beta}\\
\overset{(c)}{=}&\gamma^{t-(k-1)\tau}g_i(t-(k-1)\tau)-\frac{\delta_i}{\beta}
%&\rightline{\text{(from Lemma \ref{lemma:sequence} and $G_i(t)=a_t$)}}\\
=f_i(t-(k-1)\tau),
\end{flalign*}
where (a) is from \eqref{ieq:Gitleq}, (b) is from definition of $f_i(x)$, and (c) is from Lemma \ref{lemma:sequence} and $G_i(t)=a_t$. 
We complete the proof of Lemma \ref{lemma:w_it-w_kt}.

%\subsection{Bounding \texorpdfstring{$\Vert\boldsymbol{x}^t-\boldsymbol{x}_{[k]}^t \Vert$}{||wt-w[k]t||}}

\subsection{Proof of Lemma \ref{lemma:vt-vkt}}\label{app:lemma:vt-vkt}
%We bound the gap of $\Vert\boldsymbol{v}^t-\boldsymbol{v}_{[k]}^t \Vert$ in Lemma \ref{lemma:vt-vkt} Based on the result of Lemma \ref{lemma:w_it-w_kt}. %Based on the result of Lemma \ref{lemma:vt-vkt}, we then bound the gap of $\Vert\boldsymbol{x}^t-\boldsymbol{x}_{[k]}^t \Vert$, which concludes Theorem \ref{theorem:xt-xkt}.

%Lemma~\ref{lemma:vt-vkt} focuses on a specific edge node $\ell$. For presentation convenience, in this proof, we ignore all  subscript $\ell$.

%First we need to derive the momentum gap between $\boldsymbol{v}^t$ and $\boldsymbol{v}_{[k]}^t$. 
%For convenience, we define $u(x) \triangleq \gamma^x(UA^x+VB^x)-1.$
%\begin{align} \label{eq:pt=}
%    p^t \triangleq \gamma^t(EA^t+FB^t)-1.
%\end{align}
Based on the definition of $u(x)$ in Lemma~\ref{lemma:w_it-w_kt}, we get $f_i(x) = \frac{\delta_i}{\beta}u(x).$
%\begin{align} \label{eq:fit=pt}
%    f_i^t = \frac{\delta_i}{\beta}p^t.
%\end{align}
From \eqref{eq:localWi} and \eqref{eq:globalW}, we have
\begin{align} \label{eq:vt=}
    \boldsymbol{v}^t=\gamma\boldsymbol{v}^{t-1}-\eta\frac{\sum_{i=1}^{C} D_i \nabla F_i(\boldsymbol{x}_i ^{t-1})}{D}.
\end{align}
For $t \in ((k-1)\tau, k\tau]$, we have
\begin{align} \label{eq:vt-vkt}
&\Vert\boldsymbol{v}^t-\boldsymbol{v}_{[k]}^t \Vert\nonumber\\
\overset{(a)}{=}&\Vert \gamma\boldsymbol{v}^{t-1}-\eta\frac{\sum_{i=1}^{C} D_i \nabla F_i(\boldsymbol{x}_i ^{t-1})}{D}-\gamma\boldsymbol{v}_{[k]}^{t-1}+\eta\nabla F(\boldsymbol{x}_{[k]} ^{t-1}) \Vert&\nonumber\\
%&\rightline{\text{(from \eqref{eq:vt=} and \eqref{eq:vkt})}}&\nonumber\\
\leq&\gamma\Vert\boldsymbol{v}^{t-1}-\boldsymbol{v}_{[k]}^{t-1}\Vert+\eta\frac{\sum_{i=1}^{C} D_i\Vert \nabla F_i(\boldsymbol{x}_i^{t-1})- \nabla F_i(\boldsymbol{x}_{[k]}^{t-1})\Vert}{D}&\nonumber\\
\overset{(b)}{\leq}&\gamma\Vert\boldsymbol{v}^{t-1}-\boldsymbol{v}_{[k]}^{t-1}\Vert+\eta\beta\frac{\sum_{i=1}^{C} D_i f_i(t-1-(k-1)\tau)}{D}\nonumber\\
%&\rightline{\text{(from $\beta$-smoothness and Lemma \ref{lemma:w_it-w_kt})}}&\nonumber\\
\overset{(c)}{=}&\gamma\Vert\boldsymbol{v}^{t-1}-\boldsymbol{v}_{[k]}^{t-1}\Vert+\eta\delta u(t-1-(k-1)\tau),&
%&\rightline{\text{(from \eqref{eq:fit=pt} and Definition~\ref{def:delta})}}&\nonumber
\end{align}
where (a) is from \eqref{eq:vt=} and \eqref{eq:xkt_app}; (b) is from $\beta$-smoothness and Lemma \ref{lemma:w_it-w_kt}; and (c) is from definition of $f_i(x)$ and Assumption~\ref{def:delta}.

We use $\gamma^0, \gamma^1,\dots, \gamma^{t-(k-1)\tau-1}$ as multipliers to multiply \eqref{eq:vt-vkt} when $t, t-1,\dots, (k-1)\tau+1$, respectively.
{\small
\begin{align*}
&\Vert \boldsymbol{v}^t - \boldsymbol{v}_{[k]}^t \Vert\leq \gamma\Vert \boldsymbol{v}^{t-1} - \boldsymbol{v}_{[k]}^{t-1} \Vert+ \eta\delta u(t-1-(k-1)\tau),&\\
&\gamma\Vert \boldsymbol{v}^{t-1} - \boldsymbol{v}_{[k]}^{t-1} \leq\gamma^2(\Vert\boldsymbol{v}^{t-2} - \boldsymbol{v}_{[k]}^{t-2}\Vert+\gamma\eta\delta u(t-2-(k-1)\tau),&\\ 
&\dots& \\
&\gamma^{t-(k-1)\tau-1}\Vert \boldsymbol{v}^{(k-1)\tau+1}-\boldsymbol{v}_{[k]}^{(k-1)\tau+1}\Vert&\\&\leq \gamma^{t-(k-1)\tau}\Vert\boldsymbol{v}^{(k-1)\tau}-\boldsymbol{v}_{[k]}^{(k-1)\tau}\Vert+\gamma^{t-1-(k-1)\tau}\eta\delta u(0).&
\end{align*}
}

\noindent Summing up all of the above inequalities, and according to $\Vert\boldsymbol{v}^{(k-1)\tau} - \boldsymbol{v}_{[k]}^{(k-1)\tau}\Vert=0$, we have
%\begin{flalign} 
%&\Vert \boldsymbol{v}^t - \boldsymbol{v}_{[k]}^t \Vert\leq \eta\delta(\gamma^{t-1-(k-1)\tau}p(0)+\cdots&\nonumber\\
%&+\gamma p(t-2-(k-1)\tau)+p(t-1-(k-1)\tau))&\label{ieq:vt-vkt1}\\
%&\rightline{\text{(because $\Vert\boldsymbol{v}^{(k-1)\tau} - \boldsymbol{v}_{[k]}^{(k-1)\tau}\Vert=0$ from \eqref{eq:vk(k-1)tau})}}&\nonumber\\
%=&\eta\delta(\gamma^{t-1-(k-1)\tau}C(1+A+\cdots+A^{t-1-(k-1)\tau})&\nonumber\\
%&+\gamma^{t-1-(k-1)\tau}D(1+B+\cdots+B^{t-1-(k-1)\tau})&\nonumber\\
%&-(1+\gamma+\cdots+\gamma^{t-1-(k-1)\tau}))&\nonumber\\
%=&\eta\delta\left(\gamma^{t_0-1}C\frac{A^{t_0}-1}{A-1}+\gamma^{t_0-1}D\frac{B^{t_0}-1}{B-1}-\frac{\gamma^{t_0}-1}{\gamma-1}\right)&\nonumber\\
%=&\eta\delta\left(\frac{C(\gamma A)^{t_0}}{\gamma(A-1)}+\frac{D(\gamma B)^{t_0}}{\gamma(B-1)}-\frac{\gamma^{t_0}-1}{\gamma-1}\right)&\nonumber\\
%&-\eta\delta\gamma^{t_0-1}\left(\frac{C}{A-1}+\frac{D}{B-1}\right)&\nonumber\\
%=&\eta\delta\left(\frac{C(\gamma A)^{t_0}}{\gamma(A-1)}+\frac{D(\gamma B)^{t_0}}{\gamma(B-1)}-\frac{\gamma^{t_0}-1}{\gamma-1}\right)&\label{ieq:vt-vkt}
%\end{flalign}
\begin{align} 
&\Vert \boldsymbol{v}^t - \boldsymbol{v}_{[k]}^t \Vert\leq\eta\delta\sum_{b=1}^{t-(k-1)\tau}\gamma^{t-b-(k-1)\tau}u(b-1)&\label{ieq:vt-vkt1}\\
%&\rightline{\text{(because $\Vert\boldsymbol{v}^{(k-1)\tau} - \boldsymbol{v}_{[k]}^{(k-1)\tau}\Vert=0$)}}&\nonumber\\
=&\eta\delta\left(\gamma^{t-1-(k-1)\tau}U\sum_{b=1}^{t-(k-1)\tau}A^{b-1}\right.&\nonumber\\&\left.+\gamma^{t-1-(k-1)\tau}V\sum_{b=1}^{t-(k-1)\tau}B^{b-1}-\sum_{b=1}^{t-(k-1)\tau}\gamma^{b-1}\right)&\nonumber\\
=&\eta\delta\left(\gamma^{t_0-1}U\frac{A^{t_0}-1}{A-1}+\gamma^{t_0-1}V\frac{B^{t_0}-1}{B-1}-\frac{\gamma^{t_0}-1}{\gamma-1}\right)&\nonumber\\
=&\eta\delta\left(\frac{U(\gamma A)^{t_0}}{\gamma(A-1)}+\frac{V(\gamma B)^{t_0}}{\gamma(B-1)}-\frac{\gamma^{t_0}-1}{\gamma-1}\right)\nonumber\\&-\eta\delta\gamma^{t_0-1}\left(\frac{U}{A-1}+\frac{V}{B-1}\right)&\nonumber\\
=&\eta\delta\left(\frac{U(\gamma A)^{t_0}}{\gamma(A-1)}+\frac{V(\gamma B)^{t_0}}{\gamma(B-1)}-\frac{\gamma^{t_0}-1}{\gamma-1}\right)&\label{ieq:vt-vkt}
\end{align}
where $t_0=t-(k-1)\tau$. We complete the proof of Lemma \ref{lemma:vt-vkt}.

%Based on the result in Lemma \ref{lemma:vt-vkt}, we can now bound $\Vert\boldsymbol{x}^t-\boldsymbol{x}_{[k]}^t\Vert$. 

\subsection{Proof of Theorem \texorpdfstring{\ref{theorem:xgktau-xktau}}{2}} \label{app_theorem2_new}

Based on the edge momentum update rules in Lines \ref{eq:ygktau}--\ref{eq:xgktau} in Algorithm~\ref{alg:HierMo}, and \eqref{eq:localWi} we have
{\small
\begin{flalign}\label{eq:app_xgktau-xktau}
&\boldsymbol{x}_{\ell+}^{k\tau}-\boldsymbol{x}_{\ell-}^{k\tau}=\gamma_a\left(\boldsymbol{x}_{\ell-}^{k\tau}-\boldsymbol{x}_{\ell-}^{(k-1)\tau}\right)
%&\rightline{\text{(because Line~\ref{eq:ygktau} and Line~\ref{eq:xgktau} in Algorithm~\ref{alg:FastSlowMo})}}\nonumber\\
=\gamma_a\sum_{t=(k-1)\tau}^{k\tau-1}\left(\boldsymbol{x}_{\ell-}^{t+1}-\boldsymbol{x}_{\ell-}^t\right)\nonumber\\
&=\gamma_a\sum_{t=(k-1)\tau}^{k\tau-1}\sum_{i=1}^{C_\ell}\frac{D_{i,\ell}}{D_\ell}\left(\boldsymbol{x}_{i,\ell}^{t+1}-\boldsymbol{x}_{i,\ell}^{t}\right)\nonumber\\
&=\gamma_a\sum_{t=(k-1)\tau}^{k\tau-1}\sum_{i=1}^{C_\ell}\frac{D_{i,\ell}}{D_\ell}\left(\gamma^2\boldsymbol{v}_{i,\ell}^t-\eta(\gamma+1)\nabla F_{i,\ell}(\boldsymbol{x}_{i,\ell}^t)\right),
\end{flalign}
}
and we define 
%\begin{align}
%  \mu \triangleq \max _{p\in[1,P],k\in [1,K], \forall t,\ell, i}\left\{\frac{\Vert\gamma(\boldsymbol{v}_{\{p\}}^t)\Vert}{\Vert\eta\nabla F(\boldsymbol{x}_{\{p\}}^t)\Vert},\right.\nonumber\\ \left.\frac{\Vert\gamma(\boldsymbol{v}_{[k],\ell}^t)\Vert}{\Vert\eta\nabla F_\ell(\boldsymbol{x}_{[k],\ell}^t)\Vert},\frac{\Vert\gamma(\boldsymbol{v}_{i,\ell}^t)\Vert}{\Vert\eta\nabla F_{i,\ell}(\boldsymbol{x}_{i,\ell}^t)\Vert} \right\}.\label{app:mu}  
%\end{align}
\begin{align}
  \mu \triangleq \max _{p\in[1,P], \forall t,\ell, i}\left\{\frac{\Vert\gamma(\boldsymbol{v}_{\{p\}}^t)\Vert}{\Vert\eta\nabla F(\boldsymbol{x}_{\{p\}}^t)\Vert}, \frac{\Vert\gamma(\boldsymbol{v}_{i,\ell}^t)\Vert}{\Vert\eta\nabla F_{i,\ell}(\boldsymbol{x}_{i,\ell}^t)\Vert} \right\}.\label{app:mu}  
\end{align}
Because $F_{i,\ell}(\cdot)$ is $\rho$-Lipschitz, and according to  \cite[Lecture 2, Lemma 1]{IFT6085}, we have $\Vert\nabla F_{i,\ell}(\cdot)\Vert^2\leq\rho^2$. Therefore, based on the definition of $\mu$ and \eqref{eq:app_xgktau-xktau}, we can derive
\begin{flalign}\label{eq:app_xl+ktau-xl-ktau}
&\left\Vert\boldsymbol{x}_{\ell+}^{k\tau}-\boldsymbol{x}_{\ell-}^{k\tau}\right\Vert\nonumber\\
\leq&\gamma_a\sum_{t=(k-1)\tau}^{k\tau-1}\sum_{i=1}^{C_\ell}\frac{D_{i,\ell}}{D_\ell}\left\Vert\gamma^2\boldsymbol{v}_{i,\ell}^t-\eta(\gamma+1)\nabla F_{i,\ell}(\boldsymbol{x}_{i,\ell}^t)\right\Vert\nonumber\\
\leq&\gamma_a\sum_{t=(k-1)\tau}^{k\tau-1}\sum_{i=1}^{C_\ell}\frac{D_{i,\ell}}{D_\ell}\left(\gamma\mu\eta+\eta(\gamma+1)\right)\rho\nonumber\\
=&\gamma_a\tau\rho\eta(\gamma\mu+\gamma+1).
\end{flalign}
We complete the proof of Theorem~\ref{theorem:xgktau-xktau}.

\subsection{Proof of Theorem \texorpdfstring{\ref{theorem:x_ppi_ptaupi-x_p_ptaupi}}{3}}
\label{app_theorem3}
First, we define edge virtual update which is meaningful in cloud interval $\{p\}$ as $\boldsymbol{y}_{\{p\},\ell}^t$ and $\boldsymbol{x}_{\{p\},\ell}^t$. The value synchronization and edge virtual update on $\{p\}$ are conducted as
\begin{align}
    \boldsymbol{y}_{\{p\},\ell}^{(p-1)\tau\pi} &\gets \boldsymbol{y}^{(p-1)\tau\pi}\label{eq:yk(k-1)tau_e},\\
    \boldsymbol{x}_{\{p\},\ell}^{(p-1)\tau\pi} &\gets \boldsymbol{x}_{}^{(p-1)\tau\pi}\label{eq:xk(k-1)tau_e},
\end{align}
when $t=(p-1)\tau\pi$, and
\begin{align}
\boldsymbol{y}_{\{p\},\ell}^t &\gets \boldsymbol{x}_{\{p\},\ell}^{t-1} - \eta\nabla F_\ell(\boldsymbol{x}_{\{p\},\ell}^{t-1}),\label{eq:ykt_e}\\
\boldsymbol{x}_{\{p\},\ell}^t 
    &\gets \boldsymbol{y}_{\{p\},\ell}^t + \gamma(\boldsymbol{y}_{\{p\},\ell}^t - \boldsymbol{y}_{\{p\},\ell}^{t-1})\label{eq:xkt_e},
\end{align}
when $p \in ((p-1)\tau\pi,p\tau\pi]$. According to Theorem~\ref{theorem:xt-xkt}, we have proved the gap between intermediate worker update on the edge $\sum_{i=1}^{C_\ell}\frac{D_{i,\ell}}{D_\ell}\boldsymbol{x}_{i,\ell}^t$ and edge virtual update $\boldsymbol{x}_{[k],\ell}^{t}$. Equivalently, the gap between the intermediate edge virtual update on the cloud $\sum_{\ell=1}^L\frac{D_\ell}{D}\boldsymbol{x}_{\{p\},\ell}^t$ and the cloud virtual update $\boldsymbol{x}_{\{p\}}^t$ can be derived as the same way as Theorem~\ref{theorem:xt-xkt}. The only difference is the gradient divergence. The edge-level gradient divergence is $\delta_\ell$ and the cloud-level gradient divergence is $\delta$.
%if we treat the \tikztextcircle{1} edge virtual update defined on the cloud and the cloud virtual update as \tikztextcircle{2} the worker update on the edge and the edge virtual update. The only difference is the data that \tikztextcircle{1} and \tikztextcircle{2} can access, where \tikztextcircle{1} has the data divergence $\delta$ and \tikztextcircle{2} has $\delta_\ell$. 
Therefore, for any cloud interval $\{p\}, \forall t \in [(p-1)\tau\pi,p\tau\pi], \forall \ell \in L$, we have
\begin{align}\label{ieq:xt_c-xkt_c}
    \left\Vert\sum_{\ell=1}^L\frac{D_\ell}{D}\boldsymbol{x}_{\{p\},\ell}^t-\boldsymbol{x}_{\{p\}}^t \right\Vert \leq h(t-(p-1)\tau\pi,\delta).
\end{align}
At the end of cloud interval ${\{p\}}$, when $t=p\tau\pi$, we have
\begin{align}\label{ieq:theorem3_1}
    \left\Vert\sum_{\ell=1}^L\frac{D_\ell}{D}\boldsymbol{x}_{\{p\},\ell}^{p\tau\pi}-\boldsymbol{x}_{\{p\}}^{p\tau\pi}\right\Vert \leq h(\tau\pi,\delta).
\end{align}
%Please note that the edge virtual update $\boldsymbol{x}_{[\cdot],\ell}^t$ and edge virtual update on the cloud $\boldsymbol{x}_{\{\cdot\},\ell}^t$ are different. For edge virtual update, there is an edge momentum update and value synchronization between two consecutive edge intervals (See red dashed line in Fig.~\ref{fig:xt}.), while for edge virtual update on the cloud, it does not have this. Within one cloud interval, the distance between them is the number of edge momentum update and value synchronization (number of red dashed line).
Based on the definition of $\boldsymbol{x}_{[p\pi]}^{p\tau\pi}$ in Theorem~\ref{theorem:x_ppi_ptaupi-x_p_ptaupi} and the definition of $\boldsymbol{x}_{\{p\},\ell}^{p\tau\pi}$, we obtain
\begin{align}\label{ieq:theorem3_2}
&\left\Vert\boldsymbol{x}_{[p\pi]}^{p\tau\pi}-\sum_{\ell=1}^L\frac{D_\ell}{D}\boldsymbol{x}_{\{p\},\ell}^{p\tau\pi}\right\Vert\leq\sum_{\ell=1}^L\frac{D_\ell}{D} \left\Vert\boldsymbol{x}_{[p\pi],\ell}^{p\tau\pi}-\boldsymbol{x}_{\{p\},\ell}^{p\tau\pi}\right\Vert\nonumber\\
&\leq \pi \sum_{\ell=1}^L\frac{D_\ell}{D}(h(\tau,\delta_\ell)+s(\tau)).
\end{align}
Combining \eqref{ieq:theorem3_1} and \eqref{ieq:theorem3_2}, we complete the proof of Theorem~\ref{theorem:x_ppi_ptaupi-x_p_ptaupi}.

\subsection{Proof of Monotone of \texorpdfstring{$h(x)$}{h(x)}}\label{app_monotone} %(Observation \texorpdfstring{\textcircled{1}}{1} in Theorem \ref{theorem:xt-xkt})} \label{appendixC}
%We first introduce following Lemma~\ref{lemma:ABCD} for later use.
%\begin{lemma} \label{lemma:ABCD}
%Given $A,B,E,$ and $F$ according to their definitions, then we have $E(\gamma A)^i+F(\gamma B)^i \geq (1+\eta\beta+\eta\beta\gamma)^i$
%\begin{align*}
%    E(\gamma A)^i+F(\gamma B)^i \geq (1+\eta\beta+\eta\beta\gamma)^i
%\end{align*}
%holds for $i=0,1,2,3,...$
%\end{lemma}
%\begin{proof} [Proof of Lemma \ref{lemma:ABCD}]
%We note that according to the definitions of $A,B,E$ and $F$, we know that $\gamma A >1, 0<\gamma B<1,\frac{1}{\gamma +1}<B<1, C>0, D>0,E>0,$ and $F>0$. We also have $E+F=1$. 

%When $i=0, E(\gamma A)^i+F(\gamma B)^i = (1+\eta\beta+\eta\beta\gamma)^i =1$, so the inequality holds. When $i=1$, we have
%\begin{align*}
%&E(\gamma A)^i+F(\gamma B)^i= \gamma(EA+FB)\\
%=& \gamma\left(\frac{A-1}{A-B}A+\frac{1-B}{A-B}B\right)
%=\gamma(A+B-1)\\
%=&1+\eta\beta+\eta\beta\gamma,
%&\geq 1+\eta\beta
%\end{align*}
%so the inequality still holds. When $i>1$, according to Jensen inequality, and $f(x)=x^i$ is convex, we prove Lemma \ref{lemma:ABCD}, $E(\gamma A)^i+F(\gamma B)^i\geq(\gamma EA+\gamma FB)^i.$
%=(1+\eta\beta+\eta\beta\gamma)^i$
%\begin{align*}
%E(\gamma A)^i+F(\gamma B)^i\geq&(\gamma EA+\gamma FB)^i
%=(1+\eta\beta+\eta\beta\gamma)^i.
%&\geq (1+\eta\beta)^i
%\end{align*}
%To conclude, Lemma \ref{lemma:ABCD} has been proven.
%\end{proof}
%Then we can prove the monotone of $h(x)$.

To prove the monotone increasing of $h(x)$, it is equivalent to prove $h(x)-h(x-1)\geq 0$ for all integer $x\geq 1$. 

When $x=0$ or $x=1$, because $IA+JB=\frac{1+\eta\beta+\eta\beta\gamma}{\eta\beta\gamma}$, we have
$h(0)=\eta\delta(I+J-\frac{1}{\eta\beta})=0$ and $h(1)=\eta\delta\left(\gamma(IA+JB)-\frac{1}{\eta\beta}-\gamma-1\right)=0.$
Then, when $x=1$, we have $h(x)-h(x-1)=0$.
When $x>1$, according to the definitions of $A$, $B$, $U$, and $V$, we can obtain that $\gamma A >1, 0<\gamma B<1,\frac{1}{\gamma +1}<B<1, I>0, J>0,U>0, V>0$, and $U+V=1$. Then, we have 
\begin{align}\label{app:monotone}
U(\gamma A)^i+V(\gamma B)^i \geq (1+\eta\beta+\eta\beta\gamma)^i
\end{align}
holds $\forall i=0,1,\ldots$. This is because: \tikztextcircle{1} When $i=0, U(\gamma A)^i+V(\gamma B)^i = (1+\eta\beta+\eta\beta\gamma)^i =1$, \eqref{app:monotone} holds. \tikztextcircle{2} When $i=1$, we have
$
U(\gamma A)^i+V(\gamma B)^i= \gamma(UA+VB)
= \gamma\left(\frac{A-1}{A-B}A+\frac{1-B}{A-B}B\right)
=\gamma(A+B-1)
=1+\eta\beta+\eta\beta\gamma.
%&\geq 1+\eta\beta
$
\eqref{app:monotone} still holds. \tikztextcircle{3} When $i>1$, according to Jensen inequality, %~\cite{enwiki:jensen}, 
and because any function $n(x)=x^i$ is convex, we have $U(\gamma A)^i+V(\gamma B)^i\geq(\gamma UA+\gamma VB)^i=(1+\eta\beta+\eta\beta\gamma)^i$. \eqref{app:monotone} still holds.

According to \eqref{app:monotone} and the definition of $u(x)$ in Lemma \ref{lemma:w_it-w_kt}, we have $u(x)=U(\gamma A)^x+V(\gamma B)^x -1 \geq (1+\eta\beta+\eta\beta\gamma)^x -1>0$. Then,  we have%taking the gradient divergence $\delta$ into consideration, we have
%\begin{flalign*}
%&h(x)-h(x-1)\\
%=&\eta\delta\left(\frac{C(\gamma A)^{x}(\gamma A+A-1)}{\gamma A(A-1)}+\frac{D(\gamma B)^{x}(\gamma B+B-1)}{\gamma B(B-1)}\right.\\
%&\left.-\frac{\gamma^{x+1}-1}{\gamma-1}\right)\\
%=&\gamma\eta\delta\left(\frac{C(\gamma A)^{x}}{\gamma(A-1)}+\frac{D(\gamma B)^{x}}{\gamma(B-1)}-\frac{\gamma^{x}-1}{\gamma-1}\right)\\
%&+\eta\delta(\gamma^{x-1}(CA^{x-1}+DB^{x-1})-1)\\
%&\rightline{\text{(because \eqref{ieq:wt-wkt_gap} equals \eqref{ieq:wt-wkt_gap1})}}\\
%>&\eta\delta\left(\frac{C(\gamma A)^{x}}{\gamma(A-1)}+\frac{D(\gamma B)^{x}}{\gamma(B-1)}-\frac{\gamma^{x}-1}{\gamma-1}\right)\\
%&\rightline{\text{(because $\frac{\gamma A+A-1}{A}>1$ and $0<\frac{\gamma B+B-1}{B}<1$)}}\\
%=&\gamma\eta\delta(\gamma^{x-1}p(0)+\cdots+\gamma p(x-2)+p(x-1))\\
%&+\eta\delta p(x-1)\\
%&\rightline{\text{(because \eqref{ieq:vt-vkt} equals \eqref{ieq:vt-vkt1}, $x=t-(k-1)\tau$, and \eqref{eq:pt=})}}\\
%>& 0.
%\end{flalign*}
{\small
\begin{flalign*}
&h(x)-h(x-1)
=\eta\delta\left(\frac{U(\gamma A)^{x}(\gamma A+A-1)}{\gamma A(A-1)}\right.\\&\left.+\frac{V(\gamma B)^{x}(\gamma B+B-1)}{\gamma B(B-1)}-\frac{\gamma^{x+1}-1}{\gamma-1}\right)\\
\overset{(a)}{=}&\gamma\eta\delta\left(\frac{U(\gamma A)^{x}}{\gamma(A-1)}+\frac{V(\gamma B)^{x}}{\gamma(B-1)}-\frac{\gamma^{x}-1}{\gamma-1}\right)\\&+\eta\delta(\gamma^{x-1}(UA^{x-1}+VB^{x-1})-1)\\
%&\rightline{\text{(because \eqref{ieq:wt-wkt_gap} equals \eqref{ieq:wt-wkt_gap1})}}\\
%>&\eta\delta\left(\frac{C(\gamma A)^{x}}{\gamma(A-1)}+\frac{D(\gamma B)^{x}}{\gamma(B-1)}-\frac{\gamma^{x}-1}{\gamma-1}\right)\\
%&\rightline{\text{(because $\frac{\gamma A+A-1}{A}>1$ and $0<\frac{\gamma B+B-1}{B}<1$)}}\\
\overset{(b)}{=}&\gamma\eta\delta\sum_{b=1}^{x}\gamma^{x-b}u(b-1)+\eta\delta u(x-1)>0,\\
%&\rightline{\text{(because \eqref{ieq:vt-vkt} equals \eqref{ieq:vt-vkt1}, $x=t-(k-1)\tau$, and \eqref{eq:pt=})}}
%>& 0.
\end{flalign*}
}

\noindent where (a) is because \eqref{ieq:wt-wkt_gap} equals \eqref{ieq:wt-wkt_gap1}; (b) is because \eqref{ieq:vt-vkt} equals \eqref{ieq:vt-vkt1}, $x=t-(k-1)\tau$, and the definition of $u(x)$. To conclude, we have proven that $h(0)=h(1)=0$ and $h(x)$ increases with $x$ when $x\geq1$. %Taking the gradient divergence $\delta_\ell$ into consideration, we can directly prove the monotone of $h(x,\delta_\ell)$.

%Please note that the monotone of $h(x,\delta_\ell)$ is also satisfied. The only difference is to replace $\delta$ with $\delta_\ell$.

\subsection{Proof of Theorem \texorpdfstring{\ref{theorem:Fwt-Fw*}}{4}} \label{app_theorem4}

For convenience, we define
%\begin{align*}
$c_{\{p\}}(t) \triangleq F(\boldsymbol{x}_{\{p\}}^t) - F(\boldsymbol{x}^*)$
%\end{align*}
for a given cloud interval $\{p\}$, where $t \in [(p-1)\tau\pi,p\tau\pi]$.
We also define the following constants in this subsection.
\begin{align}
&\omega \triangleq \min _{p\in [1,P], t \in\{p\}} \frac{1}{\Vert\boldsymbol{x}_{\{p\}}^t-\boldsymbol{x}^{*}\Vert^{2}},\nonumber\\
&\sigma \triangleq \min_{p\in [1,P], {t_1,t_2}\in\{p\}} \frac{\Vert\nabla F(\boldsymbol{x}_{\{p\}}^{t_1})\Vert}{\Vert\nabla F(\boldsymbol{x}_{\{p\}}^{t_2})\Vert},\\
&\alpha \triangleq \eta(\gamma+1)\left(1-\frac{\beta\eta(\gamma+1)}{2}\right)-\frac{\beta\eta^2\gamma^2 \mu^2}{2}\nonumber\\ &\quad-\eta\gamma\mu(1-\beta\eta(\gamma+1)).
\end{align}

According to the convergence lower bound of any gradient descent methods given in \cite[Theorem 3.14]{bubeck2014convex}, we always have $c_{\{p\}}(t)>0$
%\begin{equation} \label{ieq:ckt>0}
%    c_{\{p\}}(t)>0
%\end{equation}
for any $t$ and $p$. 
%\textcolor{red}{Need to add 2 references here, (A mothod of solving a convex... Nesterov83, Theorem 1) and (ilya sutskever Phd thesis, section 7.2)}
%These two show that NAG converges to the optimum point.
Then we derive the upper bound of $c_{\{p\}}(t+1)-c_{\{p\}}(t)$, where $t \in [(p-1)\tau\pi,p\tau\pi-1]$. Because $F(\cdot)$ is $\beta$-smooth, according to \cite[Lemma 3.4]{bubeck2014convex}, we have
%\begin{align*}
%F(\mathbf{x})-F(\mathbf{y}) \leq \langle \nabla F(\mathbf{y}), \mathbf{x}-\mathbf{y}\rangle+\frac{\beta}{2}\|\mathbf{x}-\mathbf{y}\|^{2}
%\end{align*}
%for arbitrary $\mathbf{x}$ and $\mathbf{y}$. Thus,
{\small
\begin{flalign} \label{eq:ck(t+1)-ckt}
&c_{\{p\}}(t+1)-c_{\{p\}}(t)
=F(\boldsymbol{x}_{\{p\}}^{t+1})-F(\boldsymbol{x}_{\{p\}}^t)\nonumber\\
\leq& \langle\nabla F(\boldsymbol{x}_{\{p\}}^t), \boldsymbol{x}_{\{p\}}^{t+1}-\boldsymbol{x}_{\{p\}}^t\rangle+\frac{\beta}{2}\|\boldsymbol{x}_{\{p\}}^{t+1}-\boldsymbol{x}_{\{p\}}^t\|^{2}&\nonumber \\
=&\gamma\langle\nabla F(\boldsymbol{x}_{\{p\}}^t), \boldsymbol{v}_{\{p\}}^{t+1}\rangle-\eta\|\nabla F(\boldsymbol{x}_{\{p\}}^t)\|^2\nonumber\\&+\frac{\beta}{2}\|\gamma\boldsymbol{v}_{\{p\}}^{t+1}-\eta \nabla F(\boldsymbol{x}_{\{p\}}^t)\|^2&\nonumber\\
\overset{(a)}{=}&-\eta(\gamma+1)\left(1-\frac{\beta\eta(\gamma+1)}{2}\right)\|\nabla F(\boldsymbol{x}_{\{p\}}^t)\|^2\nonumber\\
&+\frac{\beta\gamma^4}{2}\|\boldsymbol{v}_{\{p\}}^t\|^2
+\gamma^2\left(1-\beta\eta(\gamma+1)
\right)\langle\nabla F(\boldsymbol{x}_{\{p\}}^t), \boldsymbol{v}_{\{p\}}^t\rangle&\nonumber\\
%&\rightline{\text{(replacing $\boldsymbol{v}_{\{p\}}^{t+1}$ with \eqref{eq:vkt} and rearrange)}}&\nonumber\\
\overset{(b)}{\leq}& \left(-\eta(\gamma+1)\left(1-\frac{\beta\eta(\gamma+1)}{2}\right)+\frac{\beta\eta^2\gamma^2 \mu^2}{2}\right.\nonumber\\
&\left.+\eta\gamma\mu(1-\beta\eta(\gamma+1))\right)\|\nabla F(\boldsymbol{x}_{\{p\}}^t)\|^{2},&
%=&\left(-\eta(\gamma+1)\left(1-\frac{\beta\eta(\gamma+1)}{2}\right)+\frac{\beta\eta^2\gamma^2 p^2}{2}\right.&\nonumber\\
%&\left.-\gamma^2\eta q(1-\beta\eta(\gamma+1))\cos\theta\right)\left\|\nabla F\left(\boldsymbol{x}_{\{p\}}^t\right)\right\|^{2},&
\end{flalign}
}

\noindent where (a) is replacing $\boldsymbol{v}_{\{p\}}^{t+1}$ by \eqref{eq:xpt_app} and rearranging the formula; (b) is because $\|\gamma \boldsymbol{v}_{\{p\}}^t\| \leq \mu \|\eta \nabla F(\boldsymbol{x}_{\{p\}}^t)\|$ with the definition of $\mu$. According to Cauchy-Schwarz inequality, we can obtain %the third term in \eqref{eq:ck(t+1)-ckt},
$
\langle\nabla F(\boldsymbol{x}_{\{p\}}^t), \boldsymbol{v}_{\{p\}}^t\rangle\leq\|\nabla F(\boldsymbol{x}_{\{p\}}^t)\|  \|\boldsymbol{v}_{\{p\}}^t\| \leq\frac{\mu\eta}{\gamma}\|\nabla F(\boldsymbol{x}_{\{p\}}^t)\|^2. 
$ According to the definition of $\alpha$, and Condition (2.1) of Theorem \ref{theorem:Fwt-Fw*} with $h(\tau,\delta_\ell)\geq0$ and $h(\tau\pi,\delta)\geq0$ which are proved in Appendix~\ref{app_monotone}, we have $\alpha>0$. Then from \eqref{eq:ck(t+1)-ckt}, we have
\begin{align} 
\label{ieq:ckt+1<ckt}
c_{\{p\}}(t+1)\leq c_{\{p\}}(t) -\alpha\|\nabla F(\boldsymbol{x}_{\{p\}}^t)\|^{2}.
\end{align}

Because $F(\cdot)$ is $\rho$-Lipschitz, and according to \cite[Lecture 2, Lemma 1]{IFT6085}, there exists a point $\boldsymbol{x}_{\{p\}}^{t_2}$ such that $F(\boldsymbol{x}_{\{p\}}^t)-F(\boldsymbol{x}^{*})
= \langle\nabla F(\boldsymbol{x}_{\{p\}}^{t_2}), \boldsymbol{x}_{\{p\}}^t-\boldsymbol{x}^{*}\rangle$. Hence, by Cauchy-Schwarz inequality, we have $
c_{\{p\}}(t) =F(\boldsymbol{x}_{\{p\}}^t)-F(\boldsymbol{x}^{*})
%= \langle\nabla F(\boldsymbol{x}_{\{p\}}^{t_2}), \boldsymbol{x}_{\{p\}}^t-\boldsymbol{x}^{*}\rangle
\leq\|\nabla F(\boldsymbol{x}_{\{p\}}^{t_2})\|\|\boldsymbol{x}_{\{p\}}^t-\boldsymbol{x}^{*}\|.$ Based on the definition of $\sigma$, and replacing $t$ with $t_1$, we have $\Vert\nabla F(\boldsymbol{x}_{\{p\}}^{t})\Vert\geq \sigma\Vert\nabla F(\boldsymbol{x}_{\{p\}}^{t_2})\Vert$. Thus, $
\|\nabla F(\boldsymbol{x}_{\{p\}}^t)\| \geq \sigma\Vert\nabla F(\boldsymbol{x}_{\{p\}}^{t_2})\Vert \geq \frac{\sigma c_{\{p\}}(t)}{\|\boldsymbol{x}_{\{p\}}^t-\boldsymbol{x}^{*}\|}. $
Substituting above inequality into \eqref{ieq:ckt+1<ckt}, and noting $\omega\leq\frac{1}{\|\boldsymbol{x}_{\{p\}}^t-\boldsymbol{x}^{*}\|^{2}}$ by the definition of $\omega$, we get $
c_{\{p\}}(t+1)\leq c_{\{p\}}(t) - \frac{\alpha\sigma^2 c_{\{p\}}(t)^2}{\|\boldsymbol{x}_{\{p\}}^t-\boldsymbol{x}^{*}\|^2}
\leq c_{\{p\}}(t) - \omega\alpha\sigma^2 c_{\{p\}}(t)^2. $
Because $\alpha >0$, $c_{\{p\}}(t)>0$, and \eqref{ieq:ckt+1<ckt}, we have $0<c_{\{p\}}(t+1)\leq c_{\{p\}}(t)$. Dividing both sides by $c_{\{p\}}(t+1)c_{\{p\}}(t)$, we get
$
\frac{1}{c_{\{p\}}(t)} \leq \frac{1}{c_{\{p\}}(t+1)}-\omega\alpha\sigma^2\frac{c_{\{p\}}(t)}{c_{\{p\}}(t+1)}.
$
We note that $\frac{c_{\{p\}}(t)}{c_{\{p\}}(t+1)} \geq 1$. Thus,
$
\frac{1}{c_{\{p\}}(t+1)}-\frac{1}{c_{\{p\}}(t)}\geq \omega\alpha\sigma^2\frac{c_{\{p\}}(t)}{c_{\{p\}}(t+1)}\geq \omega\alpha\sigma^2.
$
Summing up the above inequality by $t\in [(p-1)\tau\pi, p\tau\pi-1]$, we have
$
\frac{1}{c_{\{p\}}(p\tau\pi)}-\frac{1}{c_{\{p\}}((p-1)\tau\pi)}
=\sum_{t=(p-1)\tau\pi}^{p\tau\pi-1}\left(\frac{1}{c_{\{p\}}(t+1)}-\frac{1}{c_{\{p\}}(t)}\right)
\geq \sum_{t=(p-1)\tau\pi}^{p\tau\pi-1}\omega\alpha\sigma^2 = \tau\pi\omega\alpha\sigma^2.
$
Then, we sum up the above inequality by $p\in [1,P]$, after rearranging the left-hand side and noting that $T=P\tau\pi$, we can get
{\small
\begin{flalign}\label{ieq:cKT-c10}
&\sum_{p=1}^{P}\left(\frac{1}{c_{\{p\}}(p\tau\pi)}-\frac{1}{c_{\{p\}}((p-1)\tau\pi)}\right)\nonumber\\
=&\frac{1}{c_{\{p\}}(T)}-\frac{1}{c_{\{1\}}(0)}-\sum_{p=1}^{P-1}\left(\frac{1}{c_{\{p+1\}}(p\tau\pi)}-\frac{1}{c_{\{p\}}(p\tau\pi)}\right)\nonumber\\
\geq& P\tau\pi\omega\alpha\sigma^2 = T\omega\alpha\sigma^2.
\end{flalign}
}

Following \eqref{ieq:cKT-c10}, we note that
\begin{flalign} \label{ieq:frac:c(k+1)-ck}
&\frac{1}{c_{\{p+1\}}(p\tau\pi)}-\frac{1}{c_{\{p\}}(p\tau\pi)}
=\frac{c_{\{p\}}(p\tau\pi)-c_{\{p+1\}}(p\tau\pi)}{c_{\{p\}}(p\tau\pi)c_{\{p+1\}}(p\tau\pi)}\nonumber\\
=&\frac{F(\boldsymbol{x}_{\{p\}}^{p\tau\pi})-F(\boldsymbol{x}_{\{p+1\}}^{p\tau\pi})}{c_{\{p\}}(p\tau\pi)c_{\{p+1\}}(p\tau\pi)}
=\frac{F(\boldsymbol{x}_{\{p\}}^{p\tau\pi})-F(\boldsymbol{x}_{}^{p\tau\pi})}{c_{\{p\}}(p\tau\pi)c_{\{p+1\}}(p\tau\pi)}\nonumber\\
=&\frac{F(\boldsymbol{x}_{\{p\}}^{p\tau\pi})-F(\boldsymbol{x}_{[p\pi]}^{p\tau\pi})+\left(F(\boldsymbol{x}_{[p\pi]}^{p\tau\pi})-F(\boldsymbol{x}_{}^{p\tau\pi})\right)}{c_{\{p\}}(p\tau\pi)c_{\{p+1\}}(p\tau\pi)}\nonumber\\
\overset{(a)}{\geq}& \frac{-\rho\sum_{\ell=1}^{L}\frac{D_\ell}{D}\left( h(\tau,\delta_\ell)+s(\tau)\right)}{c_{\{p\}}(p\tau\pi)c_{\{p+1\}}(p\tau\pi)}\nonumber\\
&\overset{(b)}{+}\frac{-\rho\left(h(\tau\pi,\delta)+\pi\sum_{\ell=1}^L\frac{D_\ell}{D}\left(h(\tau,\delta_\ell)+s(\tau)\right)\right)}{c_{\{p\}}(p\tau\pi)c_{\{p+1\}}(p\tau\pi)}\nonumber\\
=&\frac{-\rho j(\tau,\pi,\delta_\ell,\delta)}{c_{\{p\}}(p\tau\pi)c_{\{p+1\}}(p\tau\pi)},
\end{flalign}
where (a) is because of combining Theorem~\ref{theorem:xt-xkt} and Theorem~\ref{theorem:xgktau-xktau}; (b) is because of Theorem~\ref{theorem:x_ppi_ptaupi-x_p_ptaupi}.

%The last inequality is because $\boldsymbol{x}_{\{p+1\}}^{p\tau\pi}=\boldsymbol{x}_g^{p\tau\pi}$ in \eqref{eq:xk(k-1)tau}, Theorem~\ref{theorem:xt-xkt}, and Theorem~\ref{theorem:Fxgktau-Fxktau}.

From \eqref{ieq:ckt+1<ckt},  we can get $F(\boldsymbol{x}_{\{p\}}^t)\geq F(\boldsymbol{x}_{\{p\}}^{t+1})$ for any $t\in [(p-1)\tau\pi,p\tau\pi)$. Recalling Condition (2.2) in Theorem \ref{theorem:Fwt-Fw*}, where $F(\boldsymbol{x}_{\{p\}}(p \tau\pi))-F\left(\boldsymbol{x}^{*}\right) \geq \varepsilon$ for all $p$, we can obtain $c_{\{p\}}(t)= F(\boldsymbol{x}_{\{p\}}^t) - F(\boldsymbol{x}^*) \geq \varepsilon$ for all $t\in [(p-1)\tau\pi,p\tau\pi]$ and $p$. Thus,
$
c_{\{p\}}(p\tau\pi)c_{\{p+1\}}(p\tau\pi)\geq\varepsilon^2.
$
According to Appendix~\ref{app_monotone}, we have $h(\tau,\delta_\ell)\geq0$ and $h(\tau\pi,\delta)\geq0$. Then substituting above inequalities into \eqref{ieq:frac:c(k+1)-ck}, we obtain
$
\frac{1}{c_{\{p+1\}}(p\tau\pi)}-\frac{1}{c_{\{p\}}(p\tau\pi)} \geq\frac{-\rho j(\tau,\pi,\delta_\ell,\delta)}{\varepsilon^2}.
$ Substituting the above inequality into \eqref{ieq:cKT-c10} and rearrange, we get
\begin{align}\label{ieq:frac:cKT-c10}
\frac{1}{c_{\{p\}}(T)}-\frac{1}{c_{\{1\}}(0)}\geq T\omega\alpha\sigma^2-(P-1)\frac{\rho j(\tau,\pi,\delta_\ell,\delta)}{\varepsilon^2}.
\end{align}
Recalling Condition (2.3) in Theorem \ref{theorem:Fwt-Fw*}, where $F(\boldsymbol{x}_{}^T)-F(\boldsymbol{x}^{*}) \geq \varepsilon$, and noting that $c_{\{p\}}(T)\geq\varepsilon$, we get
$
(F(\boldsymbol{x}_{}^T)-F(\boldsymbol{x}^{*}))c_{\{p\}}(T)\geq \varepsilon^2
.$
Thus,
{\small
\begin{flalign} \label{ieq:frac:FwT-cKT}
&\frac{1}{F(\boldsymbol{x}_{}^T)-F\left(\boldsymbol{x}^{*}\right)}-\frac{1}{c_{\{p\}}(T)}
=\frac{c_{\{p\}}(T)-(F(\boldsymbol{x}_{}^T)-F\left(\boldsymbol{x}^{*}\right))}{(F(\boldsymbol{x}_{}^T)-F\left(\boldsymbol{x}^{*}\right))c_{\{p\}}(T)}\nonumber\\
&=\frac{F(\boldsymbol{x}_{\{p\}}^T)-F(\boldsymbol{x}_{}^T)}{(F(\boldsymbol{x}_{}^T)-F\left(\boldsymbol{x}^{*}\right))c_{\{p\}}(T)}\nonumber\\
%=&\frac{F(\boldsymbol{x}_{\{p\}}^T)-F(\boldsymbol{x}^T)+\left(F(\boldsymbol{x}^T)-F(\boldsymbol{x}_{(a)}^T)\right)}{(F(\boldsymbol{x}_{(a)}^T)-F\left(\boldsymbol{x}^{*}\right))c_{\{p\}}(T)}\nonumber\\
&\geq \frac{-\rho j(\tau,\pi,\delta_\ell,\delta)}{(F(\boldsymbol{x}_{}^T)-F\left(\boldsymbol{x}^{*}\right))c_{\{p\}}(T)}
\geq -\frac{\rho j(\tau,\pi,\delta_\ell,\delta)}{\varepsilon^2},
\end{flalign}
}

\noindent where the first inequality follows the same method to prove \eqref{ieq:frac:c(k+1)-ck}.
%where the first inequality is because \eqref{ieq:Fxt-Fxkt} in Theorem \ref{theorem:xt-xkt} when $t=p\tau\pi$ in interval $\{p\}$, %\eqref{ieq:fxgktau-fxktau}, and Theorem~\ref{theorem:Fxgktau-Fxktau}.

Combining \eqref{ieq:frac:cKT-c10} with \eqref{ieq:frac:FwT-cKT}, we get
$
\frac{1}{F(\boldsymbol{x}_{}^T)-F\left(\boldsymbol{x}^{*}\right)}-\frac{1}{c_{\{1\}}(0)}\geq T\omega\alpha\sigma^2-P\frac{\rho j(\tau,\pi,\delta_\ell,\delta)}{\varepsilon^2}
=T\omega\alpha\sigma^2-T\frac{\rho j(\tau,\pi,\delta_\ell,\delta)}{\tau\pi\varepsilon^2}
=T\left(\omega\alpha\sigma^2-\frac{\rho j(\tau,\pi,\delta_\ell,\delta)}{\tau\pi\varepsilon^2}\right).
$
Noting that $c_{\{1\}}(0)=F(\boldsymbol{x}_{\{1\}}^0)-F(\boldsymbol{x}^*)>0$, the above inequality can be expressed as
$
\frac{1}{F(\boldsymbol{x}_{}^T)-F\left(\boldsymbol{x}^{*}\right)}\geq T\left(\omega\alpha\sigma^2-\frac{\rho  j(\tau,\pi,\delta_\ell,\delta)}{\tau\pi\varepsilon^2}\right).
$
Recalling Condition (2.1) in Theorem \ref{theorem:Fwt-Fw*}, where $\omega\alpha\sigma^2-\frac{\rho j(\tau,\pi,\delta_\ell,\delta)}{\tau\pi \varepsilon^{2}}>0$, we obtain that the right-hand side of above inequality is greater than zero. Therefore, taking the reciprocal of the above inequality, we finally complete the proof of Theorem \ref{theorem:Fwt-Fw*}.
%$
%F(\boldsymbol{x}_g^T)-F\left(\boldsymbol{x}^{*}\right)\leq \frac{1}{T\left(\omega\alpha\sigma^2-\frac{\rho  \textbf{h(\tau)+s(\tau)}}{\tau\varepsilon^2}\right)}.
%$

\subsection{Proof of Theorem \texorpdfstring{\ref{theorem:Fwf-Fw*}}{5}} \label{app_theorem5}

% Specifically, we use proof by contradiction to find the maximum $\varepsilon = \varepsilon_0$ so that obtain the upper bound between $F(\boldsymbol{x}(T))$ and $F\left(\boldsymbol{x}^{*}\right)$.

At the beginning, we see that Condition (1) in Theorem \ref{theorem:Fwt-Fw*} holds due to the Condition in Theorem \ref{theorem:Fwf-Fw*} ($0<\beta\eta(\gamma+1)\leq 1$, $0<\gamma< 1$, $0<\gamma_a< 1$, and $\forall\tau, \pi \in\{ 1,2,\ldots\}$).

\subsubsection{$\rho j(\tau,\pi)=0$} In this case, there is an arbitrarily small $\varepsilon>0$ that let Conditions (2.1)--(2.3) in Theorem \ref{theorem:Fwt-Fw*} hold. In this case, Theorem \ref{theorem:Fwt-Fw*}  holds. We also note that the right-hand side of \eqref{ieq:Fwf-Fw*} is equivalent to the right-hand side of \eqref{ieq:FwT-Fw*} when $\rho j(\tau,\pi)=0$. According to the definition of $\boldsymbol{x}^{\mathrm{f}}$ in \eqref{eq:wf}, we have
$
F\left(\boldsymbol{x}^{\mathrm{f}}\right)-F\left(\boldsymbol{x}^{*}\right) \leq F(\boldsymbol{x}_{}^T)-F\left(\boldsymbol{x}^{*}\right) \leq \frac{1}{T\omega\alpha\sigma^2},
$
which satisfies the result in Theorem \ref{theorem:Fwt-Fw*} directly. Thus, Theorem \ref{theorem:Fwf-Fw*} holds when $\rho j(\tau,\pi)=0$.

\subsubsection{$\rho j(\tau,\pi)>0$} In this case, we aim to find an $\varepsilon$ satisfying Condition (2.1), but Conditions (2.2) and (2.3) cannot be satisfied together so that $F(\boldsymbol{x}^{\mathrm{f}})-F(\boldsymbol{x}^*)$ can be bounded. We first define an $\varepsilon_0$, then we claim that any  $\varepsilon>\varepsilon_0$ is what we want to find.

We set $\varepsilon_0$ as the root of the following equation, 
\begin{align} \label{eq:varepsilon_0}
\varepsilon_0= \frac{1}{T\left(\omega\alpha\sigma^2-\frac{\rho j(\tau,\pi)}{\tau\pi\varepsilon_0^2}\right)}.
\end{align}
The positive root is
\begin{align}\label{eq:varepsilon_0_cont.}
\varepsilon_{0}=\frac{1}{2 T \omega \alpha\sigma^2}+\sqrt{\frac{1}{4 T^{2} \omega^{2} \alpha^{2} \sigma^4}+\frac{\rho j(\tau,\pi)}{\omega \alpha \sigma^2\tau\pi}}.
\end{align}
Through this way, since $\omega\alpha\sigma^2-\frac{\rho j(\tau,\pi)}{\tau\pi\varepsilon^2}$ increases with $\varepsilon$, $\varepsilon > \varepsilon_0$ will lead to Condition (2.1).

%Because of $\varepsilon>\varepsilon >0$ in Theorem \ref{theorem:Fwt-Fw*}. Considering above two equations for $\varepsilon_0$, we get $\varepsilon_0 > 0$ and the denominator in \eqref{eq:varepsilon_0} is greater than zero. 

Next, using the proof by contradiction, we can prove that when $\varepsilon>\varepsilon_0$, there does not exist $\varepsilon>\varepsilon_0$ that satisfies both Conditions (2.2) and (2.3) in Theorem \ref{theorem:Fwt-Fw*} at the same time.

We assume that there exists such $\varepsilon > \varepsilon_0$, so that Conditions (2.1)--(2.3) hold and thus
Theorem \ref{theorem:Fwt-Fw*} holds. Then we have
$
F(\boldsymbol{x}_{}^T)-F\left(\boldsymbol{x}^{*}\right) \leq \frac{1}{T\left(\omega \alpha\sigma^2-\frac{\rho j(\tau,\pi)}{\tau\pi \varepsilon^{2}}\right)}
< \frac{1}{T\left(\omega\alpha\sigma^2-\frac{\rho j(\tau,\pi)}{\tau\pi\varepsilon_0^2}\right)} = \varepsilon_0,
$
which contradicts the Condition (2.3) in Theorem \ref{theorem:Fwt-Fw*}. 

Therefore, for any $\varepsilon>\varepsilon_0$, one of the following (A) or (B) holds. 
 (A) $\exists p\in [1,P]$ such that $F(\boldsymbol{x}_{\{p\}}^{p \tau\pi})-F\left(\boldsymbol{x}^{*}\right) \leq \varepsilon_0$ or (B) $F(\boldsymbol{x}_{}^T)-F\left(\boldsymbol{x}^{*}\right) \leq \varepsilon_0$.
(A) or (B) gives 
\begin{align}\label{ieq:min}
\min \left\{\min _{p\in [1,P]} F(\boldsymbol{x}_{\{p\}}^{p \tau\pi}) ; F(\boldsymbol{x}^T)\right\}-F\left(\boldsymbol{x}^{*}\right) \leq \varepsilon_{0}.
\end{align}

According to \eqref{ieq:frac:FwT-cKT}, when $t=p\tau\pi$, we have $F(\boldsymbol{x}_{}^{p\tau\pi}) \leq F(\boldsymbol{x}_{\{p\}}^{p\tau\pi}) + \rho j(\tau,\pi)$ for any cloud interval $\{p\}$. % Specially, when $k=K$, we have $F(\boldsymbol{x}(T)) \leq F(\boldsymbol{x}_{[K]}(K\tau)) + \rho h(\tau)$.
Combining it with \eqref{ieq:min}, we have
$
\min _{p\in [1,P]} F(\boldsymbol{x}_{}^{p \tau\pi})-F\left(\boldsymbol{x}^{*}\right) \leq \varepsilon_{0}+\rho j(\tau,\pi).
$
Recalling the definition of $\boldsymbol{x}^{\mathrm{f}}$ in \eqref{eq:wf}, $T=P\tau\pi$, and combining $\boldsymbol{x}^{\mathrm{f}}$ with above inequality, we get
$
F\left(\boldsymbol{x}^{\mathrm{f}}\right)-F\left(\boldsymbol{x}^{*}\right) \leq \varepsilon_{0}+\rho j(\tau,\pi).
$
Substituting \eqref{eq:varepsilon_0_cont.} into above inequality, we finally get the result in \eqref{ieq:Fwf-Fw*}, which completes the proof of Theorem \ref{theorem:Fwf-Fw*}.

\subsection{Proof of Theorem \texorpdfstring{\ref{theorem:fast}}{6}} \label{app_theorem6}

{
When $\eta \to 0^+$, we have $\gamma A \simeq 1$, $\gamma B \simeq \gamma$, and $J \simeq \frac{\gamma^2}{(1-\gamma)^2}$. Therefore,
{\small
\begin{flalign*}
&\lim_{\eta\to0^+}{h(\tau, \delta_\ell)}&\\
=&\lim_{\eta\to0^+}{\eta \delta_\ell\left[I(\gamma A)^{\tau}+J(\gamma B)^{\tau}-\frac{1}{\eta \beta}-\frac{\gamma^2(\gamma^{\tau}-1)-(\gamma-1) \tau}{(\gamma-1)^{2}}\right]}&\\
=&\lim_{\eta\to0^+}{\eta\delta_\ell\left(I-\frac{1}{\eta\beta}\right)}&\\
=&\lim_{\eta\to0^+}{\eta\delta_\ell\left(\frac{1}{(1-\gamma)(\gamma A-1)}-\frac{1}{\eta\beta}\right)}&\\
=&\frac{\delta_\ell}{1-\gamma}\lim_{\eta\to0^+}{\frac{\eta}{\gamma A-1}}-\frac{\delta_\ell}{\beta}&\\
=&\frac{\delta_\ell}{1-\gamma}\lim_{\eta\to0^+}{\frac{1}{(\gamma A-1)^{'}}}-\frac{\delta_\ell}{\beta}&\\
=&\frac{\delta_\ell}{1-\gamma}\frac{1-\gamma}{\beta}-\frac{\delta_\ell}{\beta}=0
\end{flalign*}
}

\noindent where the second last line is because of the L'H\^{o}pital's rule. Then, we can  derive $s(\cdot)\simeq0$. Afterwards, we have $j(\cdot)\simeq0$ and $\hat{{j}}(\cdot)\simeq0$. Therefore, $f_{HierMo}(T) \simeq \frac{1}{T\omega\alpha\sigma^2}$ and $f_{HierFAVG}(T) \simeq \frac{1}{T\omega\hat{\alpha}\sigma^2}$. Based on the conditions in Theorem \ref{theorem:fast}, we have $\alpha>\hat{\alpha}$. Therefore, we have $f_{HierFAVG}(T) - f_{HierMo}(T) >0$, which completes the proof of Theorem \ref{theorem:fast}.

}

\bibliographystyle{IEEEtran}
\bibliography{main}

%\newpage

%\section{Biography Section}
%If you have an EPS/PDF photo (graphicx package needed), extra braces are
% needed around the contents of the optional argument to biography to prevent
% the LaTeX parser from getting confused when it sees the complicated
% $\backslash${\tt{includegraphics}} command within an optional argument. (You can create
% your own custom macro containing the $\backslash${\tt{includegraphics}} command to make things
% simpler here.)
 
%\vspace{11pt}

%\bf{If you include a photo:}\vspace{-33pt}
%\begin{IEEEbiography}[{\includegraphics[width=1in,height=1.25in,clip,keepaspectratio]{}}]{Michael Shell}
%Use $\backslash${\tt{begin\{IEEEbiography\}}} and then for the 1st argument use $\backslash${\tt{includegraphics}} to declare and link the author photo.
%Use the author name as the 3rd argument followed by the biography text.
%\end{IEEEbiography}

%\vspace{11pt}

%\bf{If you will not include a photo:}\vspace{-33pt}
%\begin{IEEEbiographynophoto}{John Doe}
%Use $\backslash${\tt{begin\{IEEEbiographynophoto\}}} and the author name as the argument followed by the biography text.
%\end{IEEEbiographynophoto}

%\vfill

\end{document}